\theoremstyle{plain}
\newtheorem{theorem}{Theorem}[section]
\newtheorem{lemma}[theorem]{Lemma}
\theoremstyle{definition}
\newtheorem{assumption}[theorem]{Assumption}
\theoremstyle{remark}
\newcommand{\bfA}{\mathbf{A}}
\newcommand{\bfalpha}{\boldsymbol{\alpha}}
\newcommand{\bfB}{\mathbf{B}}
\newcommand{\bfbeta}{\boldsymbol{\beta}}
\newcommand{\bfe}{\mathbf{e}}
\newcommand{\bfG}{\mathbf{G}}
\newcommand{\bfI}{\mathbf{I}}
\newcommand{\bfM}{\mathbf{M}}
\newcommand{\bfP}{\mathbf{P}}
\newcommand{\bfQ}{\mathbf{Q}}
\newcommand{\bfS}{\mathbf{S}}
\newcommand{\bfu}{\mathbf{u}}
\newcommand{\bfU}{\mathbf{U}}
\newcommand{\bfv}{\mathbf{v}}
\newcommand{\bfV}{\mathbf{V}}
\newcommand{\bfW}{\mathbf{W}}
\newcommand{\bfX}{\mathbf{X}}
\newcommand{\bfY}{\mathbf{Y}}
\newcommand{\bfSigma}{\mathbf{\Sigma}}
\newcommand{\bflambda}{\boldsymbol{\lambda}}
\newcommand{\bfphi}{\boldsymbol{\phi}}
\newcommand{\bfpsi}{\boldsymbol{\psi}}
\newcommand{\orth}{\mathrm{O}}
\newcommand{\real}{\mathbb{R}}
\newcommand{\expect}{\mathbb{E}}
\DeclareMathOperator{\rank}{rank}
\DeclareMathOperator{\tr}{tr}
\DeclareMathOperator{\diag}{diag}
\DeclareMathOperator{\Col}{Col}
\DeclareMathOperator{\Row}{Row}
\DeclareMathOperator{\Null}{Null}
\DeclareMathOperator{\Const}{Const.}
\icmltitlerunning{ScaLoRA: Optimally Scaled Low-Rank Adaptation for Efficient High-Rank Fine-Tuning}
\begin{document}

\twocolumn[
  \icmltitle{ScaLoRA: Optimally Scaled Low-Rank Adaptation\\ for Efficient High-Rank Fine-Tuning}




  \begin{icmlauthorlist}
    \icmlauthor{Yilang Zhang}{ms}
    \icmlauthor{Xiaodong Yang}{visa}
    \icmlauthor{Yiwei Cai}{visa}
    \icmlauthor{Georgios B. Giannakis}{umn}
  \end{icmlauthorlist}

  \icmlaffiliation{ms}{ML Research, Morgan Stanley, New York, NY 10019, USA}
  \icmlaffiliation{visa}{Visa Research, Austin, TX 78759, USA}
  \icmlaffiliation{umn}{Department of ECE, University of Minnesota, MN 55455, USA}

  \icmlcorrespondingauthor{Yilang Zhang}{yilang.zhang@morganstanley.com}

  \icmlkeywords{Low-rank adaptation, parameter-efficient fine-tuning}

  \vskip 0.3in
]



\printAffiliationsAndNotice{Work completed during an internship at Visa Research and PhD studies at University of Minnesota.}  

\begin{abstract}
As large language models (LLMs) continue to scale in size, the computational overhead has become a major bottleneck for task-specific fine-tuning. While low-rank adaptation (LoRA) effectively curtails this cost by confining the weight updates to a low-dimensional subspace, such a restriction can hinder effectiveness and slow convergence. This contribution deals with these limitations by accumulating progressively a high-rank weight update from consecutive low-rank increments. Specifically, the per update optimal low-rank matrix is identified to minimize the loss function and closely approximate full fine-tuning. To endow efficient and seamless optimization without restarting, this optimal choice is formed by appropriately scaling the columns of the original low-rank matrix. Rigorous performance guarantees reveal that the optimal scaling can be found analytically. Extensive numerical tests with popular LLMs scaling up to 12 billion parameters demonstrate a consistent performance gain and fast convergence relative to state-of-the-art LoRA variants on diverse tasks including natural language understanding, commonsense reasoning, and mathematical problem solving. 
\end{abstract}

\section{Introduction}
Large language models (LLMs) enjoy well-documented success in a broad spectrum of areas including conversational agents~\cite{gpt4}, software development~\cite{codex}, text summarization~\cite{text-summary}, and education~\cite{LLM-edu}. 
Before deploying a pre-trained LLM to a certain task, it is often necessary to fine-tune it on domain-specific data to enhance its expertise. With the rapid growth of LLM size in recent years however, conventional full fine-tuning approaches that revise all the model parameters, are increasingly prohibitive due to their substantial computational burden, especially critical for resource-limited applications. For instance, the recent Llama 4 Behemoth model consists of 2 trillion parameters in total, while even its smallest variant Llama 4 Scout contains 109 billion parameters. Even with half precision, full fine-tuning of the latter still necessitates over 1 TB GPU memory, and extended wall-clock time. 

As a lightweight alternative, parameter-efficient fine-tuning (PEFT) has been introduced to lower the computational overhead~\cite{PEFT}. In contrast to full fine-tuning, PEFT methods refine merely a small subset of parameters~\cite{PEFT,sparse-mask,prefix-tuning}, thereby markedly reducing the memory footprint and runtime. Admist these, low-rank adaptation (LoRA)~\cite{LoRA} has gained particular prominence for its simplicity and efficiency. LoRA presumes the fine-tuning weight update pertains to a low-dimensional manifold, and parameterize it as the outer product of two tall matrices. As a result, fine-tuning the large-scale LLM reduces to optimizing these small ``adapter'' matrices. Despite its effectiveness and popularity, recent studies have underscored that LoRA and its variants face challenges such as diminishing performance~\cite{LoRA}, and slower convergence~\cite{PiSSA} relative to full fine-tuning, which deteriorate further as the rank declines~\cite{MoRA,HiRA}. Consequently, one has to compromise notable model effectiveness to tradeoff the highly desired efficiency. 

To overcome these challenges, this work commits to formulate a high-rank weight update by stacking the per-step low-rank increments. As opposed to vanilla LoRA operating in a fixed low-rank subspace, our key idea is to \emph{dynamically identify the optimal low-rank adapters to update, that minimize the loss per iteration}. To ensure efficient optimization, this optimal choice is restricted to the family of matrices whose columns are scaled from the original low-rank adapters. The advocated approach is thus termed scaled low-rank adaptation (ScaLoRA). This column-wise scaling allows for efficient re-calculation of moment estimators in adaptive optimizers such as Adam(W), eliminating the need to reset optimizer and re-warm up learning rate. All in all, our contribution is three-fold:
\begin{itemize}[leftmargin=0.5cm,topsep=0cm, itemsep=0.25\parskip, parsep=0.25\parskip]
    \item We prove a sufficient and necessary condition for the optimal low-rank adapters. This condition establishes that the optimal choice requires truncated singular value decomposition (SVD) of the gradient, which leads to prohibitive overhead and restarting optimization. 
    \item To cope with these two issues, we restrict the new adapters to certain transforms of the original ones. With column-wise scaling as the transform, tractable moment estimators and globally optimal adapters are provably identified in analytical form. 
    \item Numerical tests are performed with DeBERTaV3-base, LLaMA-2-7B, LLaMA-3-8B, and Gemma-3-12B-pt on GLUE benchmark, commonsense reasoning datasets, and mathematical problems (MetaMathQA, GSM8K, and MATH), verifying our analytical claims and confirming ScaLoRA's superior performance as well as accelerated convergence. 
\end{itemize}

\textbf{Related work.} Following LoRA~\cite{LoRA}, plenty of variants have been probed to further enhance its effectiveness. For instance, DoRA~\cite{DoRA} decomposes the weight matrix into magnitude and direction components, where only the latter is updated via LoRA. QLoRA~\cite{QLoRA} quantizes the pre-trained weights to further reduce computational cost. FourierFT~\cite{FourierFT} substitutes the low-rank matrices with spectral coefficients and recovers the weight update via inverse discrete Fourier transform. Flora~\cite{Flora} leverages random projections to encode and decode the weight gradients. FedPara~\cite{FedPara} and LoKr~\cite{LoKr} integrate Hadamard and Kronecker products into the low-rank outer product. In addition to structural modifications, methods have been developed to refine the initialization of low-rank adapters~\cite{PiSSA,LoftQ,LoRA-GA}, and adjust the optimization iterations~\cite{LoRA-Pro,LoRA-RITE,RefLoRA}. Another line of research~\cite{ReLoRA,MoRA,HiRA} targets high-rank weight update induced by low-rank adapters. Our ScaLoRA falls in the latter category, and a more detailed comparison will be provided in the ensuing sections.

\section{Low-rank adaptation recap}
This section briefly recaps LoRA~\cite{LoRA}, the challenges it faces, and existing remedies. 

Consider a general weight matrix $\bfW \in \real^{m \times n}$ of a large model. LoRA decomposes $\bfW = \bfW^\mathrm{pt} + \bfW^\mathrm{ft}$, where $\bfW^\mathrm{pt}$ denotes the frozen pre-trained weight matrix, and $\bfW^\mathrm{ft}$ is the learnable fine-tuning update. Aiming at efficiency, LoRA assumes the latter lives on a low-dimensional manifold, and can be approximated via $\bfW^\mathrm{ft} := \bfA\bfB^\top$, where $\bfA \in \real^{m\times r}$ and $\bfB \in \real^{n\times r}$ are ``adapter'' matrices with $r \ll m,n$. For batched inputs $\bfX \in \real^{n \times k}$, LoRA's forward operation satisfies $\bfW \bfX = \bfW^\mathrm{pt} \bfX + \bfA (\bfB^\top \bfX)$. LoRA reduces the number of trainable parameters to $(m+n)r \ll mn$, markedly lowering the associated memory footprint, and the computational burden of backpropagation. 

Letting $\ell(\cdot)$ denote the loss function, LoRA optimizes
\begin{equation*}
    \min_{\bfA, \bfB} \ell(\bfW^\mathrm{pt}+\bfA\bfB^\top).
\end{equation*}
With $t$ indexing iteration, define $\bfW_t := \bfW^\mathrm{pt} + \bfA_t \bfB_t^\top$. LoRA initializes $\bfA_0 \sim \mathcal{N}(0,\sigma^2)$ with a small variance $\sigma^2$, and $\bfB_0 = \mathbf{0}_{n \times r}$, so that $\bfW_0 = \bfW^\mathrm{pt}$ remains intact. The subsequent updates rely on adaptive optimizers such as AdamW~\cite{AdamW}. 
For illustration, consider instead the plain gradient descent (GD) update
\begin{equation}
\label{eq:GD}
    \bfA_{t+1} {=} \bfA_t {-} \eta \nabla \ell(\bfW_t) \bfB_t, \hspace{.03cm} \bfB_{t+1} {=} \bfB_t {-} \eta \nabla \ell(\bfW_t)^{\!\top}\! \bfA_t
\end{equation}
where $\eta > 0$ is the learning rate, and the gradients $\nabla_{\bfA_t} \ell(\bfW_t) = \nabla \ell(\bfW_t) \bfB_t$ and $\nabla_{\bfB_t} \ell(\bfW_t) = \nabla \ell(\bfW_t)^\top \bfA_t$ follow from the chain rule. Then, the per-step weight increment satisfies
\begin{align*}
    \Delta \bfW_t &:= \bfW_{t+1} - \bfW_t = \bfA_{t+1} \bfB_{t+1} - \bfA_t \bfB_t \\
    &= -\eta \nabla \ell(\bfW_t) \bfB_t \bfB_t^\top - \eta \bfA_t \bfA_t^\top \nabla \ell(\bfW_t) + \mathcal{O}(\eta^2)
\end{align*}
where the last term is negligible as $\eta$ is typically tiny~\cite{LoRA-GA,Flora,LoRA-Pro,LoRA-RITE}. Summing over $T$ steps yields the cumulative update
\begin{equation}
\label{eq:GD-cumulative}
    \sum_{t=0}^{T-1} \Delta \bfW_t = \bfA_T \bfB_T^\top - \bfA_0 \bfB_0^\top = \bfA_T \bfB_T^\top. 
\end{equation}
This formulation confines LoRA's weight update to a low-dimensional subspace, which can degrade effectiveness and decelerate convergence when compared to full fine-tuning. 

Recent studies show that the gap between LoRA and full fine-tuning can be mitigated by increasing the rank $r$~\cite{MoRA,HiRA}. This motivates investigating high-rank updates with low-dimensional adapters. ReLoRA~\cite{ReLoRA} advocates learning a cascade of low-rank adapters and merging them sequentially into the pre-trained weights. However, learning each adapter requires restarting optimization, including random initialization, optimizer reset, and learning rate warm-up, which slows down convergence. MoRA~\cite{MoRA} replaces the two linear matrix multiplications $\bfA (\bfB^\top \bfX)$ by nonlinear mappings $f_\text{decompress}(\bfM f_\text{compress}(\bfX))$ with learnable $\bfM$, while the two mappings demand careful handcrafted designs to ensure effective and stable fine-tuning. HiRA~\cite{HiRA} parameterizes the weight update as the Hadamard product of low-rank matrix with pre-trained weight; i.e., $\bfW^\mathrm{ft} := (\bfA\bfB^\top) \odot \bfW^\mathrm{pre}$. Although this yields a high-rank update in Euclidean space, it remains confined to a smaller manifold of dimension $(m+n-r)r$, compared to full fine-tuning's $mn$-dimensional one. Moreover, HiRA demands explicit forward calculation and backpropagation through the $m \times n$ Hadamard product per iteration, which scales poorly to immense LLMs. 

\textbf{Notation.} Bold lowercase letters (capitals) stand for vectors (matrices). $\bfM_\mathcal{I}$ represents the submatrix of $\bfM$ with columns indexed by set $\mathcal{I}$. Symbols $\odot$ and $\cdot^{\circ 2}$ stand for Hadamard (entry-wise) product and square. $\Row(\cdot)$, $\Col(\cdot)$, and $\Null(\cdot)$ denote row, column and null spaces. $\rank(\cdot)$ and $\tr(\cdot)$ are the rank and trace of a matrix. $\diag(\bfv)$ is the diagonal matrix whose diagonal entries are from vector $\bfv$, while $\diag(\bfM)$ refers to the vector formed by the diagonals of matrix $\bfM$. $\cdot^\dagger$ denotes the Moore-Penrose pseudoinverse. For $\bfM \in \real^{m \times n}$, $\|\bfM \|_\mathrm{row} \in \real^n$ defines the vector of row-wise norms; i.e., $[\|\bfM \|_\mathrm{row}]_i = \| \bfM_{i,:} \|_2$. $\mathrm{O}(r)$ refers to the orthogonal group of degree $r$; namely the set of all $r\times r$ orthogonal matrices. For readability, all proofs are deferred to Appendix~\ref{app-sec:proofs}.

\section{High-rank updates via optimal scaling}
\label{sec:method}
Unlike LoRA adhering to a fixed low-rank component $\bfA_t \bfB_t^\top$, the key idea of this work is to dynamically identify the ``optimal'' low-rank adapters per iteration that maximally descends the loss. By refining different low-dimensional subspaces over time, the cumulative increments effectively form a high-rank update, endowing LoRA with both improved effectiveness and faster convergence. 
Specifically, we will merge the current $\bfA_t \bfB_t^\top$ into $\bfW^\mathrm{pt}$, and factor out an alternative low-rank matrix $\tilde{\bfA}_t \tilde{\bfB}_t^\top$ to optimize; that is,
\begin{align}
\label{eq:merge-factor}
    \bfW_t &= \bfW^\mathrm{pt} + \bfA_t \bfB_t^\top \\
    &= \underbrace{(\bfW^\mathrm{pt} + \bfA_t \bfB_t^\top - \tilde{\bfA}_t \tilde{\bfB}_t^\top)}_{:= \tilde{\bfW}_t^\mathrm{pt} \text{, merge \& freeze}} + \underbrace{\tilde{\bfA}_t \tilde{\bfB}_t^\top}_{:= \tilde{\bfW}_t^\mathrm{ft}, \text{ learnable}}. \nonumber 
\end{align}
The optimal choice of $\tilde{\bfA}_t \tilde{\bfB}_t^\top$ will be presented in the next subsection. Before that, we first illustrate how this change in the optimization direction influences the optimization dynamics to produce a high-rank update. With the alternative adapters $(\tilde{\bfA}_t, \tilde{\bfB}_t)$, the GD update~\eqref{eq:GD} can be replaced by
\begin{equation}
\label{eq:GD-alt}
    \bfA_{t+1} {=} \tilde{\bfA}_t {-} \eta \nabla \hspace{-0.035cm} \ell(\bfW_t) \tilde{\bfB}_t,  \bfB_{t+1} {=} \tilde{\bfB}_t {-} \eta \nabla \hspace{-0.035cm} \ell(\bfW_t)^{\!\top}\! \tilde{\bfA}_t. 
\end{equation}
In doing so, the resultant update $\Delta \tilde{\bfW}_t $ to weight matrix $\mathbf{W}_t$, and the corresponding dynamics are
\begin{subequations}
\begin{align}
\label{eq:weight-increment-alt}
    \Delta \tilde{\bfW}_t &= \bfA_{t+1} \bfB_{t+1}^\top - \tilde{\bfA}_t \tilde{\bfB}_t^\top \\
    &= -\eta \nabla \ell(\bfW_t) \tilde{\bfB}_t \tilde{\bfB}_t^\top - \eta \tilde{\bfA}_t \tilde{\bfA}_t^\top \nabla \ell(\bfW_t) + \mathcal{O}(\eta^2), \nonumber \\
    &\sum_{t=0}^{T-1} \Delta \tilde{\bfW}_t = \sum_{t=1}^T \bfA_t \bfB_t^\top - \sum_{t=0}^{T-1} \tilde{\bfA}_t \tilde{\bfB}_t^\top. 
\end{align}
\end{subequations}
By optimizing different low-rank matrices per iteration, the telescoping in~\eqref{eq:GD-cumulative} is avoided, thus allowing to accumulate the low-rank increments to render a high-rank update. 

Although ReLoRA~\cite{ReLoRA} also employs a similar merging strategy, it performs this operation less frequently due to its optimization restarts, and simply reinitializes $\tilde{\bfA}_t \tilde{\bfB}_t^\top = 0$ without a principled selection. Next, we analyze the optimal selection and the associated challenges.

\subsection{Challenges in accumulating low-rank updates}
Though promising, this idea of accumulating low-rank updates faces two major challenges, namely prohibitive computation and inefficient restart, which are separately elaborated next. 

We start by characterizing the optimal low-rank adapters and their computational complexity. Due to the nonlinearity of LLMs, the global optimum of the loss function is analytically infeasible. As a tractable alternative, a standard upper bound on the loss function will be presented, whose minimizer is available in closed form. The analysis relies on the following Lipschitz smoothness assumption. 
\begin{assumption}
\label{as:Lip-smooth}
The loss function $\ell$ has $L$-Lipschitz continuous gradients; i.e., $\| \nabla \ell(\bfW) - \nabla \ell (\bfW') \|_\mathrm{F} \le L \| \bfW - \bfW' \|_\mathrm{F}, ~\forall \bfW, \bfW' \in \real^{m \times n}$. 
\end{assumption}
Assumption~\ref{as:Lip-smooth} is fairly mild and widely used in both machine learning~\cite{DL-Goodfellow,understanding-ML}, and optimization~\cite{nonlinear-program,Adam}. It is default for analyzing first-order optimization approaches such as (stochastic) GD. Building upon this assumption, the loss function admits the quadratic upper bound as follows
\begin{equation*}
    \ell (\bfW_t + \Delta \bfW_t) {\le} \ell(\bfW_t) {+} \langle \nabla \ell(\bfW_t){,} \Delta \bfW_t \rangle_\mathrm{F} {+} \frac{L}{2} \| \Delta \bfW_t \|_\mathrm{F}^2. 
\end{equation*}
Minimizing the right-hand side incurs optimal update $\Delta \bfW_t^* = -\frac{1}{L} \nabla \ell (\bfW_t)$, which recovers GD of full fine-tuning. While the Lipschitz constant $L$ is hard to compute or even estimate especially for complicated LLMs, the effective step size $1/L$ is typically treated as a hyperparameter and tuned via grid search. Likewise, it holds for the alternative update~\eqref{eq:GD-alt} that
\begin{align}
\label{eq:quad-upper}
    \ell (\bfW_t {+} \Delta \tilde{\bfW}_t) &\le \ell(\bfW_t) {+} \langle \nabla \ell(\bfW_t){,} \Delta \tilde{\bfW}_t \rangle_\mathrm{F} {+} \frac{L}{2} \| {\Delta} \tilde{\bfW}_t \|_\mathrm{F}^2 \nonumber \\
    &\overset{(a)}{=} \frac{L}{2} \| \Delta \bfW_t^* - \Delta \tilde{\bfW}_t \|_\mathrm{F}^2 + \Const
\end{align}
where $(a)$ utilizes completing the square, and $\Const$ refers to constants not dependent on $\Delta \tilde{\bfW}_t$. This reformulation reveals that minimizing the loss upper bound is equivalent to aligning LoRA's weight increment with full fine-tuning. Plugging in~\eqref{eq:weight-increment-alt} and omitting high-order terms yield 
\begin{align}
\label{eq:obj-full}
    \min_{\tilde{\bfA}_t, \tilde{\bfB}_t} \frac{L}{2} \Big\| \frac{1}{L} \nabla \ell(\bfW_t) &- \eta \nabla \ell(\bfW_t) \tilde{\bfB}_t \tilde{\bfB}_t^\top \nonumber \\
    &- \eta \tilde{\bfA}_t \tilde{\bfA}_t^\top \nabla \ell(\bfW_t) \Big\|_\mathrm{F}^2
\end{align}
whose minimizer is offered in the following theorem. 
\begin{theorem}
\label{thm:full-opt}
Consider the SVD $\nabla \ell (\bfW_t) = \bfU_t \bfSigma_t \bfV_t^\top$. If $\rank(\nabla \ell (\bfW_t)) \ge 2r,~\forall t$ and Assumption~\ref{as:Lip-smooth} holds, then $(\tilde{\bfA}_t^*, \tilde{\bfB}_t^*)$ minimizes~\eqref{eq:obj-full} if and only if
\begin{equation}
\label{eq:ABopt-cond}
    \tilde{\bfA}_t^* = \frac{1}{\sqrt{L\eta}} [\bfU_t]_{\mathcal{A}_t} \bfP_t,~~ \tilde{\bfB}_t^* = \frac{1}{\sqrt{L\eta}} [\bfV_t]_{\mathcal{B}_t} \bfQ_t
\end{equation}
where sets $\mathcal{A}_t \cup \mathcal{B}_t = \{1,\ldots,2r\}$, $|\mathcal{A}_t|=|\mathcal{B}_t|=r$, and $\bfP_t, \bfQ_t \in \orth(r)$. 
\end{theorem}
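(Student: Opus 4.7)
The plan is to recast~\eqref{eq:obj-full} as a rank-constrained approximation of $\bfG := \nabla\ell(\bfW_t)$, invoke the Eckart--Young--Mirsky theorem for a sharp lower bound, realize it by explicit construction, and exploit the uniqueness (up to $\orth(r)$) of PSD square-root factorizations to recover the stated parameterization.

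First, I would reparametrize: set $\bfX := L\eta\,\tilde{\bfA}_t\tilde{\bfA}_t^\top$ and $\bfY := L\eta\,\tilde{\bfB}_t\tilde{\bfB}_t^\top$. As $(\tilde{\bfA}_t,\tilde{\bfB}_t)$ varies, $(\bfX,\bfY)$ sweeps all pairs of PSD matrices of rank at most $r$, each factorization $\tilde{\bfA}_t\mapsto\tilde{\bfA}_t\tilde{\bfA}_t^\top$ being unique up to right multiplication by an element of $\orth(r)$. Up to a positive multiplicative constant, the objective~\eqref{eq:obj-full} becomes
\begin{equation*}
    \|\bfG - \bfX\bfG - \bfG\bfY\|_\mathrm{F}^2.
\end{equation*}

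Second, let $\bfZ := \bfX\bfG + \bfG\bfY$. Since $\bfZ$ is a sum of two matrices of rank at most $r$, $\rank(\bfZ)\le 2r$, and Eckart--Young--Mirsky yields $\|\bfG-\bfZ\|_\mathrm{F}^2 \ge \sum_{i>2r}\sigma_i^2$, with equality iff $\bfZ$ is a best rank-$2r$ approximation to $\bfG$ (unique when $\sigma_{2r}>\sigma_{2r+1}$). The bound is achieved by the claimed $(\tilde{\bfA}_t^*,\tilde{\bfB}_t^*)$: an SVD expansion gives $\bfX^*\bfG=\sum_{i\in\mathcal{A}_t}\sigma_i\bfu_i\bfv_i^\top$ and $\bfG\bfY^*=\sum_{i\in\mathcal{B}_t}\sigma_i\bfu_i\bfv_i^\top$, whose sum equals the rank-$2r$ truncated SVD by disjointness of $\mathcal{A}_t,\mathcal{B}_t$. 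The hypothesis $\rank(\bfG)\ge 2r$ ensures $\sigma_{2r}>0$, so no spectral mode is wasted.

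The main obstacle is the ``only if'' direction. I would pass to the SVD basis via $\bfX' := \bfU_t^\top\bfX\bfU_t$ and $\bfY' := \bfV_t^\top\bfY\bfV_t$, which remain PSD of rank $\le r$; optimality then reads
\begin{equation*}
    \bfX'\bfSigma_t + \bfSigma_t\bfY' = \diag(\sigma_1,\ldots,\sigma_{2r},0,\ldots,0).
\end{equation*}
Writing this entrywise, the $(i,j)$ and (using symmetry of $\bfX',\bfY'$) $(j,i)$ off-diagonal conditions give a $2\times 2$ linear system in $(X'_{ij},Y'_{ij})$ with coefficient $\bigl(\begin{smallmatrix}\sigma_j & \sigma_i \\ \sigma_i & \sigma_j\end{smallmatrix}\bigr)$, invertible whenever $\sigma_i\ne\sigma_j$, which forces $X'_{ij}=Y'_{ij}=0$. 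The diagonal conditions give $X'_{ii}+Y'_{ii}=1$ for $i\le 2r$ and $X'_{ii}+Y'_{ii}=0$ for $i>2r$ (at positions with $\sigma_i>0$); PSD-ness then propagates any zero diagonal entry to a zero row and column. The rank-$r$ caps combined with the $2r$ unit-sum diagonal constraints pin the supports of $\bfX',\bfY'$ as disjoint subsets $\mathcal{A}_t,\mathcal{B}_t\subseteq\{1,\ldots,2r\}$ of size $r$ with $X'_{ii}=1$ iff $i\in\mathcal{A}_t$ and $Y'_{ii}=1$ iff $i\in\mathcal{B}_t$. Undoing the change of basis yields $\bfX=[\bfU_t]_{\mathcal{A}_t}[\bfU_t]_{\mathcal{A}_t}^\top$ and $\bfY=[\bfV_t]_{\mathcal{B}_t}[\bfV_t]_{\mathcal{B}_t}^\top$, and the $\orth(r)$ ambiguity of the PSD square root delivers the claimed form. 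The most delicate point is handling spectral ties (within the top $2r$, or across the boundary $\sigma_{2r}=\sigma_{2r+1}$), where the SVD is non-unique; the remedy is to rotate within tied eigenspaces to align with an optimizing support, which leaves the theorem's form intact.
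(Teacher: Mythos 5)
Your proposal is correct and shares the paper's skeleton: both directions rest on the Eckart--Young--Mirsky theorem applied to $\bfZ=\bfX\bfG+\bfG\bfY$ with $\rank(\bfZ)\le 2r$, both reduce the problem to the PSD matrices $\tilde{\bfA}_t\tilde{\bfA}_t^\top,\tilde{\bfB}_t\tilde{\bfB}_t^\top$, both need the WLOG that the top $2r$ singular values are distinct (handled identically, by rotating within tied subspaces), and both finish with the $\orth(r)$ ambiguity of a rank-$r$ PSD factorization. Where you genuinely diverge is the organization of the necessity argument. The paper first uses rank additivity to split $\Col$ and $\Row$ of the truncated SVD into a direct sum, identifies each term as an orthogonal projection, deduces that $\sqrt{L\eta}\tilde{\bfA}_t$ has orthonormal columns, and only then proves diagonality and uses idempotency to force the diagonal entries into $\{0,1/(L\eta)\}$. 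You instead conjugate once into the SVD basis, read off the single equation $\bfX'\bfSigma_t+\bfSigma_t\bfY'=\bfD$, kill off-diagonals via the $2\times2$ system, and obtain the $0/1$ diagonal pattern from the counting argument $|\mathcal{A}_t|+|\mathcal{B}_t|\ge 2r$ against the two rank-$r$ caps; orthonormality of the columns then falls out for free. This is a legitimately leaner route. The one step you should make explicit: for indices $i$ with $\sigma_i=0$ the entrywise equations place no constraint on the corresponding block of $\bfX'$ (the diagonal equation is $0=0$, so ``PSD-ness propagates zero diagonals'' does not apply there); such a block must instead be excluded because any mass there consumes the rank-$r$ budget of $\bfX'$ without contributing to $\bfX'\bfSigma_t$, making $\rank(\bfZ)<2r$ and contradicting attainment of the rank-$2r$ optimum. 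Your phrase ``rank-$r$ caps \ldots pin the supports'' gestures at this but the case $\sigma_i=0$ deserves its own sentence.
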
 
Theorem~\ref{thm:full-opt} establishes a sufficient and necessary condition for the optimal low-rank adapters. The optimal choice involves the truncated rank-$2r$ SVD of $\nabla \ell (\bfW_t)$, which prompts an iterative solver and incurs $\mathcal{O}(Smnr)$ time complexity, with $S$ denoting the number of iterations~\cite{trunc-SVD}. Due to this prohibitively high complexity, it is generally infeasible to apply such a choice to~\eqref{eq:GD-alt} for each $t$. It is worthwhile mentioning that LoRA-GA~\cite{LoRA-GA} arises as a special case of Theorem~\ref{thm:full-opt}, where a sufficient (yet not necessary) condition is derived at $t=0$ and $\bfP_0 = \bfQ_0 = \bfI_r$ to initialize LoRA adapters. Moreover, the assumption $\rank(\nabla \ell (\bfW_t)) \ge 2r,~\forall t$ can be readily satisfied in practice; see numerical validations in Figure~\ref{subfig:gradrank}. 

Aside from the prohibitive SVD computation, another challenge attributes to switching the optimization variables from $(\bfA_t, \bfB_t)$ to $(\tilde{\bfA}_t, \tilde{\bfB}_t)$. Specifically, LLM optimization relies on adaptive optimizers such as AdamW~\cite{AdamW}, which estimate the first and second moments of stochastic gradients via the exponential moving average of gradient samples; cf. Appendix~\ref{app-subsec:moments}. When switching to the alternative $(\tilde{\bfA}_t, \tilde{\bfB}_t)$, their gradient moments need to be re-estimated from the optimization trajectory, incurring time and space complexities proportional to $t$. One straightforward remedy is to restart optimization~\cite{ReLoRA}, which resets the moment estimators to accumulate them from scratch. However, as all gradient statistics are discarded, the optimization breaks off and the convergence slows down considerably. 

To enable efficient and seamless optimization, we propose to restrict $\tilde{\bfA}_t$ and $\tilde{\bfB}_t$ to be structured transformations of $\bfA_t$ and $\bfB_t$. Upon appropriate design, the gradient moment estimators of the former can be equivariantly computed from those of the latter. 

\subsection{Optimal scalar scaling}
\label{sec:opt-scale}
We will first investigate a simple scalar scaling $\tilde{\bfA}_t = \alpha_t \bfA_t,\,\tilde{\bfB}_t = \beta_t \bfB_t$. Let $m_t(\cdot)$ and $v_t(\cdot)$ denote the first and second gradient moment estimators, which involve the general stochastic matrices $\bfA$, $\bfB$ and $\bfW$; see~Appendix~\ref{app-subsec:moments} for details. The next lemma depicts the impact of scalar scaling on the gradient moment estimators. 
\begin{lemma}
\label{lemma:moments-scalar-scale}
For $\bfW = \bfW^\mathrm{pt}+\bfA\bfB^\top = \tilde{\bfW}^\mathrm{pt}+\tilde{\bfA}\tilde{\bfB}^\top$ with $\tilde{\bfA} = \alpha \bfA$ and $\tilde{\bfB} = \beta \bfB$, the first and second gradient moment estimators of $(\tilde{\bfA}, \tilde{\bfB})$ can be computed from those of $(\bfA,\bfB)$ within $\mathcal{O}((m+n)r)$ time. 
\end{lemma}
Lemma~\ref{lemma:moments-scalar-scale} suggests that the gradient moment estimators $m(\tilde{\bfA})$ and $v(\tilde{\bfA})$ can be computed using $m(\bfA)$ and $v(\bfA)$ with a negligible overhead. 

We now seek the optimal $(\tilde{\bfA}_t, \tilde{\bfB}_t)$ minimizing the loss upper bound. Under the aforementioned transform, the objective function~\eqref{eq:obj-full} reduces to
\begin{align}
\label{eq:obj-scalar}
    \min_{\alpha_t, \beta_t} \frac{L}{2} \Big\| \frac{1}{L} \nabla \ell(\bfW_t) &- \eta \beta_t^2 \nabla \ell(\bfW_t) \bfB_t \bfB_t^\top \nonumber \\
    &- \eta \alpha_t^2 \bfA_t \bfA_t^\top \nabla \ell(\bfW_t) \Big\|_\mathrm{F}^2. 
\end{align}
To solve for the global minimizer of~\eqref{eq:obj-scalar}, the following technical assumption is adopted. 
\begin{assumption}
\label{as:nonzero}
$\| \bfA_t^\top \nabla \ell(\bfW_t) \|_\mathrm{F} \| \nabla \ell(\bfW_t) \bfB_t \|_\mathrm{F} \ne 0$. 
\end{assumption}
Assumption~\ref{as:nonzero} asserts that the gradients of $\bfA_t$ and $\bfB_t$ do not vanish simultaneously; otherwise there is no update, and the iteration can be skipped. With this assumption, the optimal scaling factors are derived as follows. 
\begin{theorem}
\label{thm:scalar}
With Assumptions~\ref{as:Lip-smooth}-\ref{as:nonzero} in effect, the global minimizer of~\eqref{eq:obj-scalar} can be computed analytically within $\mathcal{O}((m+n)r^2)$ time. 
\end{theorem}
The formal statement of Theorem~\ref{thm:scalar} is deferred to Appendix~\ref{app-subsec:proof-thm-scalar}. As $r$ is typically very small, it suggests the optimal $(\tilde{\bfA}_t, \tilde{\bfB}_t)$ can be quickly solved.

\subsection{Optimal column-wise scaling}
For improved fitting capacity, this section delves into a more complicated column-wise scaling with $\tilde{\bfA}_t = \bfA_t \diag (\bfalpha_t)$ and $\tilde{\bfB}_t = \bfB_t \diag (\bfbeta_t)$, whose gradient moment estimators are sketched next. 
\begin{lemma}
\label{lemma:moments-column-scale}
    For $\bfW = \bfW^\mathrm{pt}+\bfA\bfB^\top = \tilde{\bfW}^\mathrm{pt}+\tilde{\bfA}\tilde{\bfB}^\top$ with $\tilde{\bfA} = \bfA \diag (\bfalpha)$ and $\tilde{\bfB} = \bfB \diag (\bfbeta)$, the first and second gradient moment estimators of $(\tilde{\bfA}, \tilde{\bfB})$ can be computed from those of $(\bfA,\bfB)$ within $\mathcal{O}((m+n)r)$ time. 
\end{lemma}

Unlike column-wise scaling, moment estimators for transformations including row-wise scaling and left/right-multiplying a full matrix, are generally intractable. 

With column-wise scaling on the other hand, the objective function~\eqref{eq:obj-full} boils down to
\begin{align}
\label{eq:obj-diag}
    \min_{\bfalpha_t, \bfbeta_t} \frac{L}{2} \Big\| &\frac{1}{L} \nabla \ell(\bfW_t) - \eta \nabla \ell(\bfW_t) \bfB_t \diag^2 (\bfbeta_t) \bfB_t^\top \nonumber \\
    &\qquad\quad - \eta \bfA_t \diag^2 (\bfalpha_t) \bfA_t^\top \nabla \ell(\bfW_t) \Big\|_\mathrm{F}^2.
\end{align}
Different from the scalar case in~\eqref{eq:obj-scalar}, Appendix~\ref{app-subsec:proof-thm-diag} shows that~\eqref{eq:obj-diag} has $\mathcal{O} (9^r)$ stationary points, among which the global optimum is generally hard to obtain in affordable time. Nevertheless, under certain conditions the optimum can be efficiently obtained through a $2r \times 2r$ linear system. 
\begin{theorem}
\label{thm:diag}
With Assumptions~\ref{as:Lip-smooth}-\ref{as:nonzero} and the definitions 
\begin{align*}
    \bfS_t^A &:= \begin{bmatrix} \bfA_t & \nabla \ell(\bfW_t) \bfB_t \end{bmatrix}, ~\bfS_t^B := \begin{bmatrix} \bfB_t & \bfA_t^\top \nabla \ell(\bfW_t) \end{bmatrix}, \\
    \bflambda_t &:= 
    \begin{bmatrix} 
    \| \bfA_t^\top \nabla \ell(\bfW_t) \|_\mathrm{row}^2 \\ 
    \| \bfB_t^\top \nabla \ell(\bfW_t)^\top \|_\mathrm{row}^2 
    \end{bmatrix}
\end{align*}
in effect, if the linear system $\big[(\bfS_t^{A\top} \bfS_t^A) \odot (\bfS_t^{B\top} \bfS_t^B) \big] \bfv_t = \bflambda_t$ has a non-negative solution $\bfv_t \in \real_+^{2r}$, then the global minimizer of~\eqref{eq:obj-diag} is
\begin{equation}
\label{eq:opt-diag}
    \begin{bmatrix}
    \bfalpha_t^* \\ \bfbeta_t^*
    \end{bmatrix}= \pm\frac{1}{\sqrt{L\eta}} \bfv_t^{\circ \frac{1}{2}}. 
\end{equation}
\end{theorem}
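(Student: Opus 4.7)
I would reduce \eqref{eq:obj-diag} to a convex quadratic program in non-negative variables whose first-order optimality condition is exactly the stated linear system. Because the objective depends on $\bfalpha_t, \bfbeta_t$ only through their entry-wise squares, I would change variables to $\mathbf{w} := [\bfalpha_t^{\circ 2}; \bfbeta_t^{\circ 2}] \in \real_{\ge 0}^{2r}$. Using $\bfA_t \diag^2(\bfalpha_t) \bfA_t^\top = \sum_{k=1}^r \alpha_{t,k}^2 (\bfA_t \bfe_k)(\bfA_t \bfe_k)^\top$ and an analogous identity on the $\bfB_t$ side, each of the two scaled matrices inside the Frobenius norm of \eqref{eq:obj-diag} becomes a linear combination of $r$ fixed rank-one matrices with non-negative coefficients. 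Expanding the squared norm then yields a quadratic
\[
f(\mathbf{w}) = \tfrac{1}{2L}\|\nabla\ell(\bfW_t)\|_\mathrm{F}^2 - \eta\,\mathbf{h}^\top \mathbf{w} + \tfrac{L\eta^2}{2}\,\mathbf{w}^\top \mathbf{H}\,\mathbf{w},
\]
where $\mathbf{h}\in\real^{2r}$ collects the Frobenius inner products of each of the $2r$ rank-one summands with $\nabla\ell(\bfW_t)$, and $\mathbf{H}\in\real^{2r\times 2r}$ is their Gram matrix under the Frobenius inner product.

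\textbf{Identifying $\mathbf{h}$ and $\mathbf{H}$.} Via the rank-one inner-product identity $\langle \mathbf{p}\mathbf{q}^\top, \mathbf{r}\mathbf{s}^\top\rangle_\mathrm{F} = (\mathbf{p}^\top\mathbf{r})(\mathbf{q}^\top\mathbf{s})$, the $k$-th coordinate of $\mathbf{h}$ evaluates to the squared norm of the $k$-th row of $\bfA_t^\top \nabla\ell(\bfW_t)$ for $k\le r$ and to the squared norm of the $(k{-}r)$-th row of $\bfB_t^\top \nabla\ell(\bfW_t)^\top$ for $k>r$, so that $\mathbf{h} = \bflambda_t$. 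Applying the same identity to every pair of rank-one summands factors each entry $\mathbf{H}_{ij}$ as the product of one entry of $\bfS_t^{A\top}\bfS_t^A$ with one entry of $\bfS_t^{B\top}\bfS_t^B$, which I would reorganize into $\mathbf{H} = (\bfS_t^{A\top}\bfS_t^A) \odot (\bfS_t^{B\top}\bfS_t^B)$. The main algebraic obstacle is the bookkeeping across all four $r\times r$ blocks of $\mathbf{H}$ — in particular, the cross A--B block, where summand pairings mix the two halves of $\bfS_t^A$ and $\bfS_t^B$, must be carefully verified to collapse to the stated Hadamard form.

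\textbf{Closing the argument.} The matrix $\mathbf{H}$ is positive semidefinite, either directly as a Gram matrix of the rank-one summands, or as the Hadamard product of two positive semidefinite matrices via the Schur product theorem. Hence $f$ is a convex quadratic on the convex cone $\mathbf{w}\ge\mathbf{0}$, and its unconstrained first-order condition $L\eta\,\mathbf{H}\mathbf{w} = \bflambda_t$ becomes, under the rescaling $\bfv_t := L\eta\,\mathbf{w}$, exactly the advertised linear system. When this system admits a non-negative solution $\bfv_t \in \real_+^{2r}$, the point $\mathbf{w}^* = \bfv_t/(L\eta)$ is feasible and, by convexity, is a global minimizer of $f$ on the non-negative orthant. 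Taking entry-wise square roots of the two halves of $\mathbf{w}^*$ (with either sign) reproduces \eqref{eq:opt-diag}; the $\pm$ ambiguity is intrinsic since $f$ only depends on $\bfalpha_t^{\circ 2}$ and $\bfbeta_t^{\circ 2}$, and Assumption~\ref{as:nonzero} ensures $\bflambda_t \ne \mathbf{0}$ so that the system is nontrivial.
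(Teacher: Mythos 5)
Your proposal is correct and follows essentially the same route as the paper: substitute the entry-wise squares, relax to the unconstrained convex quadratic in $\mathbf{w}=[\bfalpha_t^{\circ 2};\bfbeta_t^{\circ 2}]$, identify its normal equations with the stated Hadamard-product linear system, and conclude that a non-negative solution is feasible and hence globally optimal for the constrained problem before taking square roots. The only cosmetic difference is that you obtain the coefficient matrix directly as the Gram matrix of the $2r$ rank-one summands, whereas the paper reaches the same system from the stationarity conditions via the identity $(\bfM_1 \odot \bfM_2)\bfv = \diag(\bfM_1 \diag(\bfv) \bfM_2^\top)$; you also make explicit the convexity (PSD Gram matrix) that the paper leaves implicit when asserting every stationary point of the relaxed problem is a global minimum.
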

Interestingly, our empirical observations suggest that around $80\%$ LoRA layers in an LLM satisfies the the non-negativity condition for $\bfv_t$ across iterations; see Figure~\ref{subfig:percent-col}. In addition, calculating the auxiliary vectors/matrices and solving the $2r \times 2r$ linear system take $O((m+n+r)r^2)$ time. 

\subsection{ScaLoRA for high-rank update and fast convergence}
\label{sec:ScaLoRA}
Building upon these analytical insights, our scaled low-rank adaptation (ScaLoRA) method optimally scales the low-rank adapters per (few) iteration(s) to attain the desired high-rank update and fast convergence. 
In particular, ScaLoRA relies on a mixture of the aforementioned two scaling schemes. When the linear system in Theorem~\ref{thm:diag} yields a positive solution,~\eqref{eq:merge-factor} adopts the optimal column-wise scaling $\tilde{\bfA}_t = \bfA_t \diag (\bfalpha_t^*),\,\tilde{\bfB}_t = \bfB_t \diag (\bfbeta_t^*)$, with moment estimators updated as in Lemma~\ref{lemma:moments-column-scale}; otherwise, the algorithm resorts to Theorem~\ref{thm:scalar} for the optimal scalar scaling $\tilde{\bfA}_t = \alpha_t^* \bfA_t,\,\tilde{\bfB}_t = \beta_t^* \bfB_t$, and Lemma~\ref{lemma:moments-scalar-scale} to update moment estimators. Akin to full fine-tuning, the Lipschitz constant $L$ is viewed as a hyperparameter and we tune it using grid search. The step-by-step pseudocodes are provided in Appendix~\ref{app-sec:pseudocodes}.

Next, we analyze the computational cost of ScaLoRA, and compare it to SOTA approaches. To start, the gradients $\nabla \ell(\bfW_t) \bfB_t$ and $\nabla \ell(\bfW_t)^\top \bfA_t$ can be directly acquired from backpropagation, that incurs no extra overhead. As a consequence, the overall time complexity for ScaLoRA is $\mathcal{O}(mnr + (m+n+r)r^2)$, where the term $\mathcal{O}(mnr)$ comes from~\eqref{eq:merge-factor}, and the rest can be deduced from  Theorems~\ref{thm:scalar} and~\ref{thm:diag}. When $r \ll m,n$, the time complexity is dominated by the former. 
Moreover, as~\eqref{eq:merge-factor} can be performed in place, the space overhead is as small as $\mathcal{O}((m+n+r)r)$. In comparison, MoRA's overhead significantly depends on the design of $f_\text{compress}$ and $f_\text{decompress}$, which typically exceeds LoRA's simple bilinear structure. While HiRA exhibits $\mathcal{O}(mnr)$ time overhead comparable to ScaLoRA, it suffers from high memory footprint of $\mathcal{O}(mn)$ due to the backpropagation of Hadamard product. 

Similar to other high-rank update approaches, the escalated computational cost is the major limitation of ScaLoRA, which confines its scalability to increasingly large models. We next introduce a variant to mitigate this limitation. 
Since $\eta$ is typically tiny, the optimal scaling is close to $1$ after one update; cf. Appendix~\ref{app-sec:scales}. Thus, a natural remedy is to perform ScaLoRA every $I$ iterations, so that the per-step time complexity is amortized to $\mathcal{O}((mnr + (m+n+r)r^2) / I)$ without noticeably exacerbating the performance. We term this intermittent variant as ScaLoRA-I. It is worth stressing that MoRA and HiRA both rely on a fixed structure to impel a high-rank update, which is imposed per optimization step, and cannot be amortized. A summary of the costs is provided in Appendix~\ref{app-sec:pseudocodes}, and numerical comparisons using LLMs are presented in Section~\ref{subsec:reasoning}. 

Another notable limitation of ScaLoRA is its storage. While LoRA and other high-rank variants require saving only the low-dimensional adapters $\bfA_t$ and $\bfB_t$, ScaLoRA stores the entire merged matrix $\bfW_t = \tilde{\bfW}^\mathrm{pt}_t + \tilde{\bfA}_t \tilde{\bfB}_t^\top$ due to the modification of $\tilde{\bfW}^\mathrm{pt}_t$. Fortunately, disk space is typically abundant relative to memory, and thereby it does not pose a bottleneck for LLM fine-tuning.

\section{Numerical tests}
\label{sec:experiments}
This section presents numerical tests to validate the effectiveness of the proposed ScaLoRA approach. All setups including datasets, models, and hyperparameters are deferred to Appendix~\ref{app-sec:exp-setup}. 

\begin{figure*}[t]
  \centering
  \begin{subfigure}[t]{0.425\textwidth}
  	\includegraphics[width=\textwidth]{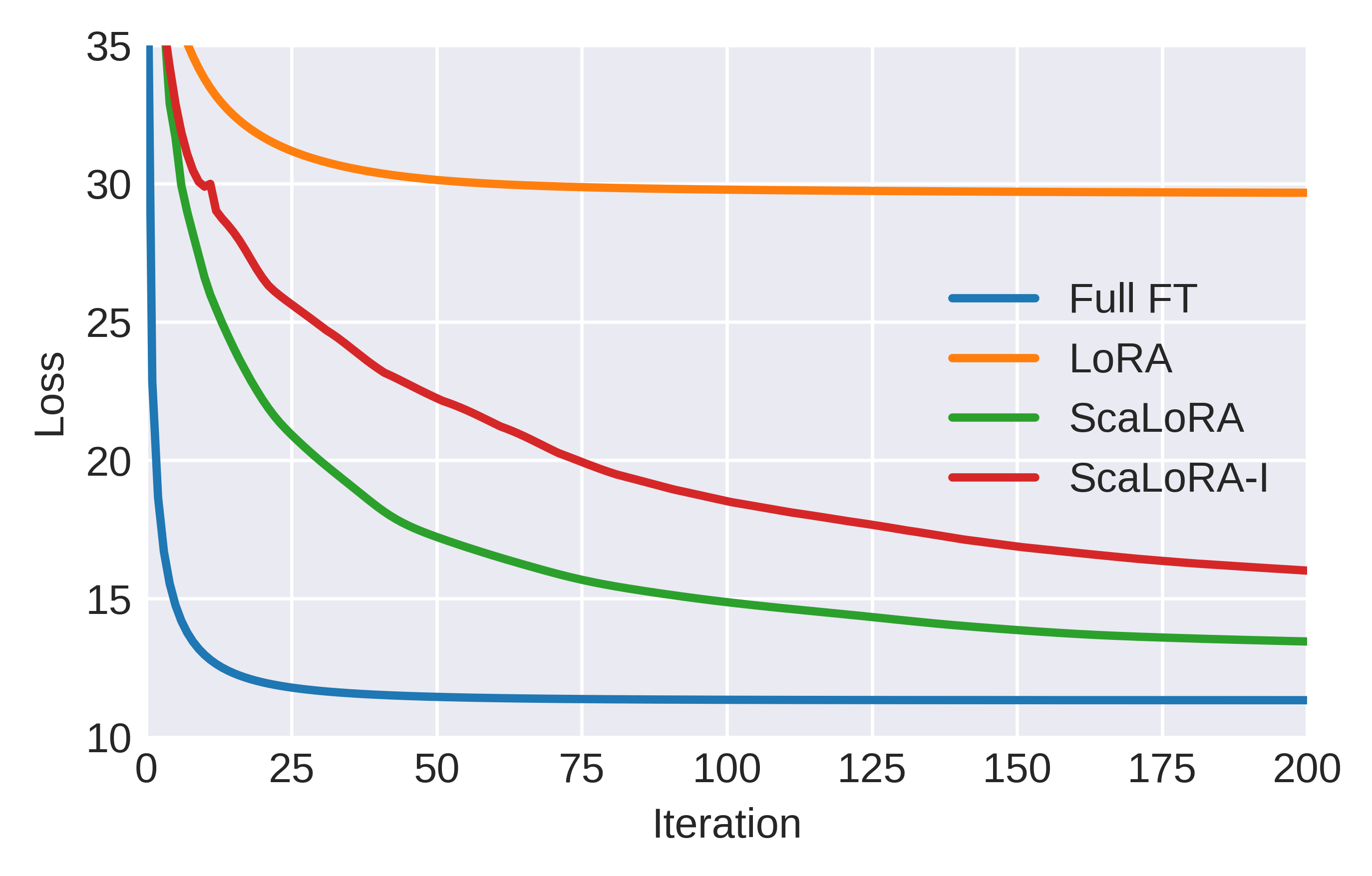}
    \vspace{-.6cm}
  	\caption{Fine-tuning loss}
  	\label{subfig:toy-loss}
  \end{subfigure}
  \qquad
  \begin{subfigure}[t]{0.425\textwidth}
  	\includegraphics[width=\textwidth]{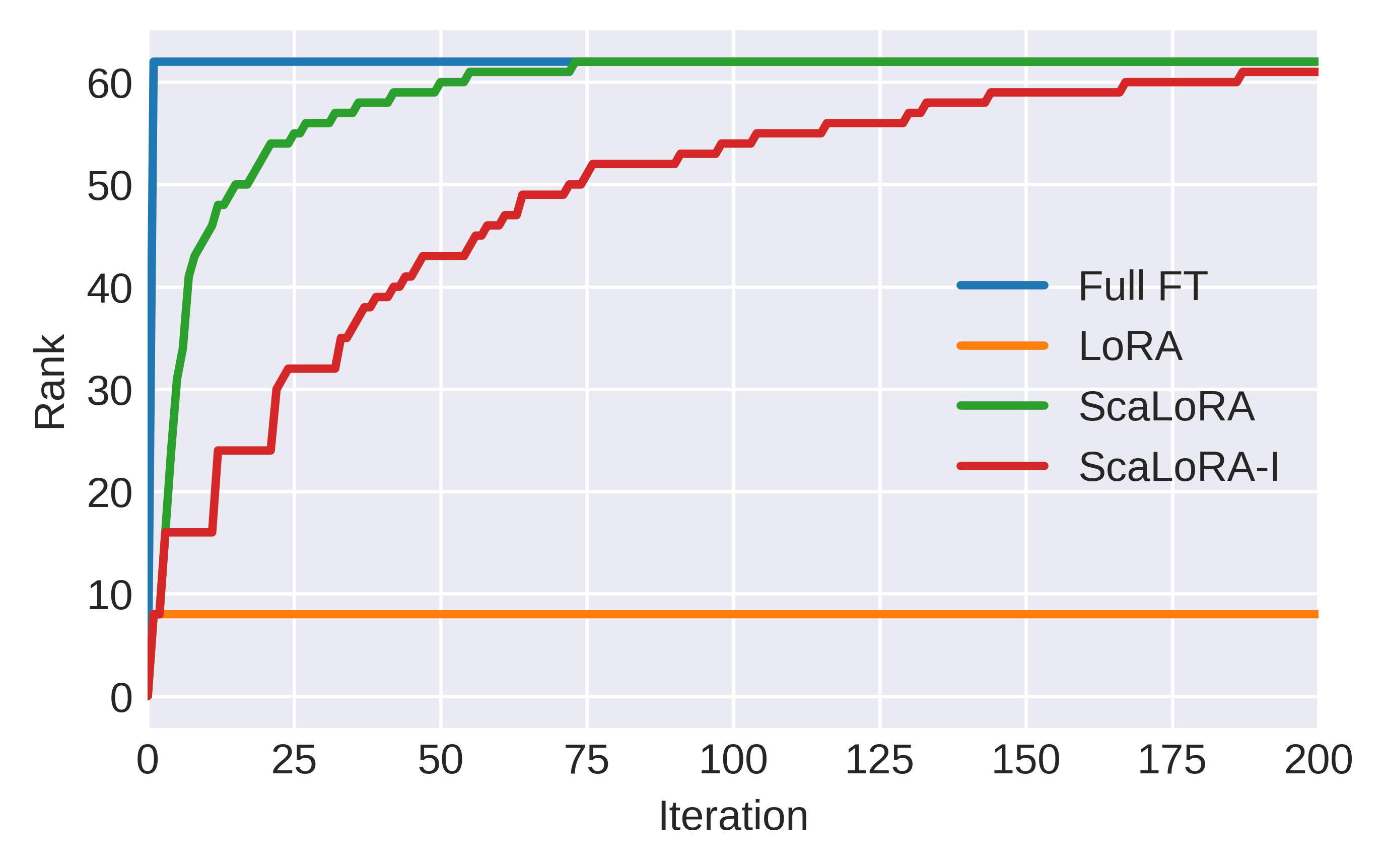}
    \vspace{-.6cm}
  	\caption{Rank of weight update}
  	\label{subfig:toy-rank}
  \end{subfigure}
  \begin{subfigure}[t]{0.425\textwidth}
  	\includegraphics[width=\textwidth]{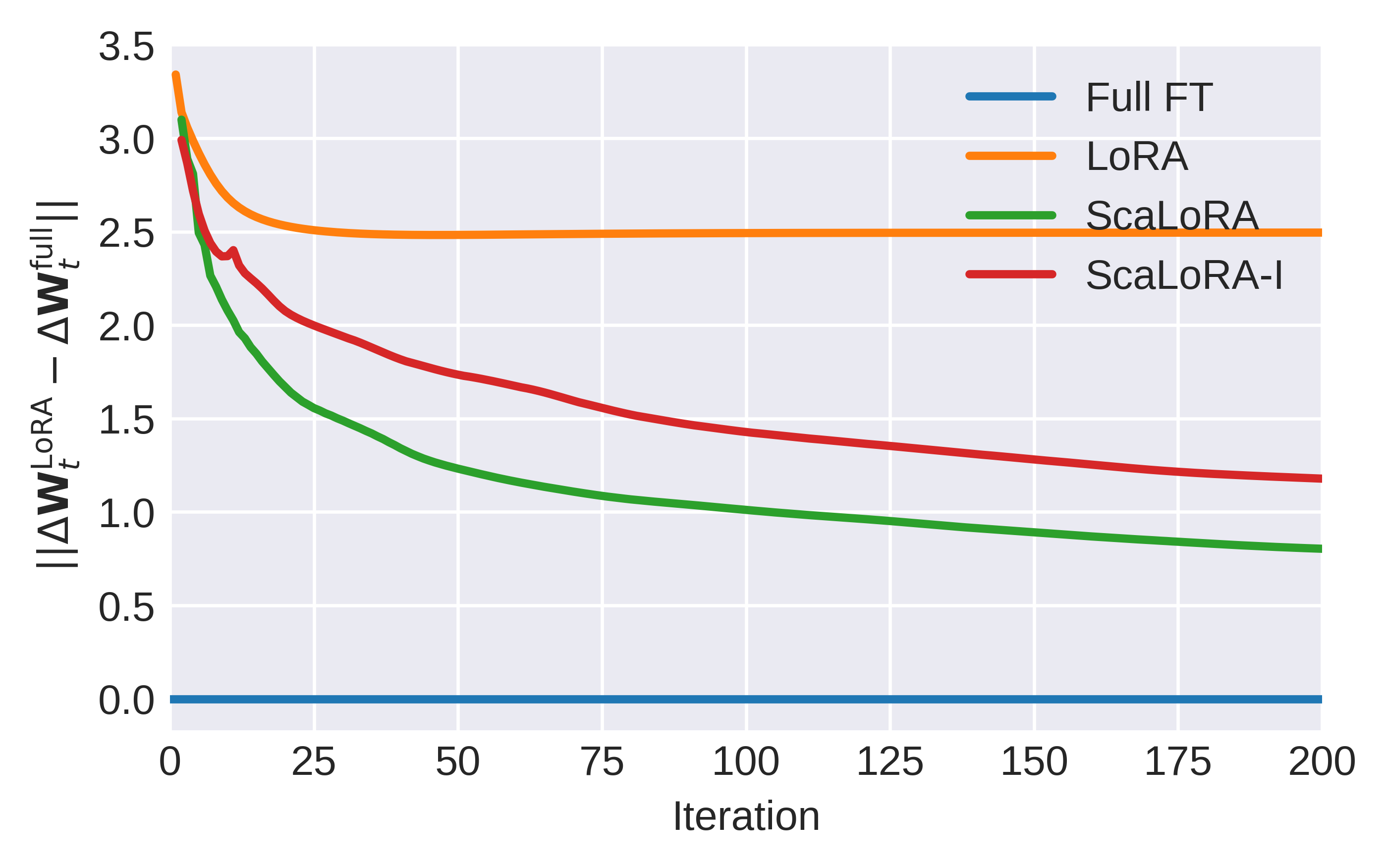}
    \vspace{-.6cm}
  	\caption{Gradient distance to full FT}
  	\label{subfig:toy-deltaW}
  \end{subfigure}
  \qquad
  \begin{subfigure}[t]{0.425\textwidth}
  	\includegraphics[width=\textwidth]{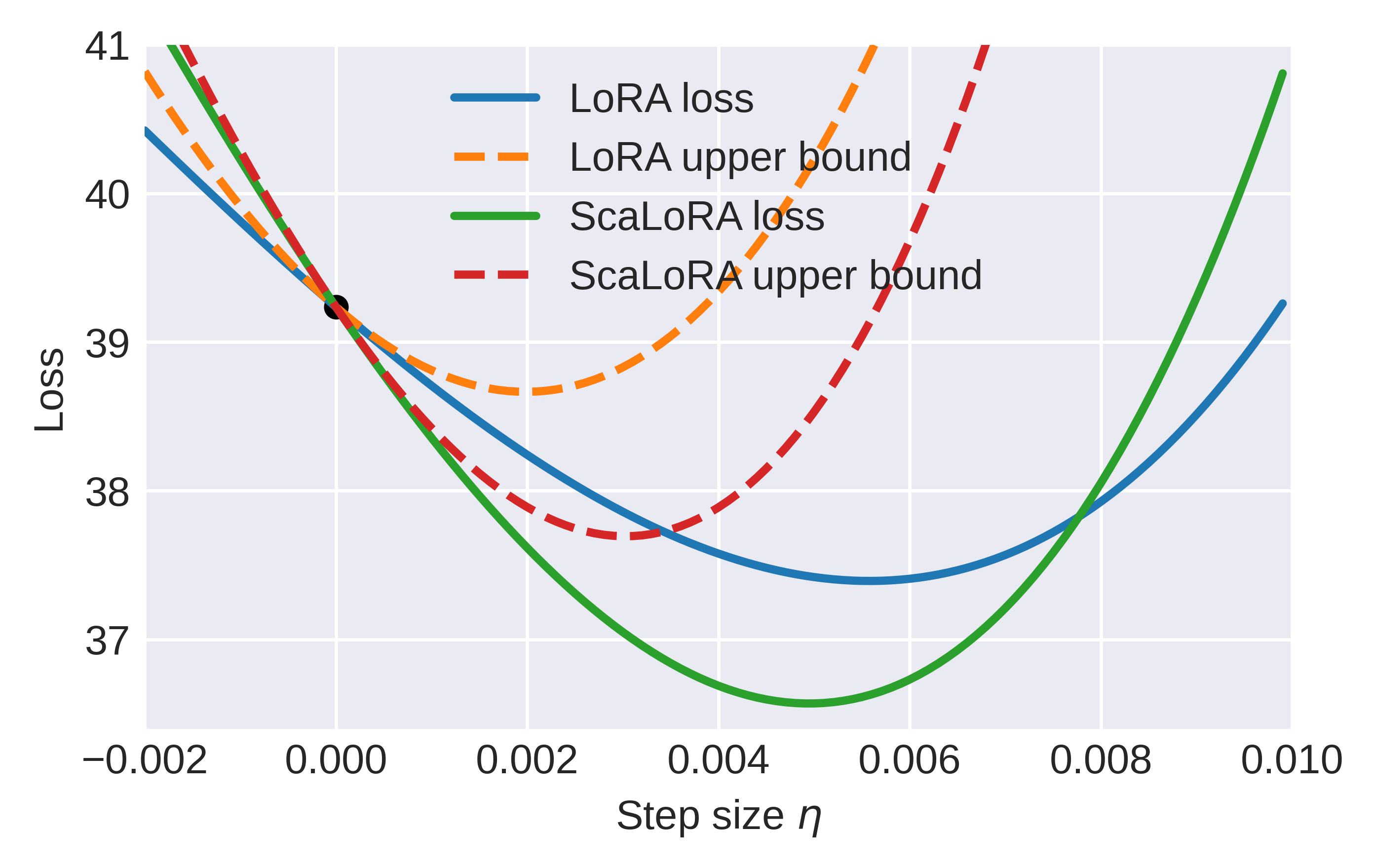}
    \vspace{-.6cm}
  	\caption{Loss upper bounds}
  	\label{subfig:toy-bounds}
  \end{subfigure}
  \caption{Visualization of linear regression on synthetic data.}
  \label{fig:toy}
\end{figure*}

\begin{table*}[t]
 \setlength{\tabcolsep}{3pt}
 \caption{Comparison using DeBERTaV3-base on the GLUE benchmark. The top two results are marked with solid lines and underlines. The results are obtained by averaging 3 random runs with $r=4$, and the full fine-tuning results are from~\cite{AdaLoRA}.}
 \label{tab:GLUE}
 \centering
 \begin{small}
  \begin{tabular}{lcccccccccc}
   \toprule
   \multirow{2}{*}{\textbf{Method}} & \textbf{CoLA} & \textbf{SST-2} & \textbf{MRPC} & \textbf{STS-B} & \textbf{QQP} & \textbf{MNLI} & \textbf{QNLI} & \textbf{RTE} & \textbf{All} \\
   \cmidrule{2-10}
     &  Mcc & Acc & Acc & Corr & Acc & Matched & Acc & Acc & Avg \\ 
   \midrule
   Full FT & $69.19$ &	$95.63$ &	$89.46$ &	$91.60$ &	$92.40$ & $89.90$ &	$94.03$ &	$83.75$ &	$88.25$ \\
   \midrule
   LoRA  & $68.10_{\pm 1.73}$	& $95.49_{\pm 0.05}$	& $89.46_{\pm 0.20}$ & $91.09_{\pm 0.14}$ &	$91.86_{\pm 0.03}$	& $\smash{\underline{90.25}}_{\pm 0.13}$	& $\smash{\underline{94.30}}_{\pm 0.05}$	& $84.48_{\pm 2.04}$	& $88.13$ \\
   MoRA   & $\smash{\underline{69.67}}_{\pm 0.90}$	& $95.45_{\pm 0.44}$	 & $89.62_{\pm 0.76}$	& $90.90_{\pm 0.19}$	& $91.83_{\pm 0.12}$	& $90.05_{\pm 0.04}$	& $93.81_{\pm 0.20}$	& $85.44_{\pm 1.19}$	& $88.35$ \\
   HiRA   & $68.82_{\pm 1.01}$	& $\smash{\underline{95.53}}_{\pm 0.19}$	& $\smash{\underline{89.95}}_{\pm 0.53}$	& $\smash{\underline{91.15}}_{\pm 0.09}$	& $\mathbf{92.19}_{\pm 0.06}$	& $90.24_{\pm 0.10}$	& $94.15_{\pm 0.13}$	& $\smash{\underline{85.68}}_{\pm 0.17}$	& $88.46$ \\
   ScaLoRA & $\mathbf{69.86}_{\pm 0.37}$	& $\mathbf{95.83}_{\pm 0.29}$	& $\mathbf{90.28}_{\pm 0.31}$	& $\mathbf{91.47}_{\pm 0.15}$	& $\smash{\underline{92.10}}_{\pm 0.07}$	& $\mathbf{90.36}_{\pm 0.03}$	& $\mathbf{94.34}_{\pm 0.28}$	& $\mathbf{87.61}_{\pm 0.34}$	& $\mathbf{88.98}$ \\
   \bottomrule
 \end{tabular}
 \end{small}
\end{table*}

\subsection{Linear regression with synthetic data}
The first experiment performs linear regression on toy data. The loss function is $\ell(\bfW) = \frac{1}{2} \| \bfY - \bfW\bfX \|_\mathrm{F}^2$, where $\bfX$ and $\bfY$ are given. LoRA substitutes $\bfW \in \real^{64 \times 64}$ with $\bfA \bfB^\top$. Figure~\ref{fig:toy} sketches the behavior of LoRA, ScaLoRA(-I), and full fine-tuning. It is seen that ScaLoRA(-I) converges remarkably faster than vanilla LoRA, thanks to the progressively increasing rank of cumulative weight updates, and better alignment to full fine-tuning. In addition, Figure~\ref{subfig:toy-bounds} depicts the loss function, and its quadratic upper bound~\eqref{eq:quad-upper}. By selecting the optimal per-step LoRA adapters, ScaLoRA minimizes the loss upper bound and the associated loss landscape, leading to accelerated convergence. These observations corroborate our theoretical results in Section~\ref{sec:method}.

\begin{figure*}[t]
  \centering
  \begin{subfigure}[t]{0.425\textwidth}
  	\includegraphics[width=\textwidth]{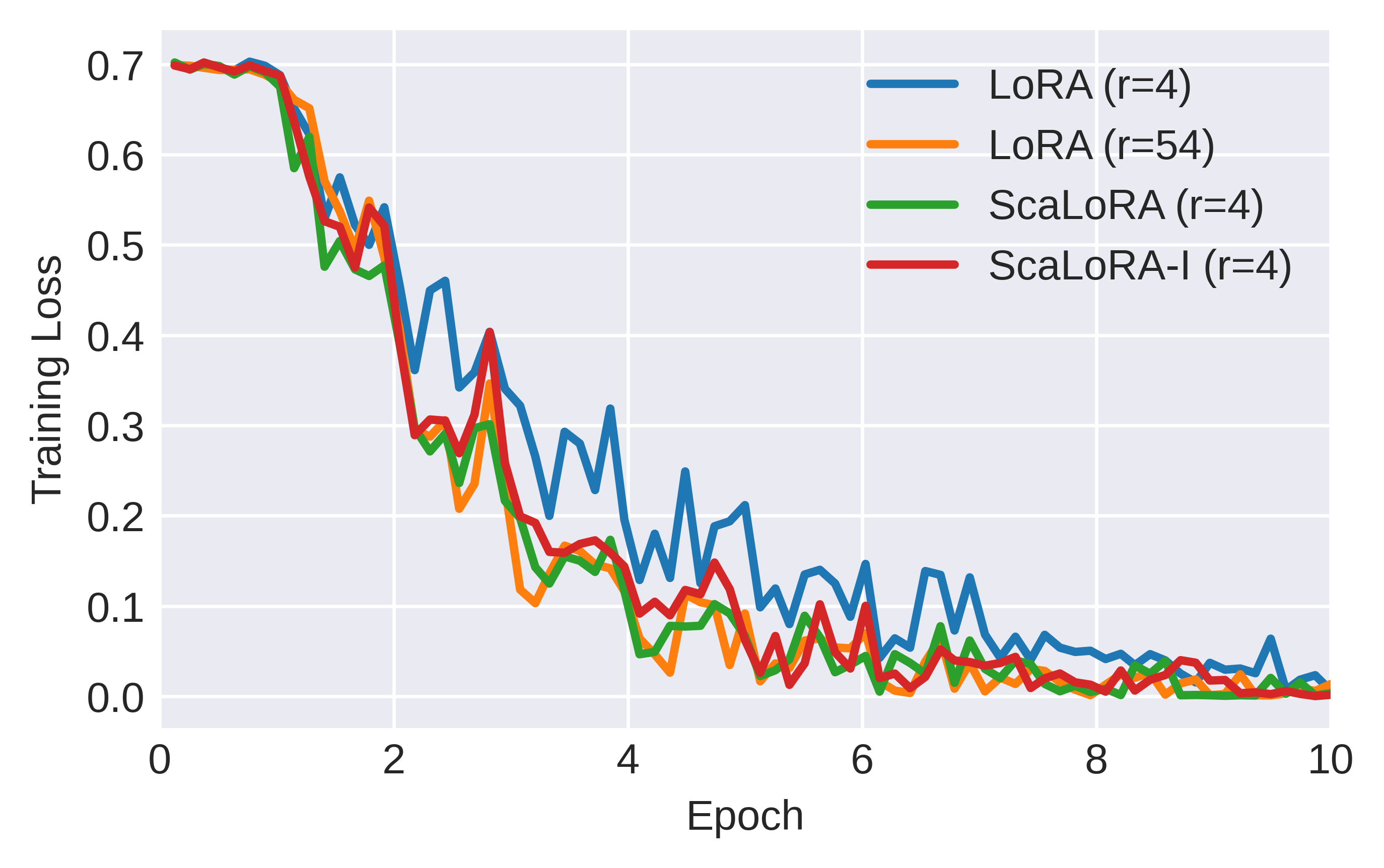}
    \vspace{-.6cm}
  	\caption{Fine-tuning loss}
  	\label{subfig:real-loss}
  \end{subfigure}
  \qquad
  \begin{subfigure}[t]{0.425\textwidth}
  	\includegraphics[width=\textwidth]{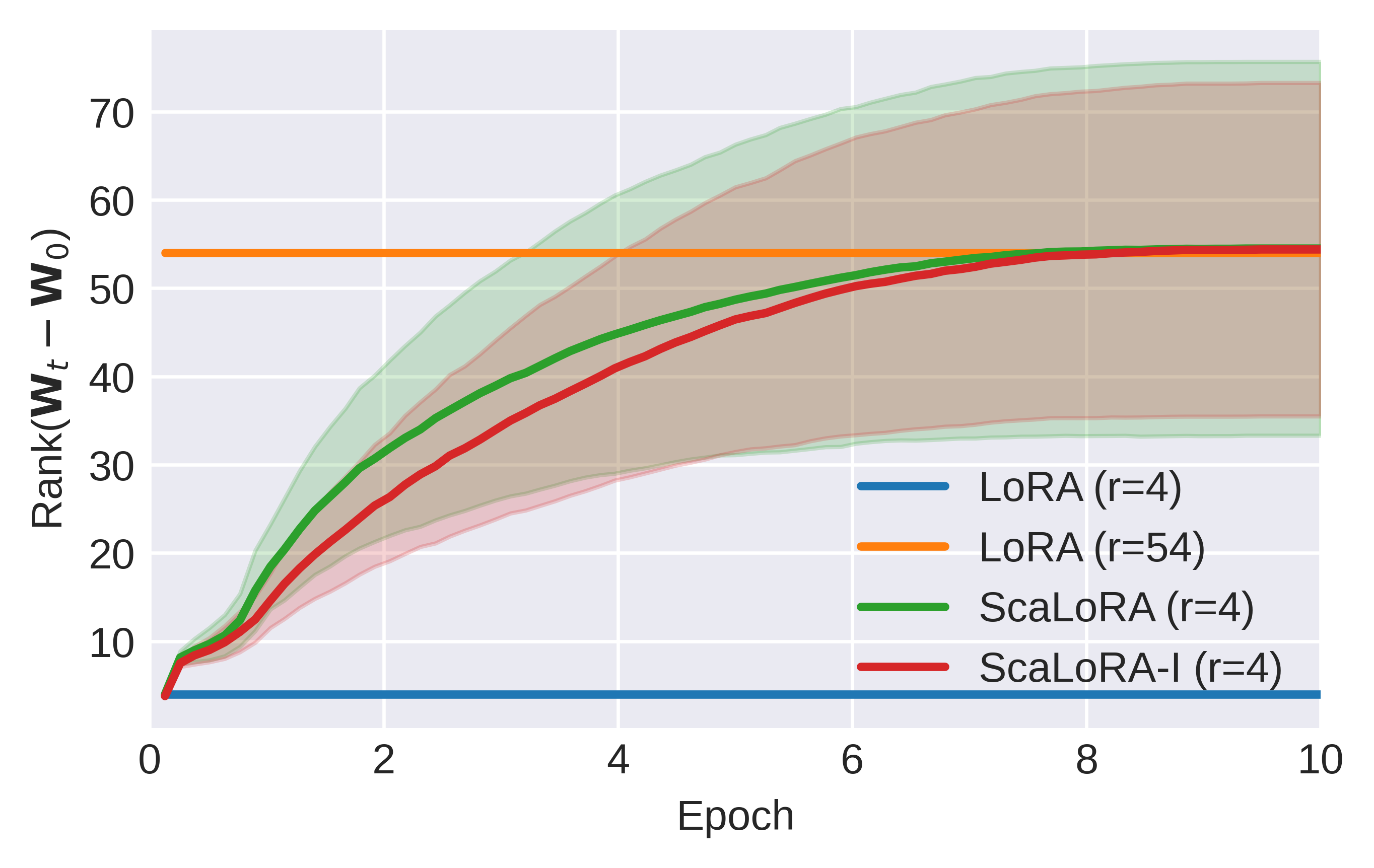}
    \vspace{-.6cm}
  	\caption{Rank of weight update}
  	\label{subfig:real-rank}
  \end{subfigure}
  \begin{subfigure}[t]{0.425\textwidth}
  	\includegraphics[width=\textwidth]{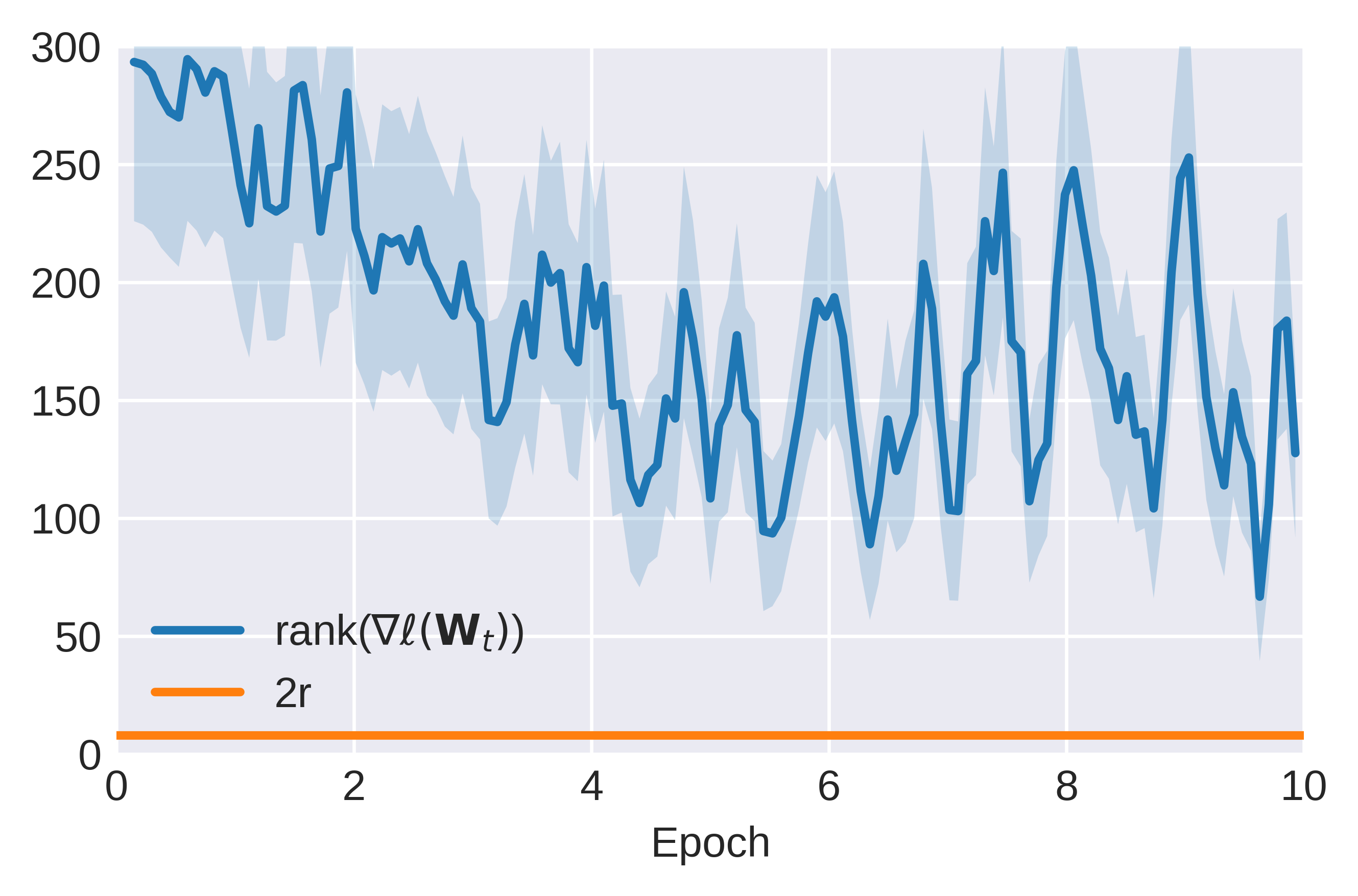}
    \vspace{-.6cm}
  	\caption{Gradient rank}
  	\label{subfig:gradrank}
  \end{subfigure}
  \qquad
  \begin{subfigure}[t]{0.425\textwidth}
  	\includegraphics[width=\textwidth]{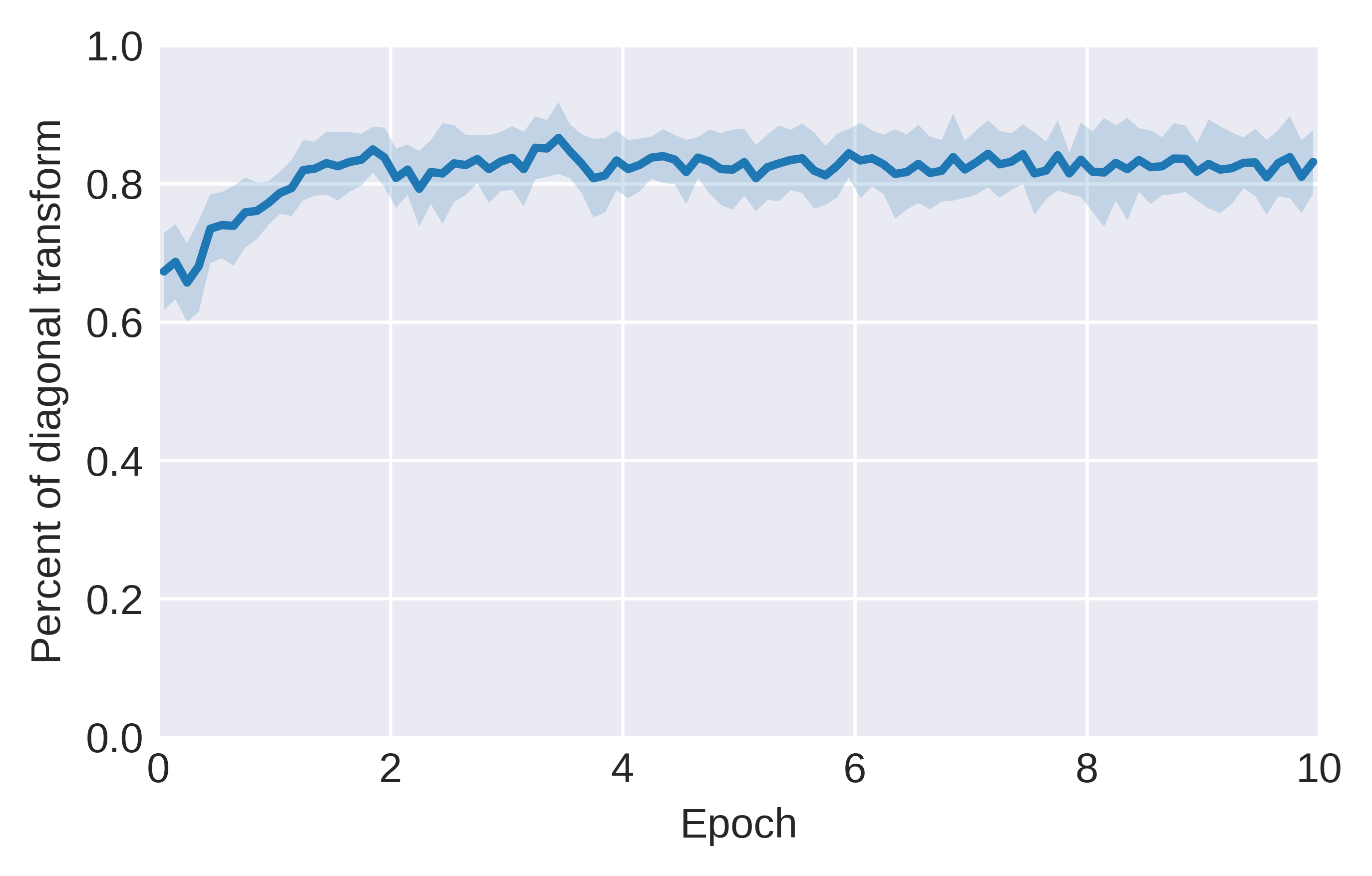}
    \vspace{-.6cm}
  	\caption{Percentage of column scaling}
  	\label{subfig:percent-col}
  \end{subfigure}
  \caption{Visualization on the RTE dataset with DeBERTaV3-base.}
  \label{fig:RTE}
\end{figure*}

\subsection{Natural language understanding}
The next test deals with ScaLoRA's performance on the GLUE benchmark~\cite{GLUE}, which contains 8 different tasks in the field of natural language understanding (NLU). The model is DeBERTaV3-base~\cite{DeBERTaV3}, a masked language model specialized in NLU with 184M parameters. The rank in LoRA is fixed to $r=4$ with scaling coefficient $8$ for all approaches, and other setups follow from~\cite{AdaLoRA}. Table~\ref{tab:GLUE} compares ScaLoRA to LoRA~\cite{LoRA}, and SOTA high-rank variants MoRA~\cite{MoRA} and HiRA~\cite{HiRA}, where the top two results are marked in bold and underlined. Notably, ScaLoRA not only presents $0.5\%+$ average performance gain, but also achieves the best performance in 7 out of 8 datasets, and exhibits comparable performance ($0.09\%$ less than the highest) on the remaining one. We remark that the GLUE datasets are relatively small, so that full fine-tuning can readily lead to overfitting, and hence inferior performance. 

To further investigate the rationale behind ScaLoRA's performance gain, Figures~\ref{subfig:real-loss} and~\ref{subfig:real-rank} outline the fine-tuning loss and rank of cumulative weight update for LoRA and ScaLoRA(-I) on the RTE dataset of the GLUE benchmark. MoRA and HiRA are excluded since they rely on different learning rates. Clearly, ScaLoRA gradually accumulates the low-rank update during the fine-tuning epochs, rendering weight updates of average rank $54$. Due to this high-rank update, ScaLoRA's convergence is markedly faster than LoRA with $r=4$, and aligns with LoRA for $r=54$ especially in the last 5 epochs. This highlights the high-rank update and fast convergence incurred by ScaLoRA. Interestingly, the increase of rank in Figure~\ref{subfig:real-rank} becomes slower with epochs. This is because ScaLoRA's direct objective is to minimize the loss, which allows each layer to adaptively adjust the singular values in the most effective directions. When the previous weight updates span a sufficiently large subspace that the new weight increment falls into, the rank stops growing. This in turn confirms LoRA's premise that the optimal weight update lives on a low-rank manifold. In addition, Figures~\ref{subfig:gradrank} and~\ref{subfig:percent-col} respectively justify the assumption $\rank(\nabla \ell(\bfW_t)) \ge 2r,\,\forall t$ in Theorem~\ref{thm:full-opt}, and the condition $\bfv_t \in \real^{2r}_{+}$ in Theorem~\ref{thm:diag}. As the NLU tasks in GLUE are relatively simple and the RTE dataset is small, a low rank of $54$ suffices to fit well the datasets. Next, experiments are conducted on a suite of more challenging tasks with larger LLMs, where higher ranks become necessary.

\begin{table*}[t]
 \setlength{\tabcolsep}{3pt}
 \caption{Commonsense reasoning with $r=8$. The top two results are marked with solid lines and underlines.}
 \label{tab:reasoning}
 \centering
 \begin{small}
  \begin{tabular}{clccccccccc}
   \toprule
    & \textbf{Method} & \textbf{BoolQ} & \textbf{PIQA} & \textbf{SIQA} & \textbf{HS} & \textbf{WG} & \textbf{ARC-e} & \textbf{ARC-c} & \textbf{OBQA} & \textbf{Avg} \\
    \midrule
    \multirow{8}{*}{\rotatebox[origin=c]{90}{LLaMA2-7B}} & LoRA & $87.40_{\pm 0.58}$ &	$81.66_{\pm 0.90}$ &	$59.16_{\pm 1.11}$ &	$82.45_{\pm 0.38}$ &	$79.48_{\pm 1.14}$ &	$82.91_{\pm 0.77}$ &	$57.59_{\pm 1.44}$ &	$58.40_{\pm 2.21}$ &	$73.63$ \\
    & ReLoRA & $\smash{\underline{87.80}}_{\pm 0.57}$ &	$82.48_{\pm 0.89}$ &	$60.08_{\pm 1.11}$ &	$83.23_{\pm 0.37}$ &	$\smash{\underline{82.56}}_{\pm 1.07}$ &	$82.95_{\pm 0.77}$ &	$\smash{\underline{58.11}}_{\pm 1.44}$ &	$58.00_{\pm 2.20}$ &	$74.40$ \\
    & LoRA-GA & $\mathbf{87.92}_{\pm 0.58}$ &	$\mathbf{83.03}_{\pm 0.88}$ &	$\smash{\underline{60.13}}_{\pm 1.11}$ &	$83.30_{\pm 0.38}$ &	$\mathbf{82.87}_{\pm 1.09}$ &	$83.25_{\pm 0.77}$ &	$56.83_{\pm 1.44}$ &	$58.40_{\pm 2.21}$ &	$74.34$ \\
    & MoRA & $87.49_{\pm 0.58}$ &	$82.54_{\pm 0.89}$ &	$59.88_{\pm 1.11}$ &	$82.56_{\pm 0.38}$ &	$79.08_{\pm 1.14}$ &	$\smash{\underline{83.59}}_{\pm 0.76}$ &	$58.02_{\pm 1.44}$ &	$57.40_{\pm 2.21}$ &	$73.82$ \\
    & HiRA & $87.71_{\pm 0.57}$ &	$\smash{\underline{82.97}}_{\pm 0.88}$ &	$59.83_{\pm 1.11}$ &	$83.38_{\pm 0.37}$ &	$81.69_{\pm 1.09}$ &	$82.83_{\pm 0.77}$ &	$55.55_{\pm 1.45}$ &	$57.60_{\pm 2.21}$ &	$73.95$ \\
    & ScaLoRA & $87.77_{\pm 0.57}$ &	$82.43_{\pm 0.88}$ &	$60.08_{\pm 1.11}$ &	$\smash{\underline{83.43}}_{\pm 0.37}$ &	$82.08_{\pm 1.08}$ &	$83.54_{\pm 0.76}$ &	$\smash{\underline{58.11}}_{\pm 1.44}$ &	$\smash{\underline{58.60}}_{\pm 2.20}$ &	$\underline{74.51}$ \\
    & ScaLoRA-I & $87.58_{\pm 0.76}$ &	$82.26_{\pm 0.89}$ &	$\mathbf{60.49}_{\pm 1.11}$ &	$\mathbf{83.52}_{\pm 0.37}$ &	$81.69_{\pm 1.09}$ &	$\mathbf{83.75}_{\pm 0.76}$ &	$\mathbf{58.53}_{\pm 1.44}$ &	$\mathbf{60.20}_{\pm 1.19}$ &	$\mathbf{74.75}$ \\
    \cmidrule{2-11}
    & LoRA$_{r=32}$ & $88.29_{\pm 0.56}$ &	$82.70_{\pm 0.90}$ &	$60.54_{\pm 1.11}$ &	$83.15_{\pm 0.37}$ &	$82.00_{\pm 1.08}$ &	$82.79_{\pm 0.77}$ &	$57.68_{\pm 1.44}$ &	$59.00_{\pm 2.20}$ &	$74.52$ \\
    \midrule
    \multirow{8}{*}{\rotatebox[origin=c]{90}{LLaMA3-8B}} & LoRA & $88.99_{\pm 0.55}$ &	$85.09_{\pm 0.83}$ &	$60.95_{\pm 1.10}$ &	$86.09_{\pm 0.35}$ &	$82.64_{\pm 1.06}$ &	$86.62_{\pm 0.70}$ &	$62.29_{\pm 1.42}$ &	$62.00_{\pm 2.17}$ &	$76.83$ \\
    & ReLoRA & $\smash{\underline{89.20}}_{\pm 0.54}$ &	$85.64_{\pm 0.82}$ &	$60.13_{\pm 1.11}$ &	$85.99_{\pm 0.35}$ &	$\smash{\underline{85.24}}_{\pm 1.00}$ &	$86.95_{\pm 0.69}$ &	$63.14_{\pm 1.39}$ &	$61.80_{\pm 2.19}$ &  $77.26$   \\
    & LoRA-GA & $\mathbf{89.69}_{\pm 0.53}$ &	$84.98_{\pm 0.83}$ &	$61.00_{\pm 0.96}$ &	$\smash{\underline{86.58}}_{\pm 0.96}$ &	$\mathbf{85.32}_{\pm 0.99}$ &	$86.11_{\pm 0.71}$ &	$62.29_{\pm 1.42}$ &	$61.80_{\pm 2.18}$ &	$77.22$ \\
    & MoRA & $88.56_{\pm 0.56}$ &	$\mathbf{86.18}_{\pm 0.81}$ &	$60.29_{\pm 1.11}$ &	$\mathbf{86.69}_{\pm 0.34}$ &	$82.40_{\pm 1.07}$ &	$\mathbf{87.79}_{\pm 0.67}$ &	$64.08_{\pm 1.40}$ &	$62.20_{\pm 2.17}$ &	$77.27$ \\
    & HiRA & $88.87_{\pm 0.55}$ &	$86.07_{\pm 0.81}$ &	$60.64_{\pm 1.11}$ &	$86.11_{\pm 0.35}$ &	$84.53_{\pm 1.02}$ &	$\smash{\underline{87.12}}_{\pm 0.69}$ &	$63.91_{\pm 1.40}$ &	$\mathbf{62.40}_{\pm 2.17}$ &	$77.46$ \\
    & ScaLoRA & $\smash{\underline{89.20}}_{\pm 0.54}$ &	$\mathbf{86.18}_{\pm 0.81}$ &	$\smash{\underline{61.82}}_{\pm 1.10}$ &	$86.51_{\pm 0.34}$ &	$84.53_{\pm 1.02}$ &	$86.57_{\pm 0.70}$ &	$\mathbf{65.61}_{\pm 1.39}$ &	$\mathbf{62.40}_{\pm 2.17}$ &	$\mathbf{77.85}$ \\
    & ScaLoRA-I & $89.14_{\pm 0.54}$ &	$86.07_{\pm 0.81}$ &	$\mathbf{62.33}_{\pm 1.10}$ &	$86.48_{\pm 0.34}$ &	$83.35_{\pm 1.05}$ &	$86.53_{\pm 0.70}$ &	$\smash{\underline{64.68}}_{\pm 0.70}$ &	$62.00_{\pm 0.70}$ &	$\underline{77.57}$ \\
    \cmidrule{2-11}
    & LoRA$_{r=32}$ & $89.69_{\pm 0.53}$ &	$85.47_{\pm 0.82}$ &	$61.72_{\pm 1.10}$ &	$86.76_{\pm 0.34}$ &	$83.35_{\pm 1.05}$ &	$87.08_{\pm 0.69}$ &	$64.08_{\pm 1.40}$ &	$62.20_{\pm 2.17}$ &	$77.54$ \\
   \bottomrule
 \end{tabular}
 \end{small}
\end{table*}

\subsection{Commonsense reasoning}
\label{subsec:reasoning}
Beyond the NLU tasks, further tests are conducted on commonsense reasoning tasks with LLMs including LLaMA2-7B~\citep{llama2} and LLaMA3-8B~\citep{llama3}. With the LLM size growing, computational cost becomes a bottleneck for fine-tuning. Thus, the intermittent variant ScaLoRA-I with $I = 10$ is also included in the test. 
The experimental setups follow from~\citep{PoLAR}, where LLMs are fine-tuned separately on each dataset, and subsequently evaluated for multiple-choice log-likelihood under the widely-adopted \texttt{lm-evaluation-harness} framework~\citep{eval-harness}. To underscore the importance of high-rank updates for challenging tasks, we restrict the fitting capacity of LoRA and its variants by setting $r = 8$ throughout the test. This setup is intended to emulate more challenging scenarios where higher ranks are necessitated to capture the underlying task structure. Compared to the common choice $r=32$, this low-rank configuration leads to consistently degraded performance across all eight tasks. Table~\ref{tab:reasoning} compares ScaLoRA with LoRA~\citep{LoRA}, ReLoRA~\citep{ReLoRA}, LoRA-GA~\citep{LoRA-GA}, MoRA~\citep{MoRA}, and HiRA~\citep{HiRA}. It is observed that ScaLoRA and ScaLoRA-I demonstrate similar performance, both outperforming all other competitors by a significant margin. This verifies our claim that ScaLoRA-I does not distinctly affect the effectiveness when $I$ is small. Further, the performance of ScaLoRA(-I) even surpasses LoRA with a higher rank of $32$, yet incurring less computational overhead.

\begin{table*}[t]
 \setlength{\tabcolsep}{4pt}
 \caption{Rank (number of singular values with magnitudes $\ge 0.005$) and stable rank (srank) of weight updates in LLaMA2-7B with $r=8$. Both Euclidean and intrinsic ranks are shown for HiRA.}
 \label{tab:rank}
 \centering
 \begin{small}
  \begin{tabular}{clcccccccc}
   \toprule
    & \textbf{Method} & \textbf{BoolQ} & \textbf{PIQA} & \textbf{SIQA} & \textbf{HS} & \textbf{WG} & \textbf{ARC-e} & \textbf{ARC-c} & \textbf{OBQA} \\
    \midrule
    \multirow{6}{*}{\rotatebox[origin=c]{90}{Rank}} &   LoRA & $8_{\pm 0}$ &	$8_{\pm 0}$ &	$8_{\pm 0}$ &	$8_{\pm 0}$ &	$8_{\pm 0}$ &	$8_{\pm 0}$ &	$8_{\pm 0}$ &	$8_{\pm 0}$ \\
    & ReLoRA &  $16_{\pm 0}$ &    $16_{\pm 0.1}$  &   $24_{\pm 0.08}$ &   $32_{\pm 0.33}$ &   $32_{\pm 1.07}$ &   $16_{\pm 0.6}$  &   $15_{\pm 0.3}$ &    $36_{\pm 2.32}$  \\
    & HiRA (Eucl.) & $4004_{\pm 217}$ &	$3925_{\pm 319}$ &	$3971_{\pm 291}$ &	$3889_{\pm 344}$ &	$3670_{\pm 497}$ &	$3074_{\pm 875}$ &	$3315_{\pm 721}$ &	$3729_{\pm 462}$  \\
    & HiRA (intr.)  & $8_{\pm 0}$ &	$8_{\pm 0}$ &	$8_{\pm 0}$ &	$8_{\pm 0}$ &	$8_{\pm 0}$ &	$8_{\pm 0}$ &	$8_{\pm 0}$ &	$8_{\pm 0}$  \\
    & ScaLoRA & $3326_{\pm 671}$ &	$3482_{\pm 544}$ &	$3661_{\pm 392}$ &	$3703_{\pm 351}$ &	$3695_{\pm 363}$ &	$2254_{\pm 917}$ &	$1347_{\pm 706}$ &	$3015_{\pm 891}$  \\
    & ScaLoRA-I & $1402_{\pm 656}$ &	$1990_{\pm 843}$ &	$2757_{\pm 910}$ &	$2937_{\pm 880}$ &	$2891_{\pm 912}$ &	$20_{\pm 11}$ &	$20_{\pm 3}$ &	$453_{\pm 265}$ \\
   \midrule
    \multirow{6}{*}{\rotatebox[origin=c]{90}{Srank}} &  LoRA & $2.7_{\pm 0.6}$ &	$1.9_{\pm 0.4}$ &	$1.8_{\pm 0.4}$ &	$2.3_{\pm 0.6}$ &	$1.2_{\pm 0.2}$ &	$1.6_{\pm 0.4}$ &	$1.7_{\pm 0.4}$ &	$1.3_{\pm 0.3}$ \\
    & ReLoRA &  $2.6_{\pm 0.6}$ &    $1.9_{\pm 0.5}$ &    $1.9_{\pm 0.4}$  &   $1.6_{\pm 0.4}$ &   $2.0_{\pm 0.6}$ &   $1.7_{\pm 0.4}$ &   $1.7_{\pm 0.5}$  &   $2.0_{\pm 0.6}$  \\
    & HiRA (Eucl.) & $358.2_{\pm 259.9}$ &	$313.8_{\pm 228.8}$ &	$312.3_{\pm 218.3}$ &	$219.5_{\pm 154.6}$ &	$128.4_{\pm 72.4}$ &	$167.6_{\pm 160.3}$ &	$203.8_{\pm 197.2}$ &	$164.5_{\pm 120.7}$  \\
    & HiRA (intr.)  & $2.9_{\pm 1.5}$ &	$2.4_{\pm 1.4}$ &	$2.5_{\pm 1.3}$ &	$1.9_{\pm 0.9}$ &	$1.5_{\pm 0.6}$ &	$2.5_{\pm 1.4}$ &	$2.0_{\pm 1.5}$ &	$1.7_{\pm 0.7}$  \\
    & ScaLoRA & $4.8_{\pm 1.7}$ &	$3.1_{\pm 0.8}$ &	$3.4_{\pm 0.6}$ &	$4.2_{\pm 1.0}$ &	$2.6_{\pm 0.7}$ &	$2.7_{\pm 0.7}$ &	$1.9_{\pm 0.5}$ &	$2.0_{\pm 0.5}$  \\
    & ScaLoRA-I & $4.6_{\pm 1.5}$ &	$3.0_{\pm 0.8}$ &	$2.6_{\pm 0.7}$ &	$4.2_{\pm 1.0}$ &	$2.3_{\pm 0.6}$ &	$2.6_{\pm 0.6}$ &	$1.9_{\pm 0.5}$ &	$1.9_{\pm 0.5}$ \\
   \bottomrule
 \end{tabular}
 \end{small}
\end{table*}

Moreover, we further investigate the rank of weight update $\bfW_T - \bfW_0$ in LLaMA2-7B under different high-rank adaptation approaches. Following~\cite{ReLoRA, HiRA}, only the singular values whose magnitudes exceed 0.005 are counted. MoRA has been excluded because of its nonlinearity. For HiRA, as its rank update pertains to the low-dimensional manifold $\{ \bf{W}^\mathrm{ft} \mid \bf{W}^\mathrm{ft} = (\bfA \bfB^\top) \odot \bfW^\mathrm{pt} \}$, we report both its Euclidean rank and its intrinsic (latent) rank, where the latter better reflects the geometry induced by its parameterization. The average rank and stable rank $\text{srank} (\cdot) := \| \cdot \|_\mathrm{F}^2 / \| \cdot \|_2^2$ along with their standard deviations across LoRA layers are reported in Table~\ref{tab:rank}. ScaLoRA(-I) yields (e)rank proportional to the size and difficulty of the task. For small datasets such as ARC-e and ARC-c, the limited fine-tuning iterations renders a moderate-rank update, which is nevertheless sufficient to fit the task. In contrast, ReLoRA exhibits markedly lower (e)rank due to its infrequent merging operations. While HiRA consistently produces high Euclidean rank regardless of the dataset size and task difficulty, its intrinsic (e)rank remains low owing to its underlying low-dimensional manifold. Moreover, the srank of ScaLoRA(-I) is significantly higher than other baselines, suggesting that the weight update captures a richer and more diverse subspace of singular directions for task-specific adaptation.

\begin{figure*}[t]
  \centering
  	\includegraphics[width=.85\textwidth]{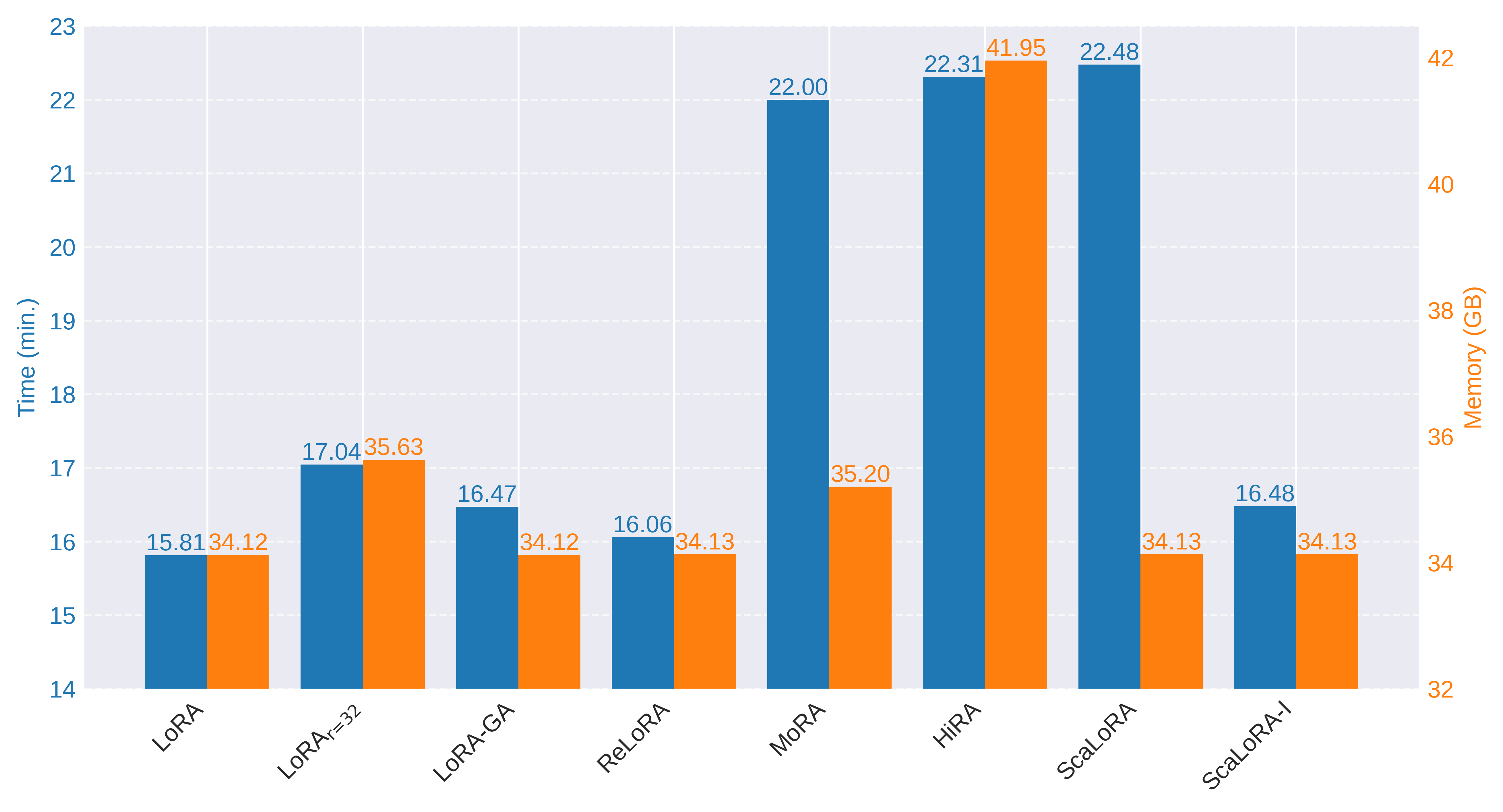}
    \vspace{-.1cm}
  	\caption{Overhead comparison using LLaMA3-8B.}
  	\label{fig:overhead}
\end{figure*}

Next, Figure~\ref{fig:overhead} depicts the fine-tuning time (minutes) and memory cost (GB) of ScaLoRA(-I) with other alternatives, where the vertical axes start from nonzero values for better visual comparison. It is clear that MoRA, HiRA and ScaLoRA necessitate $50\%+$ time compared to LoRA, on par with our analysis in Section~\ref{sec:ScaLoRA}. Moreover, MoRA and HiRA require 1.08 and 7.83 GB extra memory in comparison to LoRA, while ScaLoRA(-I) merely leads to a negligible growth of 0.01 GB. Additionally, ScaLoRA-I showcases superior scalability in both time and space comparable to LoRA-GA and ReLoRA, which add marginally to LoRA with $r=4$, and outperforms LoRA with $r=32$. In practice, an appropriate choice of $I$ can provide a favorable balance between efficiency and convergence. An ablation test on the effect of varying $I$ is presented in Appendix~\ref{app-sec:ablation-I}.

\begin{table}[t]
    \caption{Mathematical problem solving using Gemma-3-12B.}
    \label{tab:math}
    \centering
    \begin{tabular}{lcc}
    \toprule
    Method & GSM8K & MATH \\
    \midrule
    LoRA & $81.20_{\pm 1.08}$ & $37.20_{\pm 0.63}$ \\
    ScaLoRA-I & $\mathbf{82.11}_{\pm 1.06}$ & $\mathbf{37.96}_{\pm 0.64}$ \\
    Scalar-only & $\smash{\underline{81.27}}_{\pm 1.07}$ & $\smash{\underline{37.90}}_{\pm 0.64}$ \\
    \bottomrule
    \end{tabular}
    \vspace{-.1cm}
\end{table} 

\subsection{Mathematical problem solving}
\label{subsec:math}
The next numerical test assesses ScaLoRA on mathematical problem solving tasks, and scales to the larger Gemma-3-12B~\cite{Gemma3} model. The model is fine-tuned on MetaMath~\cite{MetaMath}, a mathematical question answering dataset for LLMs, and evaluated on GSM8K~\cite{GSM8K} and MATH~\cite{MATH} datasets. MoRA and HiRA are omitted due to their limited scalability shown in Figure~\ref{fig:overhead}. Additionally, an ablation study is also included to show the enhanced fitting capacity of column scaling as opposed to scalar scaling. A variant of ScaLoRA-I with scalar scaling only is considered. The results are displayed in Table~\ref{tab:math}, where ScaLoRA-I again outperforms LoRA on both datasets. Moreover, it is also seen that ScaLoRA-I with scalar scaling improves upon LoRA yet underperforms ScaLoRA-I, illustrating the effectiveness of column-wise scaling. Extended ablation study on the scalar-only variant using commonsense reasoning datasets is provided in Appendix~\ref{app-sec:ext-ablation-scalar}.

\section{Concluding remarks}
This paper investigated high-rank updates by gradually accumulating the optimal low-rank increments that minimize the per-step loss. It was argued that this idea faces two challenges, namely prohibitive computation and inefficient optimization. To address them, a novel approach termed ScaLoRA was introduced. By restricting the optimal adapters to the family of matrices whose columns are scaled from the original ones, ScaLoRA allowed for efficient optimization without resetting the gradient moment estimators. Performance guarantees were established respectively for scalar and column-wise scaling to pick out the optimal adapters in analytical form. Numerical tests covering natural language understanding, commonsense reasoning, and mathematical problem solving validated the consistent performance gain and scalability of ScaLoRA(-I).

\section*{Acknowledgements}
This work is partly supported by NSF grants 2212318, 2220292, and 2312547.

\section*{Impact Statement}
This work does not involve human subjects, personal data, or sensitive information. All experiments use publicly available models and datasets, with details provided in the Appendix. The proposed method improves computational efficiency and convergence in LoRA and adheres to ICML’s code of ethics. Nevertheless, caution is advised when applying the method to generative tasks: LLM outputs should be carefully reviewed, and safeguards should be considered to ensure safety, reliability, and trustworthiness.


\bibliography{ref.bib}
\bibliographystyle{icml2026}

\newpage
\appendix
\onecolumn
\section{Missing proofs}
\label{app-sec:proofs}
This section provides the proofs omitted in the main paper. 

\subsection{Proof of Theorem~\ref{thm:full-opt}}
\begin{proof}
For notational simplicity, we will omit the subscript $t$ in the proof, and write $\tilde{\bfA}_t^*, \tilde{\bfB}_t^*$ as $\bfA, \bfB$. 

We first verify the \emph{sufficiency}. For $\bfA, \bfB$ satisfying~\eqref{eq:ABopt-cond}, it follows that
\begin{align}
\label{app-eq:sufficiency}
    -\eta \nabla \ell(\bfW) \bfB \bfB^\top - \eta \bfA \bfA^\top \nabla \ell(\bfW) 
    &= -\frac{1}{L} \nabla \ell (\bfW) \bfV_{\mathcal{B}} \bfQ \bfQ^\top \bfV_{\mathcal{B}}^\top - \frac{1}{L} \bfU_{\mathcal{A}} \bfP \bfP^\top \bfU_{\mathcal{A}}^\top \nabla \ell (\bfW) \nonumber \\
    &= -\frac{1}{L} \nabla \ell (\bfW) \bfV_{\mathcal{B}} \bfV_{\mathcal{B}}^\top - \frac{1}{L} \bfU_{\mathcal{A}} \bfU_{\mathcal{A}}^\top \nabla \ell (\bfW) \nonumber \\
    &\overset{(a)}{=} -\frac{1}{L} \bfU \bfSigma_{\mathcal{B}} \bfV_{\mathcal{B}}^\top - \frac{1}{L} \bfU_{\mathcal{A}} \bfSigma_{\mathcal{A},:} \bfV^\top \nonumber \\
    &= -\frac{1}{L} \sum_{i \in \mathcal{B}} \sigma_i \bfu_i \bfv_i^\top - \frac{1}{L} \sum_{i \in \mathcal{A}} \sigma_i \bfu_i \bfv_i^\top \nonumber \\
    &= -\frac{1}{L} \sum_{i=1}^{2r} \sigma_i \bfu_i \bfv_i^\top
\end{align}
where $(a)$ relies on the SVD $\nabla \ell (\bfW) = \bfU \bfSigma \bfV^\top$, and $\bfu_i, \bfv_i$ are the $i$-th columns of $\bfU, \bfV$. 

Using the fact that $\rank (\nabla \ell(\bfW) \bfB \bfB^\top) \le r$ and $\rank (\bfA \bfA^\top \nabla \ell(\bfW)) \le r$, it holds
\begin{equation}
\label{app-eq:rank-upper-bound}
    \rank(\eta \nabla \ell(\bfW) \bfB \bfB^\top + \eta \bfA \bfA^\top \nabla \ell(\bfW)) \le r + r = 2r. 
\end{equation}

By Eckart–Young–Mirsky theorem~\citep{low-rank-approx}, it turns out that~\eqref{app-eq:sufficiency} is the optimal rank-$2r$ approximation to $\frac{1}{L} \nabla \ell (\bfW)$ that minimizes~\eqref{eq:obj-full}. 

Next we show the \emph{necessity}. For notational compactness, define $\mathcal{I} := \{ 1,\ldots,2r \}$. Again by Eckart–Young–Mirsky theorem~\citep{low-rank-approx}, the optimal rank-$2r$ approximation to $\frac{1}{L} \nabla \ell (\bfW)$ should satisfy
\begin{equation}
\label{app-eq:rank-2r-approx}
    \nabla \ell(\bfW) \bfB \bfB^\top + \bfA \bfA^\top \nabla \ell(\bfW) = \frac{1}{L\eta} \bfU_\mathcal{I} \bfSigma_{\mathcal{I},\mathcal{I}} \bfV_\mathcal{I}^\top. 
\end{equation}

To achieve this rank-$2r$ approximation,~\eqref{app-eq:rank-upper-bound} suggests that we must have
\begin{equation*}
    \rank(\nabla \ell(\bfW) \bfB \bfB^\top) = \rank(\bfA \bfA^\top \nabla \ell(\bfW)) = r. 
\end{equation*}

Additionally, since
\begin{align*}
    \rank(\nabla \ell(\bfW) \bfB \bfB^\top + \bfA \bfA^\top \nabla \ell(\bfW)) 
    &= \rank(\frac{1}{L\eta} \bfU_\mathcal{I} [\bfSigma]_{\mathcal{I},\mathcal{I}} \bfV_\mathcal{I}^\top)  \\
    &= \rank(\nabla \ell(\bfW) \bfB \bfB^\top) + \rank(\bfA \bfA^\top \nabla \ell(\bfW)),
\end{align*}
it must hold
\begin{subequations}
\begin{align}
\label{app-eq:col-intersect}
    \Col(\nabla \ell(\bfW) \bfB \bfB^\top) \cap \Col(\bfA \bfA^\top \nabla \ell(\bfW)) &= \{ 0 \} \\
\label{app-eq:col-sum}
    \Col(\nabla \ell(\bfW) \bfB \bfB^\top) \oplus \Col(\bfA \bfA^\top \nabla \ell(\bfW)) &= \Col(\bfU_\mathcal{I} [\bfSigma]_{\mathcal{I},\mathcal{I}} \bfV_\mathcal{I}^\top) = \Col(\bfU_\mathcal{I}) \\
\label{app-eq:row-intersect}
    \Row(\nabla \ell(\bfW) \bfB \bfB^\top) \cap \Row(\bfA \bfA^\top \nabla \ell(\bfW)) &= \{ 0 \} \\
\label{app-eq:row-sum}
    \Row(\nabla \ell(\bfW) \bfB \bfB^\top) \oplus \Row(\bfA \bfA^\top \nabla \ell(\bfW)) &= \Row(\bfU_\mathcal{I} [\bfSigma]_{\mathcal{I},\mathcal{I}} \bfV_\mathcal{I}^\top) = \Row(\bfV_\mathcal{I}^\top).
\end{align}
\end{subequations}
In other words, the two terms $\nabla \ell(\bfW) \bfB \bfB^\top$ and $\bfA \bfA^\top \nabla \ell(\bfW)$ splits the $2r$-dimensional column and row spaces of $\bfU_\mathcal{I} [\bfSigma]_{\mathcal{I},\mathcal{I}} \bfV_\mathcal{I}^\top$ into two $r$-dimensional subspaces. 

Moreover, because $r = \rank(\bfA \bfA^\top \nabla \ell(\bfW)) \le \rank(\bfA) \le r$, it follows that $\rank(\bfA) = r$. Thus we obtain $\Col(\bfA \bfA^\top \nabla \ell(\bfW)) = \Col(\bfA)$. Then,~\eqref{app-eq:rank-2r-approx},~\eqref{app-eq:col-intersect} and~\eqref{app-eq:col-sum} imply that, the two terms $\nabla \ell(\bfW) \bfB \bfB^\top$ and $\bfA \bfA^\top \nabla \ell(\bfW)$ are respectively the orthogonal projections of $\frac{1}{L\eta} \bfU_\mathcal{I} [\bfSigma]_{\mathcal{I},\mathcal{I}} \bfV_\mathcal{I}^\top$ onto the disjoint subspaces $\Col(\nabla \ell(\bfW) \bfB \bfB^\top)$ and $\Col(\bfA \bfA^\top \nabla \ell(\bfW)) = \Col(\bfA)$. To be specific, defining projection matrix $\bfP_\bfA := \bfA (\bfA^\top \bfA)^{-1} \bfA^\top$, we have
\begin{equation*}
    \bfA \bfA^{\top} \nabla \ell(\bfW) = \bfP_\bfA \frac{1}{L\eta} \bfU_\mathcal{I} \bfSigma_{\mathcal{I},\mathcal{I}} \bfV_\mathcal{I}^\top \overset{(a)}{=} \frac{1}{L\eta} \bfP_\bfA \bfP_{\bfU_{\mathcal{I}}} \nabla \ell(\bfW) \overset{(b)}{=} \frac{1}{L\eta} \bfP_\bfA \nabla \ell(\bfW)
\end{equation*}
where $(a)$ utilizes $\bfU_\mathcal{I} \bfSigma_{\mathcal{I},\mathcal{I}} \bfV_\mathcal{I}^\top = \bfU_\mathcal{I} \bfU_\mathcal{I}^\top \nabla \ell (\bfW) = \bfP_{\bfU_\mathcal{I}} \nabla \ell (\bfW)$, and $(b)$ leverages $\Col(\bfA) \subset \Col(\bfU_\mathcal{I})$ so that $\bfP_\bfA \bfP_{\bfU_{\mathcal{I}}} = \bfP_\bfA$. 

Left-multiplying both sides by $\bfA^\top$ leads to
\begin{equation*}
    0 = \bfA^\top \bfA \bfA^{\top} \nabla \ell(\bfW) - \frac{1}{L\eta} \bfA^\top \nabla \ell(\bfW) = (\bfA^\top \bfA - \frac{1}{L\eta}\bfI_r) \bfA^{\top} \nabla \ell(\bfW). 
\end{equation*}
Given that $\bfA^{\top} \nabla \ell(\bfW)$ has full row rank $r$, we must have $\bfA^\top \bfA - \frac{1}{L\eta}\bfI_r = 0$. That says, $\sqrt{L\eta} \bfA$ has orthonormal columns, and hence $\bfP_\bfA = L\eta \bfA \bfA^\top$. Similarly, using~\eqref{app-eq:row-intersect} and~\eqref{app-eq:row-sum}, we acquire that $\bfB$ also has orthonormal columns, and $\bfP_\bfB = L\eta \bfB \bfB^\top$. 

Now left-multiplying $\bfU_\mathcal{I}^\top$ and right-multiplying $\bfV_\mathcal{I}$ on both sides of~\eqref{app-eq:rank-2r-approx} result in
\begin{equation}
\label{app-eq:diag-complement}
    \bfSigma_\mathcal{I} \bfV_\mathcal{I}^\top \bfB \bfB^\top \bfV_\mathcal{I} + \bfU_\mathcal{I}^\top \bfA \bfA^\top \bfU_\mathcal{I} \bfSigma_\mathcal{I} = \frac{1}{L\eta} \bfSigma_\mathcal{I}.
\end{equation}

We next prove that $\bfV_\mathcal{I}^\top \bfB \bfB^\top \bfV_\mathcal{I}$ and $\bfU_\mathcal{I}^\top \bfA \bfA^\top \bfU_\mathcal{I}$ are both diagonal. Without loss of generality, assume the $\sigma_i \ne \sigma_j, i\ne j,\;\forall i,j \in \mathcal{I}$. Otherwise, the rank-$2r$ SVD is not unique, and one can always rotate the axes of $\bfU_\mathcal{I}$ and $\bfV_\mathcal{I}$ to align with $\bfA$ and $\bfB$. By the relationship, the non-diagonal elements satisfy for $\forall i,j\in \mathcal{I}$ and $i\ne j$
\begin{align*}
    \sigma_i [\bfV_\mathcal{I}^\top \bfB \bfB^\top \bfV_\mathcal{I}]_{ij} + [\bfU_\mathcal{I}^\top \bfA \bfA^\top \bfU_\mathcal{I}]_{ij} \sigma_j &= \frac{1}{L\eta} [\bfSigma_\mathcal{I}]_{ij} = 0 \\
    \sigma_j [\bfV_\mathcal{I}^\top \bfB \bfB^\top \bfV_\mathcal{I}]_{ij} + [\bfU_\mathcal{I}^\top \bfA \bfA^\top \bfU_\mathcal{I}]_{ij} \sigma_i &= \frac{1}{L\eta} [\bfSigma_\mathcal{I}]_{ji} = 0
\end{align*}
Solving for $[\bfV_\mathcal{I}^\top \bfB \bfB^\top \bfV_\mathcal{I}]_{ij}$ and $[\bfU_\mathcal{I}^\top \bfA \bfA^\top \bfU_\mathcal{I}]_{ij}$, we obtain
\begin{equation*}
    (\sigma_i^2 - \sigma_j^2) [\bfV_\mathcal{I}^\top \bfB \bfB^\top \bfV_\mathcal{I}]_{ij}  = 0,~~
    (\sigma_j^2 - \sigma_i^2) [\bfU_\mathcal{I}^\top \bfA \bfA^\top \bfU_\mathcal{I}]_{ij} = 0.
\end{equation*}
This demonstrates $[\bfV_\mathcal{I}^\top \bfB \bfB^\top \bfV_\mathcal{I}]_{ij} = [\bfU_\mathcal{I}^\top \bfA \bfA^\top \bfU_\mathcal{I}]_{ij} = 0$, so$\bfV_\mathcal{I}^\top \bfB \bfB^\top \bfV_\mathcal{I}$ and $\bfU_\mathcal{I}^\top \bfA \bfA^\top \bfU_\mathcal{I}$ are diagonal. 

Then, recall that $\sqrt{L\eta} \bfA$ has orthonormal columns, so
\begin{equation*}
    (L\eta \bfU_\mathcal{I}^\top \bfA \bfA^\top \bfU_\mathcal{I})^2 = L\eta \bfU_\mathcal{I}^\top \bfA \bfA^\top \bfU_\mathcal{I}. 
\end{equation*}
As the diagonal matrix $\bfU_\mathcal{I}^\top \bfA \bfA^\top \bfU_\mathcal{I}$ is symmetric positive semi-definite, its diagnoal elements satisfy
\begin{equation*}
    [L\eta \bfU_\mathcal{I}^\top \bfA \bfA^\top \bfU_\mathcal{I}]_{ii}^2 = [L\eta \bfU_\mathcal{I}^\top \bfA \bfA^\top \bfU_\mathcal{I}]_{ii} \ge 0 ~~\Rightarrow ~~[\bfU_\mathcal{I}^\top \bfA \bfA^\top \bfU_\mathcal{I}]_{ii} = 0 \text{ or } \frac{1}{L\eta}.
\end{equation*}
Likewise we also have $[\bfV_\mathcal{I}^\top \bfB \bfB^\top \bfV_\mathcal{I}]_{ii} = 0 \text{ or } \frac{1}{L\eta}$.

Defining $\mathcal{A}:=\{ i \mid [\bfU_\mathcal{I}^\top \bfA \bfA^\top \bfU_\mathcal{I}]_{ii} = 1 / (L\eta) \}$ and $\mathcal{B}:=\{ i \mid [\bfV_\mathcal{I}^\top \bfB \bfB^\top \bfV_\mathcal{I}]_{ii} = 1 / (L\eta) \}$, it follows from~\eqref{app-eq:diag-complement} that
\begin{equation*}
    |\mathcal{A}| = |\mathcal{B}| = r, ~\mathcal{A} \cup \mathcal{B} = \mathcal{I}. 
\end{equation*}

As a result, it holds
\begin{equation*}
    \bfU_\mathcal{I}^\top \bfA \bfA^\top \bfU_\mathcal{I} = \frac{1}{L\eta} \sum_{i \in \mathcal{A}} \bfe_i \bfe_i^\top ~~\Rightarrow ~~ (\bfU_\mathcal{I} \bfU_\mathcal{I}^\top) \bfA \bfA^\top (\bfU_\mathcal{I} \bfU_\mathcal{I}^\top) =  \frac{1}{L\eta} \sum_{i \in \mathcal{A}} \bfu_i \bfu_i^\top = \frac{1}{L\eta} \bfU_\mathcal{A} \bfU_\mathcal{A}^\top
\end{equation*}
where $\bfe_i$ is the $i$-th column of the identity matrix $\bfI_{2r}$. 

Notice that $\bfU_\mathcal{I} \bfU_\mathcal{I}^\top = \bfP_{\bfU_\mathcal{I}}$, and $\Col(\bfA) \subset \Col(\bfU_\mathcal{I})$. It follows
\begin{equation*}
    (\bfU_\mathcal{I} \bfU_\mathcal{I}^\top) \bfA \bfA^\top (\bfU_\mathcal{I} \bfU_\mathcal{I}^\top) = \bfA \bfA^\top = \frac{1}{L\eta} \bfU_\mathcal{A} \bfU_\mathcal{A}^\top. 
\end{equation*}
Using the fact that $L\eta \bfA^\top \bfA = \bfI_r$ and $\Col(\bfA) = \Col(\bfU_\mathcal{A})$, we acquire
\begin{equation*}
    \bfA = \frac{1}{\sqrt{L\eta}} \bfU_\mathcal{A} \bfP, ~\bfP \in \orth(r)
\end{equation*}
and similarly 
\begin{equation*}
    \bfB = \frac{1}{\sqrt{L\eta}} \bfV_\mathcal{B} \bfQ, ~\bfQ \in \orth(r)
\end{equation*}
which concludes the proof.
\end{proof}

\subsection{Proof of Theorem~\ref{thm:scalar}}
\label{app-subsec:proof-thm-scalar}

\begin{theorem}[Formal restatement]
\label{app-thm:scalar}
With Assumptions~\ref{as:Lip-smooth}-\ref{as:nonzero} in effect, the global minimizer of~\eqref{eq:obj-scalar} is given by
\begin{equation*}
(\alpha_t^*, \beta_t^*) = 
    \begin{cases}
    \big( \pm \frac{\| \bfA_t^\top \nabla \ell(\bfW_t) \|_\mathrm{F}}{\sqrt{L\eta} \| \bfA_t \bfA_t^\top \nabla \ell(\bfW_t) \|_\mathrm{F}} , 0 \big), &\text{if } C_t^A > 0 \text{ and } C_t^B \le 0 \text{, or } C_t = 0 \text{ and } \bfA_t \ne \mathbf{0} \\
    \big( 0, \pm\frac{\| \nabla \ell(\bfW_t) \bfB_t \|_\mathrm{F}}{\sqrt{L\eta} \| \nabla \ell(\bfW_t) \bfB_t \bfB_t^\top \|_\mathrm{F}} \big), &\text{if } C_t^A \le 0 \text{ and } C_t^B > 0 \text{, or } C_t = 0 \text{ and } \bfB_t \ne \mathbf{0} \\
    \big(\pm \sqrt{\frac{C_t^A}{L\eta C_t}}, \pm \sqrt{\frac{C_t^B}{L\eta C_t}} \big), &\text{if } C_t^A \ge 0, ~C_t^B \ge 0 \text{ and } C_t > 0
    \end{cases}
\end{equation*}
where we define
\begin{align*}
    C_t^A &:= \| \bfA_t^\top \nabla \ell(\bfW_t) \|_\mathrm{F}^2 \| \nabla \ell(\bfW_t) \bfB_t \bfB_t^\top \|_\mathrm{F}^2 - \| \nabla \ell(\bfW_t) \bfB_t \|_\mathrm{F}^2 \| \bfA_t^\top \nabla \ell(\bfW_t) \bfB_t \|_\mathrm{F}^2, \\
    C_t^B &:= \| \nabla \ell(\bfW_t) \bfB_t \|_\mathrm{F}^2 \| \bfA_t \bfA_t^\top \nabla \ell(\bfW_t) \|_\mathrm{F}^2 - \| \bfA_t^\top \nabla \ell(\bfW_t) \|_\mathrm{F}^2 \| \bfA_t^\top \nabla \ell(\bfW_t) \bfB_t \|_\mathrm{F}^2, \\
    C_t &:= \| \bfA_t \bfA_t^\top \nabla \ell(\bfW_t) \|_\mathrm{F}^2 \| \nabla \ell(\bfW_t) \bfB_t \bfB_t^\top \|_\mathrm{F}^2 - \| \bfA_t^\top \nabla \ell(\bfW_t) \bfB_t \|_\mathrm{F}^4.
\end{align*}
\end{theorem}

\begin{proof}
As before, the subscript $t$ will be omitted in the proof for simplicity. First notice that when $\bfA \bfA^\top \nabla \ell (\bfW) \ne 0$ and $\alpha \to \infty$, or $\nabla \ell (\bfW) \bfB \bfB^\top \ne 0$ and $\beta \to \infty$, the objective value~\eqref{eq:obj-scalar} goes unbounded to $+\infty$. Additionally, if $\bfA \bfA^\top \nabla \ell (\bfW) = 0$ (or $\nabla \ell (\bfW) \bfB \bfB^\top = 0$), changing $\alpha$ (or $\beta$) has no impact on the objective value. By Assumption~\ref{as:nonzero} and Lemma~\ref{lemma:AG-AAG}, at least one of $\bfA \bfA^\top \nabla \ell (\bfW)$ and $\nabla \ell (\bfW) \bfB \bfB^\top$ is nonzero. As a the objective~\eqref{eq:obj-scalar} is a continuous function of $\alpha$ and $\beta$ in $\real^2$, there must be some global minimum achieved in the interior of $\real^2$. Therefore, we can examine the stationary points of the objective. 

The first-order stationary point condition yields
\begin{subequations}
\label{app-eq:scalar-stationary}
\begin{align}
    \alpha^* \Big( \alpha^{*2} \| \bfA \bfA^\top \nabla \ell(\bfW) \|_\mathrm{F}^2 - \langle \bfA \bfA^\top \nabla \ell(\bfW), \frac{1}{L\eta} \nabla \ell(\bfW) - \beta^{*2} \nabla \ell(\bfW) \bfB \bfB^\top \rangle_\mathrm{F} \Big) &= 0, \\
    \beta^* \Big( \beta^{*2} \| \nabla \ell(\bfW) \bfB \bfB^\top \|_\mathrm{F}^2 - \langle \nabla \ell(\bfW) \bfB \bfB^\top, \frac{1}{L\eta} \nabla \ell(\bfW) - \alpha^{*2} \bfA \bfA^\top \nabla \ell(\bfW) \rangle_\mathrm{F} \Big) &= 0.
\end{align}
\end{subequations}
These two equations offers nine stationary points, which are investigated in the following. 

We next show that the trivial stationary point $(\alpha, \beta) = (0, 0)$ must not be a local minimum. Plugging $\alpha = 0$ and $\beta = 0$ into~\eqref{eq:obj-scalar} leads to objective value of $\| \nabla \ell(\bfW) \|_\mathrm{F}^2 / 2L$. By assumption~\ref{as:nonzero}, at lease one of $\| \bfA^\top \nabla \ell(\bfW) \|_\mathrm{F}$ and $\| \nabla \ell(\bfW) \bfB \|_\mathrm{F}$ should be nonzero. Without loss of generality, assume $\| \bfA^\top \nabla \ell(\bfW) \|_\mathrm{F} > 0$. Taking $\beta = 0$ and $0 < \alpha < 2/(\sqrt{L\eta} \| \bfA \|_2)$, the objective~\eqref{eq:obj-scalar} is upper bounded by
\begin{align*}
\frac{L}{2} \Big\| \frac{1}{L} \nabla \ell(\bfW) -\eta \beta^2 \nabla \ell(\bfW) \bfB \bfB^\top - \eta \alpha^2 \bfA \bfA^\top \nabla \ell(\bfW) \Big\|_\mathrm{F}^2 
&\le \frac{L}{2} \| \nabla \ell(\bfW) \|_\mathrm{F}^2 \Big\| \frac{1}{L} \bfI_m - \eta \alpha^2 \bfA \bfA^\top \Big\|_2^2 \\
&< \frac{L}{2} \| \nabla \ell(\bfW) \|_\mathrm{F}^2.
\end{align*}
This demonstrates $(\alpha, \beta) = (0, 0)$ must not be a local minimum. Therefore, at lease one of $|\alpha^*|$ and $|\beta^*|$ should be strictly positive.

To determine whether $|\alpha^*|$ and $|\beta^*|$ are strictly positive or zeros, we consider the following four cases.

\vspace{.2cm}\textbf{Case 1:} $C^A > 0$ and $C^B \le 0$. \\[.2cm]
We first rewrite the objective~\eqref{eq:obj-scalar} as a quadratic function of $a^2 \ge 0$ via
\begin{align*}
&\Big\| \frac{1}{L} \nabla \ell(\bfW) -\eta \beta^2 \nabla \ell(\bfW) \bfB \bfB^\top - \eta \alpha^2 \bfA \bfA^\top \nabla \ell(\bfW) \Big\|_\mathrm{F}^2 \\
&= \eta^2 \| \bfA \bfA^\top \nabla \ell(\bfW) \|_\mathrm{F}^2 \alpha^4 - 2\eta \langle \bfA \bfA^\top \nabla \ell(\bfW), \frac{1}{L} \nabla \ell(\bfW) -\eta \beta^2 \nabla \ell(\bfW) \bfB \bfB^\top \rangle_\mathrm{F} \alpha^2 + \Const \\
&= \eta^2 \| \bfA \bfA^\top \nabla \ell(\bfW) \|_\mathrm{F}^2 \alpha^4 - 2\eta \Big( \frac{1}{L} \| \bfA^\top \nabla \ell(\bfW) \|_\mathrm{F}^2 - \eta \beta^2 \| \bfA^\top \nabla \ell(\bfW) \bfB \|_\mathrm{F}^2 \Big) \alpha^2 + \Const
\end{align*}
which attains its minimal value at
\begin{equation}
\label{app-eq:optimal-at2}
    \alpha^{*2} = \max \bigg\{0, \frac{\frac{1}{L\eta} \| \bfA^\top \nabla \ell(\bfW) \|_\mathrm{F}^2 - \beta^{*2} \| \bfA^\top \nabla \ell(\bfW) \bfB \|_\mathrm{F}^2}{\| \bfA \bfA^\top \nabla \ell(\bfW) \|_\mathrm{F}^2} \bigg\}
\end{equation}

Using $C^A > 0$, we next show that $\alpha^* = 0$ leads to a contradiction, and thus $|\alpha^*|$ must be strictly positive. 

Note that $C^A > 0$ indicates $\bfA^\top \nabla \ell(\bfW) \ne \mathbf{0}$ and $\nabla \ell(\bfW) \bfB \ne \mathbf{0}$; otherwise $C^A = 0$ by its definition. By Lemma~\ref{lemma:AG-AAG}, it follows that $\| \bfA \bfA^\top \nabla \ell(\bfW) \|_\mathrm{F} > 0$ and $\| \nabla \ell(\bfW) \bfB \bfB^\top \|_\mathrm{F} > 0$. 
If $\alpha^* = 0$, from the previous discussions we must have $|\beta^*| > 0$. However, applying $\alpha^* = 0$ and $|\beta^*| > 0$ to~\eqref{app-eq:scalar-stationary} renders
\begin{equation*}
    \beta^{*2} = \frac{\langle \nabla \ell(\bfW) \bfB \bfB^\top, \frac{1}{L\eta} \nabla \ell(\bfW) \rangle_\mathrm{F}}{\| \nabla \ell(\bfW) \bfB \bfB^\top \|_\mathrm{F}^2} = \frac{\| \nabla \ell(\bfW) \bfB \|_\mathrm{F}^2}{L\eta \| \nabla \ell(\bfW) \bfB \bfB^\top \|_\mathrm{F}^2}. 
\end{equation*}
As a result,~\eqref{app-eq:optimal-at2} reduces to
\begin{align*}
    \alpha^{*2} &= \max \bigg\{0, \frac{\| \bfA^\top \nabla \ell(\bfW) \|_\mathrm{F}^2 - \frac{\| \nabla \ell(\bfW) \bfB \|_\mathrm{F}^2}{\| \nabla \ell(\bfW) \bfB \bfB^\top \|_\mathrm{F}^2} \| \bfA^\top \nabla \ell(\bfW) \bfB \|_\mathrm{F}^2}{L\eta \| \bfA \bfA^\top \nabla \ell(\bfW) \|_\mathrm{F}^2} \bigg\} \\
    &= \max \bigg\{ 0, \frac{C^A}{L\eta \| \bfA \bfA^\top \nabla \ell(\bfW) \|_\mathrm{F}^2 \| \nabla \ell(\bfW) \bfB \bfB^\top \|_\mathrm{F}^2} \bigg\} \\
    &= \frac{C^A}{L\eta \| \bfA \bfA^\top \nabla \ell(\bfW) \|_\mathrm{F}^2 \| \nabla \ell(\bfW) \bfB \bfB^\top \|_\mathrm{F}^2} > 0
\end{align*}
This contradicts the assumption $\alpha^* = 0$, and thus we must have $|\alpha^*| > 0$. 

Next, we show that $C^B \le 0$ leads to $\beta^* = 0$. Assuming $|\beta^*|$ is also strictly positive, solving~\eqref{app-eq:scalar-stationary} results in
\begin{align*}
    L\eta C \alpha^{*2} = C^A > 0, ~~L\eta C \beta^{*2} = C^B \le 0
\end{align*}
which contradicts $|\alpha^*|, |\beta^*| > 0$. 

To this end, it must hold $|\alpha^*| > 0, ~\beta^* = 0$. Combining this with~\eqref{app-eq:scalar-stationary} yields the solution
\begin{equation}
\label{app-eq:case-1-sol}
    \alpha^{*2} = \frac{\| \bfA^\top \nabla \ell(\bfW) \|_\mathrm{F}^2}{L\eta \| \bfA \bfA^\top \nabla \ell(\bfW) \|_\mathrm{F}^2}, ~~\beta^* = 0.
\end{equation}

\vspace{.2cm}\textbf{Case 2:} $C^A \le 0$ and $C^B > 0$. \\[.2cm]
The analysis is akin to Case 1. 

\vspace{.2cm}\textbf{Case 3:} $C = 0$. \\[.2cm]
By Assumption~\ref{as:nonzero}, at least one of $\bfA$ and $\bfB$ should be non-zero. Assume $\bfA \ne 0$ for simplicity, while similar derivation applies to $\bfB \ne 0$. 

Using Cauchy-Schwarz inequality, it follows 
\begin{align*}
    C &= \| \bfA \bfA^\top \nabla \ell(\bfW) \|_\mathrm{F}^2 \| \nabla \ell(\bfW) \bfB \bfB^\top \|_\mathrm{F}^2 - \| \bfA^\top \nabla \ell(\bfW) \bfB \|_\mathrm{F}^4 \\
    &= \| \bfA \bfA^\top \nabla \ell(\bfW) \|_\mathrm{F}^2 \| \nabla \ell(\bfW) \bfB \bfB^\top \|_\mathrm{F}^2 - \langle \bfA \bfA^\top \nabla \ell(\bfW),  \nabla \ell(\bfW) \bfB \bfB^\top \rangle_\mathrm{F}^2 \ge 0
\end{align*}
where the equality holds if and only if $\nabla \ell(\bfW) \bfB \bfB^\top = \xi \bfA \bfA^\top \nabla \ell(\bfW)$ for some constant $\xi \in \real$. 

If $\xi = 0$, solving~\eqref{app-eq:scalar-stationary} with $\nabla \ell(\bfW) \bfB \bfB^\top = 0$ gives~\eqref{app-eq:case-1-sol}. 

If $\xi \ne 0$, substituting $\nabla \ell(\bfW) \bfB \bfB^\top = \xi \bfA \bfA^\top \nabla \ell(\bfW)$ in~\eqref{app-eq:scalar-stationary} leads to 
\begin{align*}
    \alpha^* \Big( (\alpha^{*2} + \xi \beta^{*2}) \| \bfA \bfA^\top \nabla \ell(\bfW) \|_\mathrm{F}^2 - \frac{1}{L\eta} \| \bfA^\top \nabla \ell(\bfW)\|_\mathrm{F}^2 \Big) &= 0, \\
    \beta^* \Big( (\alpha^{*2} + \xi \beta^{*2}) \| \bfA \bfA^\top \nabla \ell(\bfW) \|_\mathrm{F}^2 - \frac{1}{L\eta} \| \bfA^\top \nabla \ell(\bfW)\|_\mathrm{F}^2 \Big) &= 0.
\end{align*}
As $(\alpha, \beta) = (0, 0)$ has been shown non-optimal, it must holds
\begin{equation}
\label{app-eq:scalar-deficient}
    \alpha^{*2} + \xi \beta^{*2} = \frac{\| \bfA^\top \nabla \ell(\bfW)\|_\mathrm{F}^2}{L\eta \| \bfA \bfA^\top \nabla \ell(\bfW) \|_\mathrm{F}^2}. 
\end{equation}
This relationship and $\nabla \ell(\bfW) \bfB \bfB^\top = \xi \bfA \bfA^\top \nabla \ell(\bfW)$ renders objective value
\begin{align*}
    &\frac{L}{2} \Big\| \frac{1}{L} \nabla \ell(\bfW) - \eta \beta^{*2} \nabla \ell(\bfW) \bfB \bfB^\top - \eta \alpha^{*2} \bfA \bfA^\top \nabla \ell(\bfW) \Big\|_\mathrm{F}^2 \\
    &= \frac{1}{2L} \Big\| \nabla \ell(\bfW) - \frac{\| \bfA^\top \nabla \ell(\bfW)\|_\mathrm{F}^2}{\| \bfA \bfA^\top \nabla \ell(\bfW) \|_\mathrm{F}^2} \bfA \bfA^\top \nabla \ell(\bfW) \Big\|_\mathrm{F}^2 = \frac{1}{2L} \Big( \| \nabla \ell(\bfW) \|_\mathrm{F}^2 - \frac{\| \bfA^\top \nabla \ell(\bfW)\|_\mathrm{F}^4}{\| \bfA \bfA^\top \nabla \ell(\bfW) \|_\mathrm{F}^2} \Big)
\end{align*}
which is a constant independent of $\alpha^{*2}$ and $\beta^{*2}$. In other words, the optimal is achieved as if~\eqref{app-eq:scalar-deficient} is satisfied. One of such choices is simply~\eqref{app-eq:case-1-sol}. 

Likewise, if $\bfB \ne 0$, a valid choice is
\begin{equation*}
    \alpha^* = 0, ~~\beta^{*2} = \frac{\| \nabla \ell(\bfW) \bfB \|_\mathrm{F}^2}{L\eta \| \nabla \ell(\bfW) \bfB \bfB^\top \|_\mathrm{F}^2}.
\end{equation*}

\vspace{.2cm}\textbf{Case 4:} $C^A \ge 0$, $C^B \ge 0$ and $C > 0$. \\[.2cm]
We first prove that $C^A = C^B = 0$ is impossible when $C > 0$. Assuming $C^A = C^B = 0$, it follows from their definitions that
\begin{align*}
    \| \bfA^\top \nabla \ell(\bfW) \|_\mathrm{F}^2 \| \nabla \ell(\bfW) \bfB \bfB^\top \|_\mathrm{F}^2 &= \| \nabla \ell(\bfW) \bfB \|_\mathrm{F}^2 \| \bfA^\top \nabla \ell(\bfW) \bfB \|_\mathrm{F}^2, \\
    \| \nabla \ell(\bfW) \bfB \|_\mathrm{F}^2 \| \bfA \bfA^\top \nabla \ell(\bfW) \|_\mathrm{F}^2 &= \| \bfA^\top \nabla \ell(\bfW) \|_\mathrm{F}^2 \| \bfA^\top \nabla \ell(\bfW) \bfB \|_\mathrm{F}^2
\end{align*}
Multiplying the two equations on both sides and rearranging the terms yield
\begin{equation*}
    \| \bfA^\top \nabla \ell(\bfW) \|_\mathrm{F}^2 \| \nabla \ell(\bfW) \bfB \|_\mathrm{F}^2 \big( \| \bfA \bfA^\top \nabla \ell(\bfW) \|_\mathrm{F}^2 \| \nabla \ell(\bfW) \bfB \bfB^\top \|_\mathrm{F}^2 - \| \bfA^\top \nabla \ell(\bfW) \bfB \|_\mathrm{F}^4 \big) = 0.
\end{equation*}
As $C = \| \bfA \bfA^\top \nabla \ell(\bfW) \|_\mathrm{F}^2 \| \nabla \ell(\bfW) \bfB \bfB^\top \|_\mathrm{F}^2 - \| \bfA^\top \nabla \ell(\bfW) \bfB \|_\mathrm{F}^4 > 0$, we must have either $\| \bfA^\top \nabla \ell(\bfW) \|_\mathrm{F}^2 = 0$ or $\| \nabla \ell(\bfW) \bfB \|_\mathrm{F}^2 = 0$. However, both cases lead to $C = 0$, thus deriving a contradiction. 

Now assume $C^A > 0$ without loss of generality, which leads to $|\alpha^*| > 0$ as proved in Case 1. Next, applying~\eqref{app-eq:optimal-at2} into the objective~\eqref{eq:obj-scalar} and reformulating it as a quadratic function of $\beta^{*2}$ causes
\begin{align*}
    &\Big\| \frac{1}{L} \nabla \ell(\bfW) -\eta \beta^2 \nabla \ell(\bfW) \bfB \bfB^\top - \eta \alpha^{*2} \bfA \bfA^\top \nabla \ell(\bfW) \Big\|_\mathrm{F}^2 \\
    &= \Big\| \frac{1}{L} \nabla \ell(\bfW) -\eta \beta^2 \nabla \ell(\bfW) \bfB \bfB^\top - \eta \frac{\frac{1}{L\eta} \| \bfA^\top \nabla \ell(\bfW) \|_\mathrm{F}^2 - \beta^2 \| \bfA^\top \nabla \ell(\bfW) \bfB \|_\mathrm{F}^2}{\| \bfA \bfA^\top \nabla \ell(\bfW) \|_\mathrm{F}^2} \bfA \bfA^\top \nabla \ell(\bfW) \Big\|_\mathrm{F}^2 \\
    &= \eta^2 \bigg( \| \ell(\bfW) \bfB \bfB^\top \|_\mathrm{F}^2 - \frac{\| \bfA^\top \nabla \ell(\bfW) \bfB \|_\mathrm{F}^4}{\| \bfA \bfA^\top \nabla \ell(\bfW) \|_\mathrm{F}^2} \bigg) \beta^4 - \\
    &\qquad \frac{2\eta}{L} \bigg( \| \nabla \ell(\bfW) \bfB \|_\mathrm{F}^2 - \frac{\| \bfA^\top \nabla \ell(\bfW) \|_\mathrm{F}^2 \| \bfA^\top \nabla \ell(\bfW) \bfB \|_\mathrm{F}^2}{\| \bfA \bfA^\top \nabla \ell(\bfW) \|_\mathrm{F}^2} \bigg) \beta^2 + \Const \\
    &= \frac{\eta^2 C}{\| \bfA \bfA^\top \nabla \ell(\bfW) \|_\mathrm{F}^2} \beta^4 - \frac{2\eta C^B}{L \| \bfA \bfA^\top \nabla \ell(\bfW) \|_\mathrm{F}^2} \beta^2 + \Const.
\end{align*}
As $C > 0$, it follows that $\beta^{*2} = C^B / (L\eta C)$. Plugging this back to~\eqref{app-eq:optimal-at2} gives $\alpha^{*2} = C^A / (L\eta C)$. 
\end{proof}

Note that the low-rank matrices' Frobenius norms $\| \bfA_t \bfA_t^\top \nabla \ell(\bfW_t) \|_\mathrm{F}^2$ and $\| \nabla \ell(\bfW_t) \bfB_t \bfB_t^\top \|_\mathrm{F}^2$ in Theorem~\ref{thm:scalar} can be calculated through the trick
\begin{align*}
    \| \bfA_t \bfA_t^\top \nabla \ell(\bfW_t) \|_\mathrm{F}^2 &= \tr \big( \nabla \ell(\bfW_t)^\top \bfA_t \bfA_t^\top \bfA_t \bfA_t^\top \nabla \ell(\bfW_t) \big) \\
    &= \sum_{i=1}^n \sum_{j=1}^r \Big[ \big( (\nabla \ell(\bfW_t)^\top \bfA_t) (\bfA_t^\top \bfA_t) \big) \odot \big( \nabla \ell(\bfW_t)^\top \bfA_t \big) \Big]_{ij}
\end{align*}
which reduces the computational overhead from $\mathcal{O}(m^2r)$ to $\mathcal{O}((m+n)r^2)$.

\subsection{Proof of Theorem~\ref{thm:diag}}
\label{app-subsec:proof-thm-diag}
\begin{proof}
The high-level idea of the proof is similar to the proof of Case 4 of Theorem~\ref{thm:scalar}. First, for the same rationale, there must be stationary point(s) in the interior of $\real^{2r}$ achieving the global minimum. 

Denoting by $\bfphi := \bfalpha^{\circ 2}$ and $\bfpsi := \bfbeta^{\circ 2}$, the objective~\eqref{eq:obj-diag} can be equivalently written as a constrained optimization problem
\begin{equation}
\label{app-eq:obj-diag-constrained}
    \min_{\bfphi,\bfpsi \in \real^r_+} \frac{L}{2} \Big\| \frac{1}{L} \nabla \ell(\bfW) - \eta \nabla \ell(\bfW) \bfB \diag^2 (\bfpsi) \bfB^\top - \eta \bfA \diag^2 (\bfphi) \bfA^\top \nabla \ell(\bfW) \Big\|_\mathrm{F}^2.
\end{equation}
The optimal value of~\eqref{app-eq:obj-diag-constrained} is lower bounded by the optimal value of its unconstrained counterpart
\begin{equation}
\label{app-eq:obj-diag-unconstrained}
    \min_{\bfphi,\bfpsi} \frac{L}{2} \Big\| \frac{1}{L} \nabla \ell(\bfW) - \eta \nabla \ell(\bfW) \bfB \diag^2 (\bfpsi) \bfB^\top - \eta \bfA \diag^2 (\bfphi) \bfA^\top \nabla \ell(\bfW) \Big\|_\mathrm{F}^2.
\end{equation}
Next, we show that under the conditions of Theorem~\ref{thm:diag}, the optimum points of~\eqref{app-eq:obj-diag-unconstrained} is inside the constraint $\real^r_+$, which is thus also the optimum of~\eqref{app-eq:obj-diag-constrained}. 

The optimality condition for~\eqref{app-eq:obj-diag-unconstrained} is
\begin{align*}
\label{app-eq:diag-stationary}
    &\bfA^\top \bfA \diag(\bfphi) \bfA^\top \nabla \ell (\bfW) \ell (\bfW)^\top \bfA - \frac{1}{L\eta} \bfA^\top \nabla \ell (\bfW) \ell (\bfW)^\top \bfA + \\
    &\hspace{6cm} \bfA^\top \nabla \ell(\bfW) \bfB \diag(\bfpsi) \bfB^\top \nabla \ell(\bfW)^\top \bfA = 0, \\
    &\bfB^\top \bfB \diag(\bfpsi) \bfB^\top \nabla \ell (\bfW)^\top \ell (\bfW) \bfB - \frac{1}{L\eta} \bfB^\top \nabla \ell (\bfW)^\top \ell (\bfW) \bfB + \\
    &\hspace{6cm} \bfB^\top \nabla \ell(\bfW)^\top \bfA \diag(\bfphi) \bfA^\top \nabla \ell(\bfW) \bfB = 0.
\end{align*}
Notice that these two equations can be expressed using matrices as
\begin{align*}
    &\diag\big(\bfA^\top \bfA \diag(\bfphi) \bfA^\top \nabla \ell (\bfW) \ell (\bfW)^\top \bfA \big) - \frac{1}{L\eta} \diag\big( \| \bfA^\top \nabla \ell (\bfW) \|_\mathrm{row}^2 \big) + \\
    &\hspace{6cm} \diag\big( \bfA^\top \nabla \ell(\bfW) \bfB \diag(\bfpsi) \bfB^\top \nabla \ell(\bfW)^\top \bfA \big)  = \mathbf{0}, \\
    &\diag \big( \bfB^\top \bfB \diag(\bfpsi) \bfB^\top \nabla \ell (\bfW)^\top \ell (\bfW) \bfB \big) - \frac{1}{L\eta} \diag \big( \| \bfB^\top \nabla \ell (\bfW)^\top \|_\mathrm{row}^2 \big) + \\
    &\hspace{6cm} \diag \big( \bfB^\top \nabla \ell(\bfW)^\top \bfA \diag(\bfphi) \bfA^\top \nabla \ell(\bfW) \bfB \big) \Big] = \mathbf{0}.
\end{align*}
By Lemma~\ref{lemma:Hadamard}, we obtain
\begin{align*}
    \big( (\bfA^\top \bfA) \odot (\bfA^\top \nabla \ell (\bfW) \ell (\bfW)^\top \bfA) \big) \bfphi - \frac{1}{L\eta} \| \bfA^\top \nabla \ell(\bfW) \|_\mathrm{row}^2 + \big( \bfA^\top \nabla \ell(\bfW) \bfB \big)^{\circ 2} \bfpsi &= \mathbf{0}, \\
    \big( (\bfB^\top \bfB) \odot (\bfB^\top \nabla \ell (\bfW)^\top \ell (\bfW) \bfB) \big) \bfpsi - \frac{1}{L\eta} \| \bfB^\top \nabla \ell(\bfW)^\top \|_\mathrm{row}^2 + \big( \bfB^\top \nabla \ell(\bfW)^\top \bfA \big)^{\circ 2} \bfphi &= \mathbf{0}.
\end{align*}
Then, we can rewrite these using block matrices as
\begin{align*}
    &\begin{bmatrix}
        (\bfA^\top \bfA) \odot (\bfA^\top \nabla \ell (\bfW) \nabla \ell (\bfW)^\top \bfA) & (\bfA^\top \nabla \ell (\bfW) \bfB)^{\circ 2} \\
        (\bfB^\top \nabla \ell (\bfW)^\top \bfA)^{\circ 2} & (\bfB^\top \bfB) \odot (\bfB^\top \nabla \ell (\bfW)^\top \nabla \ell (\bfW) \bfB)
    \end{bmatrix} 
    \begin{bmatrix} \bfphi \\ \bfpsi \end{bmatrix} - \\
    &\hspace{9cm} \frac{1}{L\eta} \begin{bmatrix} \| \bfA^\top \nabla \ell(\bfW) \|_\mathrm{row}^2 \\ \| \bfB^\top \nabla \ell(\bfW)^\top \|_\mathrm{row}^2 \end{bmatrix} 
    = \mathbf{0} \\
    &\Longrightarrow ~~\Big[ (\bfS^{A\top} \bfS^A) \odot (\bfS^{B\top} \bfS^B) \Big] \begin{bmatrix} \bfphi \\ \bfpsi \end{bmatrix} - \frac{1}{L\eta} \bflambda = \mathbf{0}
\end{align*}


Therefore, the stationary points of~\eqref{app-eq:obj-diag-unconstrained} are
\begin{equation*}
    \begin{bmatrix} \bfphi \\ \bfpsi \end{bmatrix} 
    \in \bigg\{ \frac{1}{L\eta} \Big[ (\bfS^{A\top} \bfS^A) \odot (\bfS^{B\top} \bfS^B) \Big]^\dagger \bflambda + \bfv ~\Big|~ \bfv \in \Null \big( (\bfS^{A\top} \bfS^A) \odot (\bfS^{B\top} \bfS^B) \big) \bigg\} := \mathcal{S}
\end{equation*}
It is easy to verify that the null space vector $\bfv$ will not affect the objective value, and thus one can take any $\bfv$ to reach the global minimum. 

By the conditions in Theorem~\ref{thm:diag}, we have $\bfv_t \in \mathcal{S} \cap \real_+^{2r} \subseteq \real_+^{2r}$. As a consequence, $\bfv_t$ is also the global optimum of the contrained optimization~\eqref{app-eq:obj-diag-constrained}. Taking Hadamard square root results in~\eqref{eq:opt-diag}, which concludes the proof. 
\end{proof}

\subsection{Moment estimators in adaptive optimizers}
\label{app-subsec:moments}
Optimizers such as Adam(W) leverages the first and entry-wise second moment estimators of the stochastic gradient to adaptively update the parameters. For LoRA, the parameters are $\bfA$ and $\bfB$ (viewed as stochastic matrices), whose corresponding gradient moments are
\begin{align*}
    \expect [\nabla_{\bfA} \ell(\bfW^\mathrm{pt}+\bfA\bfB^\top)] &= \expect [\nabla \ell(\bfW) \bfB], ~~\expect [( \nabla_{\bfA} \ell(\bfW^\mathrm{pt}+\bfA\bfB^\top) )^{\circ 2}] = \expect [(\nabla \ell(\bfW) \bfB)^{\circ 2}], \\
    \expect [\nabla_{\bfB} \ell(\bfW^\mathrm{pt}+\bfA\bfB^\top)] &= \expect [\nabla \ell(\bfW)^\top \bfA], ~\expect [( \nabla_{\bfB} \ell(\bfW^\mathrm{pt}+\bfA\bfB^\top) )^{\circ 2}] = \expect [(\nabla \ell(\bfW)^\top \bfA)^{\circ 2}].
\end{align*}
Given dampening parameters $\beta_1,\beta_2 \in (0,1)$, the first and second moment estimators $m_t(\cdot)$ and $v_t(\cdot)$ are defined as the exponential moving averages
\begin{subequations}
\label{app-eq:moment-estimators}
    \begin{align}
    m_t (\nabla \ell(\bfW) \bfB) &= (1 - \beta_1) \nabla \ell(\bfW_t) \bfB_t + \beta_1 m_{t-1} (\nabla \ell(\bfW) \bfB) \nonumber \\
    &= (1-\beta_1) \sum_{\tau = 0}^t \beta_1^{t-\tau} \nabla \ell(\bfW_\tau) \bfB_\tau, \\
    v_t (\nabla \ell(\bfW) \bfB) &= (1 - \beta_2) \big[ \nabla \ell(\bfW_t) \bfB_t \big]^{\circ 2} + \beta_2 v_{t-1} \nabla \ell(\bfW) \bfB \nonumber \\
    &= (1-\beta_2) \sum_{\tau = 0}^t \beta_2^{t-\tau} \big[ \nabla \ell(\bfW_\tau) \bfB_\tau \big]^{\circ 2}, \\
    m_t (\nabla \ell(\bfW)^\top \bfA) &= (1-\beta_1) \sum_{\tau = 0}^t \beta_1^{t-\tau} \nabla \ell(\bfW_\tau)^\top \bfA_\tau, \\
    v_t (\nabla \ell(\bfW)^\top \bfA) &= (1-\beta_2) \sum_{\tau = 0}^t \beta_2^{t-\tau} \big[ \nabla \ell(\bfW_\tau)^\top \bfA_\tau \big]^{\circ 2}. 
    \end{align}
\end{subequations}
Moreover, these optimizers rely on the following standard assumption characterizing the gradient stochasticity. 
\begin{assumption}
   Stochastic gradient samples $\nabla \ell(\bfW_t) \bfA_t$ and $\nabla \ell(\bfW_t)^\top \bfB_t$ are unbiased and have bounded variance for $\forall t$. 
\end{assumption}
Under this assumption, it can be readily verified that the moment estimators in~\eqref{app-eq:moment-estimators} are also unbiased and variance-bounded. 

Next, we prove the two lemmas in Section~\ref{sec:opt-scale}. 

\textbf{Proof of Lemma~\ref{lemma:moments-scalar-scale}.}
\begin{proof}
The proof directly follows from the definition~\eqref{app-eq:moment-estimators}. Specifically, it holds
\begin{align*}
    m_t (\nabla_{\tilde{\bfA}} \ell(\bfW)) &= m_t (\nabla \ell(\bfW) \tilde{\bfB}) = m_t (\beta \nabla \ell(\bfW) \bfB) \\
    &= \beta (1-\beta_1) \sum_{\tau = 0}^t \beta_1^{t-\tau} \nabla \ell(\bfW_\tau) \bfB_\tau \\
    &= \beta m_t (\nabla \ell(\bfW) \bfB) = \beta m_t (\nabla_{\bfA} \ell(\bfW)),
\end{align*}
and
\begin{align*}
    v_t (\nabla_{\tilde{\bfA}} \ell(\bfW)) &= v_t (\nabla \ell(\bfW) \tilde{\bfB}) = v_t (\beta \nabla \ell(\bfW) \bfB) \\
    &= \beta^2 (1-\beta_1) \sum_{\tau = 0}^t \beta_1^{t-\tau} [\nabla \ell(\bfW_\tau) \bfB_\tau]^{\circ 2} \\
    &= \beta^2 v_t (\nabla \ell(\bfW) \bfB) = \beta^2 v_t (\nabla_{\bfA} \ell(\bfW)).
\end{align*}
Similar derivations can be shown for other two moment estimators of $\tilde{\bfB}$. 
\end{proof}

\textbf{Proof of Lemma~\ref{lemma:moments-column-scale}.}
\begin{proof}
For the column-wise scaling, its first moment estimator of $\nabla_{\tilde{\bfA}} \ell(\bfW)$ follows as
\begin{align*}
    m_t (\nabla_{\tilde{\bfA}} \ell(\bfW)) &= m_t (\nabla \ell(\bfW) \tilde{\bfB}) = m_t (\nabla \ell(\bfW) \bfB \diag(\bfbeta)) \\
    &= (1-\beta_1) \sum_{\tau = 0}^t \beta_1^{t-\tau} \nabla \ell(\bfW_\tau) \bfB_\tau \diag(\bfbeta) \\
    &= m_t (\nabla \ell(\bfW) \bfB) \diag(\bfbeta) = m_t (\nabla_{\bfA} \ell(\bfW)) \diag(\bfbeta).
\end{align*}
And the second moment estimator turns out to be
\begin{align*}
    v_t (\nabla_{\tilde{\bfA}} \ell(\bfW)) &= v_t (\nabla \ell(\bfW) \bfB \diag(\bfbeta)) \\
    &= (1-\beta_2) \sum_{\tau = 0}^t \beta_2^{t-\tau} \big[ \nabla \ell(\bfW_\tau) \bfB_\tau \diag(\bfbeta) \big]^{\circ 2} \\
    &= (1-\beta_2) \sum_{\tau = 0}^t \beta_2^{t-\tau} \big[ \nabla \ell(\bfW_\tau) \bfB_\tau \big]^{\circ 2} \diag^2(\bfbeta) \\
    &= m_t (\nabla \ell(\bfW) \bfB) \diag^2(\bfbeta) = m_t (\nabla_{\bfA} \ell(\bfW)) \diag^2(\bfbeta).
\end{align*}
The same derivations apply to the gradient moment estimators of $\tilde{\bfB}$. 
\end{proof}

\subsection{Useful facts}
\begin{lemma}
\label{lemma:AG-AAG}
If $\| \bfA^\top \bfG \|_\mathrm{F} > 0$, then $\| \bfA \bfA^\top \bfG \|_\mathrm{F} > 0$. 
\end{lemma}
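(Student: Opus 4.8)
The plan is to derive the claim from a single Frobenius inner-product identity together with Cauchy--Schwarz. First I would set $\bfM := \bfA^\top \bfG$ and record the trace identity $\langle \bfA\bfA^\top \bfG, \bfG \rangle_\mathrm{F} = \tr(\bfG^\top \bfA \bfA^\top \bfG) = \tr(\bfM^\top \bfM) = \|\bfA^\top \bfG\|_\mathrm{F}^2$, which is strictly positive by hypothesis. Applying Cauchy--Schwarz in the Frobenius inner product then gives $\|\bfA^\top \bfG\|_\mathrm{F}^2 = \langle \bfA\bfA^\top \bfG, \bfG \rangle_\mathrm{F} \le \|\bfA\bfA^\top \bfG\|_\mathrm{F}\,\|\bfG\|_\mathrm{F}$. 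Since the left-hand side is positive, both $\|\bfG\|_\mathrm{F} > 0$ and $\|\bfA\bfA^\top \bfG\|_\mathrm{F} > 0$; in fact $\|\bfA\bfA^\top \bfG\|_\mathrm{F} \ge \|\bfA^\top \bfG\|_\mathrm{F}^2 / \|\bfG\|_\mathrm{F} > 0$, which is precisely the assertion.

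An equivalent and equally short route is the contrapositive. Suppose $\bfA\bfA^\top \bfG = \mathbf{0}$. Left-multiplying by $\bfG^\top$ yields $(\bfA^\top \bfG)^\top (\bfA^\top \bfG) = \mathbf{0}$, and taking the trace gives $\|\bfA^\top \bfG\|_\mathrm{F}^2 = 0$, hence $\bfA^\top \bfG = \mathbf{0}$. This uses only the elementary fact that $\bfX^\top \bfX = \mathbf{0}$ forces $\bfX = \mathbf{0}$. Either argument suffices, and the first one has the mild advantage of producing an explicit quantitative lower bound on $\|\bfA\bfA^\top \bfG\|_\mathrm{F}$.

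There is no substantive obstacle here; the lemma is a routine linear-algebra fact asserting that $\Null(\bfA\bfA^\top) = \Null(\bfA^\top)$ applied columnwise to $\bfG$. The only minor point of care is not to divide by $\|\bfG\|_\mathrm{F}$ before confirming it is nonzero, which the Cauchy--Schwarz route handles automatically since the strictly positive quantity $\|\bfA^\top \bfG\|_\mathrm{F}^2$ bounds the product $\|\bfA\bfA^\top \bfG\|_\mathrm{F}\,\|\bfG\|_\mathrm{F}$ from below and thereby forces each factor to be positive.
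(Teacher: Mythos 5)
Your proof is correct, but it follows a genuinely different route from the paper. The paper argues by contradiction using subspace geometry: from $\bfA\bfA^\top\bfG = \mathbf{0}$ it deduces $\Col(\bfA^\top\bfG) \subseteq \Null(\bfA)$, and combined with the trivial containment $\Col(\bfA^\top\bfG) \subseteq \Col(\bfA^\top)$, it concludes $\Col(\bfA^\top\bfG) \subseteq \Null(\bfA) \cap \Col(\bfA^\top) = \{\mathbf{0}\}$, contradicting $\bfA^\top\bfG \ne \mathbf{0}$. Your argument is purely algebraic: you observe the trace identity $\langle \bfA\bfA^\top\bfG, \bfG\rangle_\mathrm{F} = \|\bfA^\top\bfG\|_\mathrm{F}^2$, and then either invoke Cauchy--Schwarz to force $\|\bfA\bfA^\top\bfG\|_\mathrm{F} > 0$, or (in the contrapositive) left-multiply $\bfA\bfA^\top\bfG = \mathbf{0}$ by $\bfG^\top$ and conclude $\|\bfA^\top\bfG\|_\mathrm{F}^2 = 0$. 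The paper's route is essentially invoking the orthogonal decomposition $\Null(\bfA) \perp \Col(\bfA^\top)$, which is structural and immediately generalizes to statements about subspaces; your route avoids any mention of subspaces or orthogonality, uses only the trace and an inner-product inequality, and has the side benefit of producing the explicit lower bound $\|\bfA\bfA^\top\bfG\|_\mathrm{F} \ge \|\bfA^\top\bfG\|_\mathrm{F}^2 / \|\bfG\|_\mathrm{F}$, which the paper's qualitative argument does not. Both are short and correct; yours is arguably the more self-contained of the two.
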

\begin{proof}
We prove by contradiction. Suppose $\| \bfA^\top \bfG \|_\mathrm{F} > 0$ but $\| \bfA \bfA^\top \bfG \|_\mathrm{F} = 0$. Then we have $\bfA^\top \bfG \ne \mathbf{0}$ and $\bfA \bfA^\top \bfG = 0$. The latter suggests $\Col(\bfA^\top \bfG) \subseteq \Null (\bfA)$. Given that $\Col(\bfA^\top \bfG) \subseteq \Col(\bfA^\top)$, we have $\Col(\bfA^\top \bfG) \subseteq \Null (\bfA) \cap \Col(\bfA^\top) = \{ \mathbf{0} \}$, which contradicts $\bfA^\top \bfG \ne \mathbf{0}$. This prove is thus completed. 
\end{proof}

\begin{lemma}[\citep{matrix-analysis}]
\label{lemma:Hadamard}
For matrices $\bfM_1, \bfM_2 \in \real^{m \times n}$, and vector $\bfv \in \real^n$, 
\begin{equation*}
    (\bfM_1 \odot \bfM_2) \bfv = \diag (\bfM_1 \diag (\bfv) \bfM_2^\top). 
\end{equation*}
\end{lemma}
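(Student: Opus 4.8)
The claim is a routine identity between two vectors in $\real^m$, so the plan is simply to verify that their $i$-th entries agree, invoking nothing beyond the definitions of the Hadamard product, $\diag(\cdot)$, and ordinary matrix multiplication.

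First I would expand the left-hand side componentwise: for each $i\in\{1,\dots,m\}$,
\[
    [(\bfM_1\odot\bfM_2)\bfv]_i = \sum_{j=1}^n [\bfM_1\odot\bfM_2]_{ij}\,v_j = \sum_{j=1}^n [\bfM_1]_{ij}\,[\bfM_2]_{ij}\,v_j .
\]
Next I would expand the right-hand side. The product $\bfM_1\diag(\bfv)$ has entries $[\bfM_1\diag(\bfv)]_{ij}=[\bfM_1]_{ij}\,v_j$, so the $i$-th diagonal entry of the triple product is
\[
    [\bfM_1\diag(\bfv)\bfM_2^\top]_{ii} = \sum_{j=1}^n [\bfM_1\diag(\bfv)]_{ij}\,[\bfM_2^\top]_{ji} = \sum_{j=1}^n [\bfM_1]_{ij}\,v_j\,[\bfM_2]_{ij},
\]
which is exactly the expression obtained for the left-hand side. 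Since this holds for every $i$, the two vectors are equal, proving $(\bfM_1\odot\bfM_2)\bfv=\diag(\bfM_1\diag(\bfv)\bfM_2^\top)$.

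Alternatively, I might give a coordinate-free version: write $\bfM_1\diag(\bfv)\bfM_2^\top=\sum_{j=1}^n v_j\,(\bfM_1)_{:,j}(\bfM_2)_{:,j}^\top$ as a weighted sum of rank-one matrices, note that $\diag\big((\bfM_1)_{:,j}(\bfM_2)_{:,j}^\top\big)=(\bfM_1)_{:,j}\odot(\bfM_2)_{:,j}$, and recognize $\sum_{j=1}^n v_j\,(\bfM_1)_{:,j}\odot(\bfM_2)_{:,j}=(\bfM_1\odot\bfM_2)\bfv$ because the $j$-th column of $\bfM_1\odot\bfM_2$ is precisely $(\bfM_1)_{:,j}\odot(\bfM_2)_{:,j}$.

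There is no genuine obstacle here; the only point that needs a little care is keeping the indices straight in the transpose $\bfM_2^\top$ when expanding the triple product, so that the summation index lines up with the common column index of $\bfM_1$ and $\bfM_2$. As the statement is quoted from \citep{matrix-analysis}, I would present the argument in just a couple of lines.
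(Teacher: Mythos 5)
Your componentwise verification is correct, and the alternative coordinate-free argument via rank-one outer products is also sound. Note that the paper does not actually prove this lemma — it is stated and cited directly from a matrix-analysis reference — so there is no paper proof to compare against; your derivation simply supplies the standard elementary argument that the citation leaves implicit.
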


\begin{theorem}[\citep{schur-prod}; Schur product theorem]
\label{thm:schur-prod}
    If matrices $\bfM_1, \bfM_2 \succeq 0$, then $\bfM_1 \odot \bfM_2 \succeq 0$. 
\end{theorem}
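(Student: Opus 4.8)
The plan is to prove $\bfM_1 \odot \bfM_2 \succeq 0$ directly from the definition, by showing $\mathbf{x}^\top (\bfM_1 \odot \bfM_2) \mathbf{x} \ge 0$ for every $\mathbf{x} \in \real^n$, after reducing the Hadamard product to a nonnegative combination of quadratic forms in $\bfM_1$ alone.

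First I would diagonalize the positive semidefinite matrix $\bfM_2$, writing $\bfM_2 = \sum_{l=1}^n \lambda_l \bfv_l \bfv_l^\top$ with $\lambda_l \ge 0$ and orthonormal eigenvectors $\bfv_l$, so that entrywise $[\bfM_2]_{ij} = \sum_{l} \lambda_l [\bfv_l]_i [\bfv_l]_j$. Substituting this into the quadratic form and interchanging the (finite) sums gives
\[
    \mathbf{x}^\top (\bfM_1 \odot \bfM_2) \mathbf{x} = \sum_{i,j} x_i x_j [\bfM_1]_{ij} [\bfM_2]_{ij} = \sum_{l=1}^n \lambda_l \sum_{i,j} (x_i [\bfv_l]_i) [\bfM_1]_{ij} (x_j [\bfv_l]_j) = \sum_{l=1}^n \lambda_l \, (\mathbf{x} \odot \bfv_l)^\top \bfM_1 (\mathbf{x} \odot \bfv_l).
\]
The one idea that makes this work is recognizing that Hadamard-multiplying by the rank-one matrix $\bfv_l \bfv_l^\top$ is the same as conjugating by $\diag(\bfv_l)$, a congruence transformation that preserves positive semidefiniteness; linearity over the spectral expansion of $\bfM_2$ then assembles the general case.

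Each summand on the right-hand side is nonnegative, because $\lambda_l \ge 0$ and $\bfM_1 \succeq 0$ imply $(\mathbf{x} \odot \bfv_l)^\top \bfM_1 (\mathbf{x} \odot \bfv_l) \ge 0$; hence $\mathbf{x}^\top (\bfM_1 \odot \bfM_2) \mathbf{x} \ge 0$ for all $\mathbf{x}$, which is the claim. An alternative route, if one prefers to avoid the eigendecomposition, is to note that $\bfM_1 \odot \bfM_2$ is the principal submatrix of the Kronecker product $\bfM_1 \otimes \bfM_2$ obtained by keeping the ``diagonal'' index pairs, and that $\bfM_1 \otimes \bfM_2 \succeq 0$ since its eigenvalues are the pairwise products of those of $\bfM_1$ and $\bfM_2$, while principal submatrices of PSD matrices are PSD. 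There is no genuine obstacle here, as this is the classical Schur product theorem; the only step deserving a moment of care is the bookkeeping in the interchange of summation and the correct identification of the vector $\mathbf{x} \odot \bfv_l$.
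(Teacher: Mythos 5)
Your proof is correct. The paper itself does not supply an argument for this statement; it records it as a cited classical result (attributed to the reference for the Schur product theorem) and uses it as a black box, so there is no ``paper's own proof'' against which to compare. Your spectral-decomposition argument is the standard proof: expand $\bfM_2 = \sum_l \lambda_l \bfv_l \bfv_l^\top$ with $\lambda_l \ge 0$, observe that Hadamard multiplication by the rank-one matrix $\bfv_l \bfv_l^\top$ is conjugation by $\diag(\bfv_l)$ (which preserves positive semidefiniteness), and conclude that $\mathbf{x}^\top (\bfM_1 \odot \bfM_2)\mathbf{x} = \sum_l \lambda_l (\mathbf{x}\odot\bfv_l)^\top \bfM_1 (\mathbf{x}\odot\bfv_l) \ge 0$. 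The bookkeeping in the interchange of sums is sound, and the alternative you sketch via the Kronecker product $\bfM_1 \otimes \bfM_2 \succeq 0$ and extraction of a principal submatrix is an equally valid classical route. Either would serve as a self-contained replacement for the citation.
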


\section{Pseudocodes and complexity comparison}
\label{app-sec:pseudocodes}
Algorithm~\ref{alg:ScaLoRA} provides the pseudocodes for our ScaLoRA approach, where $\text{AdaOpt}$ refers to one adaptive optimizer step. 
\begin{algorithm}[ht]
  \caption{Scaled Low-Rank Adaptation (ScaLoRA)}
  \label{alg:ScaLoRA}
  \begin{algorithmic}
    \STATE {\bfseries Input:} Loss $\ell$, pre-trained weight $\bfW^{\mathrm{pt}}$, maximum iterations $T$, learning rate $\eta$
    \STATE {\bfseries Initialize:} $\bfA_0$, $\bfB_0$
    \FOR{$t = 0$ {\bfseries to} $T-1$}
        \STATE Solve $\bfv_t$ from $\big[(\bfS_t^{A\top} \bfS_t^A) \odot (\bfS_t^{B\top} \bfS_t^B)\big] \bfv_t = \bflambda_t$
        \IF{$\bfv_t \in \mathbb{R}^{2r}_+$}
            \STATE Compute $\bfalpha_t^*$ and $\bfbeta_t^*$ using Theorem~\ref{thm:diag}
            \STATE Scale $\tilde{\bfA}_t = \bfA_t \diag(\bfalpha_t^*)$, $\tilde{\bfB}_t = \bfB_t \diag(\bfbeta_t^*)$
            \STATE Alter moment estimators $m_t$ and $v_t$ using Lemma~\ref{lemma:moments-column-scale}
        \ELSE
            \STATE Compute $\alpha_t^*$ and $\beta_t^*$ using Theorem~\ref{thm:scalar}
            \STATE Scale $\tilde{\bfA}_t = \alpha_t^* \bfA_t$, $\tilde{\bfB}_t = \beta_t^* \bfB_t$
            \STATE Alter moment estimators $m_t$ and $v_t$ using Lemma~\ref{lemma:moments-scalar-scale}
        \ENDIF
        \STATE Merge $\bfA_t \bfB_t^\top$ and factor out $\tilde{\bfA}_t \tilde{\bfB}_t^\top$ using~\eqref{eq:merge-factor}
        \STATE Update$\bfA_{t+1} = \text{AdaOpt}(\tilde{\bfA}_t, \eta, m_t, v_t), ~
        \bfB_{t+1} = \text{AdaOpt}(\tilde{\bfB}_t, \eta, m_t, v_t)$
    \ENDFOR
    \STATE {\bfseries Output:} $\bfA_T$, $\bfB_T$
  \end{algorithmic}
\end{algorithm}

Table~\ref{tab:overhead} summarizes the theoretical overhead comparison, where $k$ represents for the batch size. It is seen that ScaLoRA-I guarantees a constant percentage of additional time overhead upon choosing $I = \Omega (r)$, which does not grow with the model hidden size $m$ and $n$. Using the complexity analysis in Table~\ref{tab:overhead}, the extra cost of ScaLoRA-I relative to LoRA is $\frac{\mathcal{O}(mnr/I)}{\Omega(kmn)} = \mathcal{O}(1/k)$, where high-order terms are dropped under $r \ll m,n$. This ensures the scalability of ScaLoRA-I to larger models and higher $r$.

\begin{table}[ht]
    \caption{Additional complexities introduced by LoRA variants}
    \label{tab:overhead}
    \centering
\begin{small}
    \begin{tabular}{ccc}
    \toprule
    Method & Time & Space  \\
    \midrule
    LoRA forward/backward & $\Omega(kmn)$ & $\Omega(kmn)$ \\
    \midrule
    MoRA & \multicolumn{2}{c}{Depends on $f_\text{compress}$ and $f_\text{decompress}$} \\
    HiRA & $\mathcal{O} (mnr)$ & $\mathcal{O}(mn)$ \\
    ScaLoRA & $\mathcal{O}(mnr + (m + n +r) r^2)$ & $\mathcal{O}((m+n+r)r)$ \\
    ScaLoRA-I & $\mathcal{O}((mnr + (m + n +r) r^2) /I)$ & $\mathcal{O}((m+n+r)r)$ \\
    \bottomrule
    \end{tabular}
\end{small}
\end{table}

\section{Experimental setups}
\label{app-sec:exp-setup}
This section lists the detailed datasets, models, and hyperparameters. 

\subsection{Platforms}
All the numerical tests are conducted on a server equipped with four Nvidia A100 GPUs. All codes are written in PyTorch~\citep{PyTorch}, and partially built on~\citep{LLM-adapters,PoLAR}. 

\subsection{Setups for linear regression}
The numerical test considers optimization objective
\begin{equation*}
    \min_{\bfW} \frac{1}{2} \| \bfY - \bfW \bfX \|_\mathrm{F}^2
\end{equation*}
where the entries of $\bfX \in \real^{n \times k}$ and $\bfY \in \real^{m \times k}$ are both randomly generated from standard Gaussian $\mathcal{N}(0,1)$. For LoRA, the objective function is
\begin{equation*}
    \min_{\bfA,\bfB} \frac{1}{2} \| \bfY - \bfA \bfB^\top \bfX \|_\mathrm{F}^2. 
\end{equation*}
The test utilizes $m=n=64$, $k=100$, and $r=8$. The optimizer is standard GD. 

\subsection{Setups for natural language understanding}
\textbf{General Language Understanding Evaluation (GLUE) benchmark}~\citep{GLUE} is a widely used suite of datasets designed to evaluate the general-purpose natural language understanding (NLU) capabilities of models. In this work, we adopt the following 8 subsets of GLUE:  

\begin{itemize}[leftmargin=0.3cm]
    \item \textbf{MNLI}~\citep{mnli} (Multi-Genre Natural Language Inference) evaluates a model’s ability to perform natural language \emph{inference} across multiple genres of text. 
    \item \textbf{SST-2}~\citep{sst2} (Stanford Sentiment Treebank) is a \emph{sentiment classification} dataset with binary labels. 
    \item \textbf{MRPC}~\citep{mrpc} (Microsoft Research Paraphrase Corpus) focuses on \emph{paraphrase detection}, i.e., determining whether two sentences are semantically equivalent. 
    \item \textbf{CoLA}~\citep{cola} (Corpus of Linguistic Acceptability) requires models to determine whether a sentence is \emph{grammatically acceptable}. 
    \item \textbf{QNLI}~\citep{qnli} (Question Natural Language Inference) is a question-answering dataset reformulated as a binary \emph{inference} task. 
    \item \textbf{QQP}\footnote{\url{https://quoradata.quora.com/First-Quora-Dataset-Release-Question-Pairs}} (Quora Question Pairs) consists of pairs of questions, and the task is to predict whether they are semantically equivalent. 
    \item \textbf{RTE}\footnote{\url{https://paperswithcode.com/dataset/rte}} (Recognizing Textual Entailment) contains sentence pairs for \emph{textual entailment} classification. 
    \item \textbf{STS-B}~\citep{stsb} (Semantic Textual Similarity Benchmark) evaluates the degree of \emph{semantic similarity} between two sentences on a continuous scale. 
\end{itemize}

Together, these datasets provide a comprehensive benchmark for testing general-purpose language models under diverse NLU tasks. All datasets are distributed under permissive licenses. A summary of the datasets is provided in Table~\ref{tab:app-GLUE-summary}.  

\begin{table}[h]
\centering
\caption{Summary of GLUE benchmark datasets.}
\label{tab:app-GLUE-summary}
\begin{tabular}{lllll}
\toprule
\textbf{Name} & \textbf{Task} & \textbf{\#train} & \textbf{\#test} & \textbf{Metrics} \\
\midrule
MNLI        & Natural language inference & 393k  & 20k   & Matched \& mismatched accuracy \\
SST-2       & Sentiment classification   & 67k   & 1.8k  & Accuracy \\
MRPC        & Paraphrase detection       & 3.7k  & 1.7k  & Accuracy, F1 \\
CoLA        & Acceptability judgment     & 8.5k  & 1k    & Matthews correlation \\
QNLI        & QA/NLI                     & 105k  & 5.4k  & Accuracy \\
QQP         & Paraphrase detection       & 364k  & 391k  & Accuracy, F1 \\
RTE         & Textual entailment         & 2.5k  & 3k    & Accuracy \\
STS-B       & Semantic similarity        & 7k    & 1.4k  & Pearson \& Spearman correlations \\
\bottomrule
\end{tabular}
\end{table}

\textbf{DeBERTaV3-base}~\citep{DeBERTaV3} is a transformer-based encoder model with approximately 184M parameters. It builds on the DeBERTa architecture by incorporating disentangled attention and an enhanced masked language modeling objective, leading to improved efficiency and performance across a range of tasks. The publicly available model checkpoint\footnote{\url{https://huggingface.co/microsoft/deberta-v3-base}} is released under the MIT license. 

\textbf{Hyperparameters} and general setups for natural language understanding tests follow from the protocols in~\citep{LoRA, AdaLoRA}. Specifically, the LoRA adapters are inserted to all linear layers including \texttt{query\_proj}, \texttt{key\_proj}, \texttt{value\_proj}, \texttt{output.dense}, and \texttt{intermediate.dense} modules, reducing the number of parameters from 184M to 0.67M. The LoRA rank is set to $r=4$ with scaling factor $8$ throughout the test. Learning rates are selected via grid search from $\{0.8, 1, 2, 3, 4, 5, 6, 8, 10, 20 \} \times 10^{-4}$ for each approach, with finer resolution allocated to the lower end of the range to better capture the region where many methods are more sensitive. For HiRA, the learning-rates are scaled by an additional factor of $10$ to offset the magnitude change due to Hadamard product. The the number epochs are reduced due to the fast convergence of ScaLoRA, while other hyperparameters follow the defaults in~\citep{LoRA}; see Table~\ref{tab:app-GLUE-hyperparam}. 

\begin{table}[h]
\centering
\caption{Hyperparameter for natural language understanding tests.}
\label{tab:app-GLUE-hyperparam}
\begin{tabular}{lcccccccc}
\toprule
\textbf{Hyperparam} & \textbf{CoLA} & \textbf{SST-2} & \textbf{MRPC} & \textbf{STS-B} & \textbf{QQP} & \textbf{MNLI} & \textbf{QNLI} & \textbf{RTE} \\
\midrule
LR (LoRA) &	8e-4 &	6e-4 &	8e-4 &	8e-4 &	5e-4 &	2e-4 &	4e-4 &	6e-4 \\
LR (MoRA) &	6e-4 &	6e-4 &	1e-3 &	8e-4 &	5e-4 &	2e-4 &	4e-4 &	6e-4 \\
LR (HiRA) &	6e-3 &	6e-3 &	8e-3 &	8e-3 &	5e-3 &	2e-3 &	4e-3 &	8e-3 \\
LR (ScaLoRA) &	6e-4 &	6e-4 &	1e-3 &	8e-4 &	5e-4 &	2e-4 &	4e-4 &	6e-4 \\
LR scheduler & \multicolumn{8}{c}{Linear}							\\
Epochs &	10 &	2 &	10 &	10 &	5 &	5 &	3 &	10 \\
Batch size &	\multicolumn{8}{c}{32}							 \\
Cutoff length &	64 &	128 &	128 &	128 &	320 &	256 &	512 &	320 \\
Warmup steps &	100 &	500 &	10\% &	100 &	1000 &	1000 &	500 &	50 \\
Class dropout &	0.1 &	0 &	0 &	0.2 &	0.2 &	0.15 &	0.1 &	0.2 \\
Weight decay &	0 &	0.01 &	0.01 &	0.1 &	0.01 &	0 &	0.01 &	0.01 \\
\bottomrule
\end{tabular}
\end{table}

\subsection{Setups for commonsense reasoning}
\textbf{Commonsense reasoning datasets}~\citep{LLM-adapters} evaluate a model’s ability to apply everyday knowledge and make inferences beyond explicitly provided textual information. Such benchmarks are essential for assessing reasoning over both physical and social contexts, which remain challenging for language models despite strong performance on surface-level tasks.
The datasets considered in this work cover a wide range of commonsense reasoning scenarios:
\begin{itemize}[leftmargin=0.5cm]
\item \textbf{BoolQ}~\citep{boolq} (Boolean Questions) is a reading comprehension dataset consisting of yes/no questions. Each question is paired with a passage from Wikipedia, requiring the model to extract and reason over information in the passage to provide the correct binary answer.
\item \textbf{WG}\citep{winogrande} (WinoGrande) is a large-scale coreference resolution benchmark that mitigates annotation artifacts found in traditional Winograd schemas.
\item \textbf{PIQA}\citep{piqa} (Physical Interaction QA) measures knowledge of physical commonsense, particularly intuitive reasoning about how objects interact.
\item \textbf{SIQA}\citep{siqa} (Social-IQ-A) targets social commonsense reasoning, requiring models to infer motivations, emotions, and social interactions.
\item \textbf{HS}\citep{hellaswag} (HellaSwag) evaluates grounded commonsense inference through multiple-choice sentence completion, designed to be adversarially difficult.
\item \textbf{ARC}\citep{arc} (AI2 Reasoning Challenge) consists of grade-school science questions, split into \textbf{ARC-e} (easy) and \textbf{ARC-c} (challenge), based on difficulty levels.
\item \textbf{OpenbookQA}\citep{obqa} contains multiple-choice science questions that require integrating commonsense with elementary scientific facts, simulating open-book reasoning.
\end{itemize}
Together, these datasets span multiple domains (physical, social, and scientific reasoning) and provide diverse evaluation challenges. All datasets are publicly available under open or research-friendly licenses. Table~\ref{tab:app-reasoning-summary} provides a detailed summary.
\begin{table}[ht]
\centering
\caption{Summary of commonsense reasoning datasets.}
\label{tab:app-reasoning-summary}
\begin{tabular}{lllll}
\toprule
\textbf{Name} & \textbf{Task} & \textbf{\#train} & \textbf{\#test} \\
\midrule
WinoGrande & Coreference resolution & 40k & 1.3k \\
PIQA & Physical reasoning & 16k & 3k \\
SIQA & Social reasoning & 33k & 2k \\
HellaSwag & Sentence completion & 70k & 10k \\
ARC-easy & Multiple-choice QA & 2.3k & 1.2k \\
ARC-challenge & Multiple-choice QA & 2.6k & 1.2k \\
OpenbookQA & Open-book QA & 5.0k & 500 \\
\bottomrule
\end{tabular}
\end{table}

\textbf{LLaMA2-7B}~\citep{llama2} is the second-generation model in the LLaMA family, offering improvements in training stability, data curation, and overall performance compared to its predecessor. The released checkpoint\footnote{\url{https://huggingface.co/meta-llama/Llama-2-7b}} is distributed under a permissive license that supports both academic research and commercial applications.

\textbf{LLaMA3-8B}~\citep{llama3} pertains to the third-generation LLaMA models, trained with larger and more diverse datasets and incorporating architectural refinements for improved reasoning and instruction-following ability. Its checkpoint\footnote{\url{https://huggingface.co/meta-llama/Meta-Llama-3-8B}} is available under Meta’s permissive license, likewise allowing both research use and commercial deployment.

\begin{table}[ht]
\centering
\caption{Learning rates for commonsense reasoning tasks.}
\label{tab:app-reasoning-lr}
\begin{tabular}{clcccccccc}
    \toprule
    & \textbf{Method} & \textbf{BoolQ} & \textbf{PIQA} & \textbf{SIQA} & \textbf{HS} & \textbf{WG} & \textbf{ARC-e} & \textbf{ARC-c} & \textbf{OBQA} \\
    \midrule
    \multirow{5}{*}{\rotatebox[origin=c]{90}{LLaMA2-7B}} & LoRA & 8e-4 &	4e-4 &	4e-4 &	4e-4 &	1e-4 &	1e-4 &	4e-4 &	4e-4 \\
    & ReLoRA & 8e-4 & 	4e-4 &	4e-4	 & 4e-4 &	1e-4	 & 2e-4 &	4e-4 &	4e-4 \\
    & LoRA-GA & 2e-4 &	1e-4 &	1e-4 &	1e-4 &	2e-5 &	8e-5 &	1e-4 &	2e-4 \\
    & MoRA & 8e-4 &	4e-4 &	4e-4 &	4e-4 &	1e-4 &	2e-4 &	2e-4 &	4e-4 \\
    & HiRA & 8e-3 &	4e-3 &	4e-3 &	2e-3 &	1e-3 &	2e-3 &	4e-3 &	4e-3 \\
    & ScaLoRA(-I) & 8e-4 &	2e-4 &	2e-4 &	4e-4 &	1e-4 &	1e-4 &	4e-4 &	4e-4 \\
    & LoRA$_{r=32}$ & 8e-4 &	2e-4 &	2e-4 &	2e-4 &	1e-4 &	1e-4 &	2e-4 &	2e-4 \\
    \midrule
    \multirow{5}{*}{\rotatebox[origin=c]{90}{LLaMA3-8B}} & LoRA & 4e-4 &	1e-4 &	1e-4 &	1e-4 &	8e-5 &	2e-4 &	2e-4 &	5e-4 \\
    & ReLoRA & 4e-4 & 	1e-4  &	1e-4	  & 1e-4	  & 8e-5	  & 2e-4	  & 2e-4  &	4e-4 \\
    & LoRA-GA & 1e-4 &	8e-5 &	6e-5 &	3e-5 &	4e-5 &	5e-5 &	8e-5 &	2e-4 \\
    & MoRA & 4e-4 &	1e-4 &	1e-4 &	1e-4 &	8e-5 &	1e-4 &	2e-4 &	2e-4 \\
    & HiRA & 8e-3 &	2e-3 &	2e-3 &	1e-3 &	1e-3 &	4e-3 &	8e-3 &	4e-3 \\
    & ScaLoRA(-I) & 4e-4 &	1e-4 &	1e-4 &	1e-4 &	8e-5 &	8e-5 &	4e-4 &	5e-4 \\
    & LoRA$_{r=32}$ & 4e-4 &	8e-5 &	1e-4 &	1e-4 &	8e-5 &	1e-4 &	2e-4 &	2e-4 \\
    \bottomrule
\end{tabular}
\end{table}

\textbf{Hyperparameters} for this test are adapted from~\citep{PoLAR}. LoRA modules are applied to all projection matrices, including \texttt{q\_proj}, \texttt{k\_proj}, \texttt{v\_proj}, \texttt{up\_proj}, and \texttt{down\_proj}. The LoRA rank, scaling factor, and dropout rate are set to 8, 16, and 5\%, respectively. We use a batch size of 16 and finetune for 3 epochs across all tasks. The sequence cutoff length is fixed at 256 tokens. Learning rates are reported in Table~\ref{tab:app-reasoning-lr}, with a cosine scheduler and 3\% warmup steps. Learning rates are selected via a grid search over $\{0.8, 1, 2, 3, 4, 5, 6, 8, 10, 20 \} \times 10^{-4}$ using finer resolution in the lower range. The learning-rates of HiRA and LoRA-GA are respectively scaled by an additional factor of $10$ and $1/10$ to compensate the magnitude change due to Hadamard product and large initialization. ReLoRA uses a re-initialization frequency of 200 steps with 10 re-warmup steps for the three smaller datasets ARC-e, ARC-c, and OBQA, and a frequency of 2000 steps with 100 re-warmup steps for the remaining larger datasets. LoRA-GA employs a scaling factor $\gamma = 128$ for stability, and a sample batch size of $32$ for gradient estimation.

\subsection{Setups for mathematical problem solving}
This experiment is conducted by fine-tuning the Gemma-3-12B model on the MetaMathQA dataset and subsequently testing its performance on GSM8K and MATH datasets. Below are brief introductions to the datasets and the model involved.

\textbf{MetaMathQA}~\citep{MetaMath} is a synthetic math reasoning dataset released under the Apache‑2.0 license and created via question bootstrapping. By rewriting problems through forward, backward, and rephrased perspectives, it augments diversity and improves generalization of mathematical problem-solving models.

\textbf{GSM8K} (Grade-School Math 8K)~\citep{GSM8K} is released under the MIT license and consists of roughly 8.5K high-quality, linguistically varied word problems from middle-school curricula, each requiring multiple reasoning steps. It is designed to be solvable by bright students and serves as a standard benchmark for evaluating multi-step mathematical reasoning.

\textbf{MATH}~\citep{MATH} is also released under the MIT license and includes about 12.5K high-school competition–style math problems across topics such as algebra, number theory, geometry, and probability. Each problem is paired with a detailed step-by-step solution, challenging language models with complex mathematical reasoning tasks.

\textbf{Gemma-3-12B-pt}~\citep{Gemma3} is a 12-billion parameter multimodal language model developed by Google DeepMind. It is part of the Gemma-3 family, which includes models from 1B to 27B parameters, optimized for tasks such as question answering, summarization, and reasoning. The model checkpoint\footnote{\url{https://huggingface.co/google/gemma-3-12b-pt}} is released under Google's Gemma Term of Use\footnote{\url{https://ai.google.dev/gemma/terms}}, permitting both research and commercial applications. 

\textbf{Hyperparameters} are similar to the previous commonsense reasoning test. Specifically, LoRA modules are applied to all projection matrices; i.e., \texttt{q\_proj}, \texttt{k\_proj}, \texttt{v\_proj}, \texttt{up\_proj}, and \texttt{down\_proj}. The LoRA rank, scaling factor, and dropout rate are set to 8, 16, and 5\%, respectively. Given the large dataset size, the batch size is increased to 64, while the number of training epochs is reduced to 2. The sequence length is capped at 256 tokens, and the learning rate is fixed at $10^{-4}$.

\section{Additional numerical results}

\subsection{Motivation for ScaLoRA-I}
\label{app-sec:scales}
Figure~\ref{fig:scales} visualizes the deviation of the scaling factors $\alpha_t$ and $\beta_t$ from $1$ when applying optimal scaling at each iteration, with DeBERTaV3-base model and CoLA dataset. It is seen that these deviations are below $0.1$ and $0.2$ for most iterations, which is expected given the relatively small learning rate $\eta$. Since $\mathbf{A}_t$ and $\mathbf{B}_t$ thereby change only slightly, it is natural to consider a lazy update strategy that performs the scaling after sufficient changes have accumulated. Moreover, the figure also shows that $\mathbf{B}_t$ requires noticeably larger adjustments than $\mathbf{A}_t$, consistent with the empirical findings and theoretical analyses in~\citep{AsymmetryLoRA}.

\begin{figure}[ht]
  \centering
  \begin{minipage}[t]{0.52\textwidth}
  \vspace{0cm}
  	\includegraphics[width=.95\textwidth]{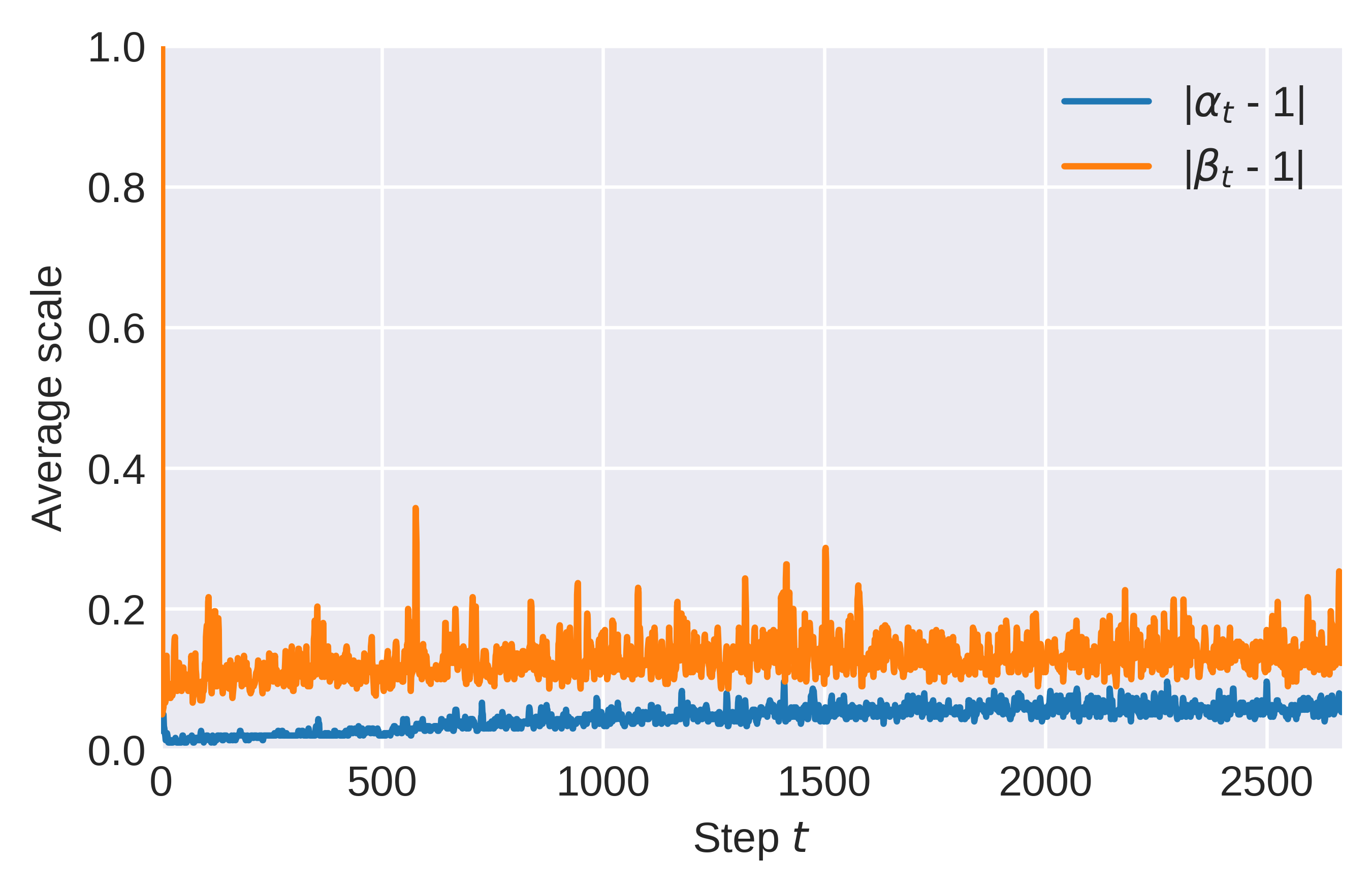}
  	\vspace{-.2cm}
  \caption{Visualization of scaling factor change during fine-tuning.}
  \label{fig:scales}
  \end{minipage}%
  \hfill
  \begin{minipage}[t]{0.42\textwidth}
  \vspace{0.3cm}
  \captionof{table}{Ablation study on the choice of $I$ using LLaMA3-8B on ARC-c task.}
 \label{app-tab:ablation-I}
 \centering
  \begin{tabular}{lccc}
   \toprule
    \textbf{Method} & \textbf{Acc} & \textbf{FT loss} & \textbf{Time} \\
    \midrule
    ScaLoRA $I=1$ & $65.61$ &	$0.8693$ &	$1.42 \times$	 \\
    ScaLoRA $I=3$ & $65.14$ &	$0.8734$ &	$1.15 \times$  \\
    ScaLoRA $I=10$ & $64.68$ &	$0.8705$ &	$1.04 \times$	 \\
    ScaLoRA $I=30$ & $63.57$ &	$0.8960$ &	$1.02 \times$	 \\
    ScaLoRA $I=100$ & $63.33$ &	$0.9851$ &	$1.01 \times$	 \\
    \midrule
    LoRA $r=8$ & $62.29$ &	$1.2013$ &	$1 \times$	 \\
    LoRA $r=32$ & $64.08$ &	$0.866$ &	$1.08 \times$	 \\
   \bottomrule
 \end{tabular}
  \end{minipage}
\end{figure}

\subsection{Ablation study on choice of $I$}
\label{app-sec:ablation-I}
Next, ablation experiment on the choice of $I$ is conducted using LLaMA3-8B on the ARC-c dataset, where increasing the rank to $32$ yields a remarkable improvement in LoRA. To evaluate the impact of $I$ on the effectiveness and convergence, we report the test accuracy, the running average of fine-tuning loss, and the elapsed time relative to LoRA for $I \in \{ 1,3,10,30,100 \}$. The results are summarized in Table~\ref{app-tab:ablation-I}. As $I$ increases, accuracy and time complexity both decrease, while the fine-tuning loss tends to grow. Notably, $I=10$ provides a good trade-off between loss reduction and computational cost. In particular, it achieves convergence comparable to $I = 1$ yet introducing only a $4\%$ additional overhead relative to LoRA.

\subsection{Extended ablation study on effectiveness of column scaling} 
\label{app-sec:ext-ablation-scalar}
This subsection investigates the effectiveness of Theorem~\ref{thm:diag} through a more detailed comparison. Following the setup in Section~\ref{subsec:math}, we analyze the ScaLoRA-I variant that uses only scalar scaling. Table~\ref{app-tab:ablation-scalar} compares this scalar-only variant against ScaLoRA-I on commonsense reasoning benchmarks using LLaMA2-7B with $r=8$. We observe that the scalar-only variant suffers a notable performance degradation on the SIQA, WG, ARC-c, and OBQA datasets, while performing comparably to ScaLoRA-I on the remaining four. Overall, this results in an average performance drop of $0.72\%$, though it still exceeds LoRA by $0.60\%$. This underscores the significance and effectiveness of column-wise scaling.

\begin{table}[ht]
 \caption{Ablation study on column scaling using LLaMA2-7B on commonsense reasoning tasks.}
 \label{app-tab:ablation-scalar}
 \centering
  \begin{tabular}{lccccccccc}
   \toprule
    \textbf{Method} & \textbf{BoolQ} & \textbf{PIQA} & \textbf{SIQA} & \textbf{HS} & \textbf{WG} & \textbf{ARC-e} & \textbf{ARC-c} & \textbf{OBQA} & \textbf{Avg} \\
    \midrule
    LoRA & $87.40$ &	$81.66$ &	$59.16$ &	$82.45$ &	$79.48$ &	$82.91$ &	$57.59$ &	$58.40$ &	$73.63$ \\
    ScaLoRA-I & $\mathbf{87.58}$ &	$82.26$ &	$\mathbf{60.49}$ &	$83.52$ &	$\mathbf{81.69}$ &	$\mathbf{83.75}$ &	$\mathbf{58.53}$ &	$\mathbf{60.20}$ &	$\mathbf{74.75}$ \\
    Scalar-only & $87.31$ &	$\mathbf{82.32}$ &	$59.37$ &	$\mathbf{83.60}$ &	$80.93$ &	$83.38$ &	$56.11$ &	$59.20$ &	$74.03$ \\
   \bottomrule
 \end{tabular}
\end{table}

\subsection{Patterns of layers with column scaling}
Interestingly, the layers satisfying $\mathbf{v}_t \succeq 0$ exhibit discernible patterns, with certain layers more prone than others to violating this condition. Figure~\ref{fig:patterns} depicts these patterns for DeBERTaV3-base on two GLUE tasks, where column and scalar scaling are marked in black and white, respectively. We observe that some layers are consistently transformed using column scaling, while others predominantly undergo scalar scaling. This pattern also varies across tasks. In practice, when such patterns are known a priori, one may fix the scaling scheme accordingly to eliminate the computational overhead for solving the $2r \times 2r$ linear system.

\begin{figure}[ht]
  \centering
  \begin{subfigure}[t]{0.49\textwidth}
  	\includegraphics[width=\textwidth]{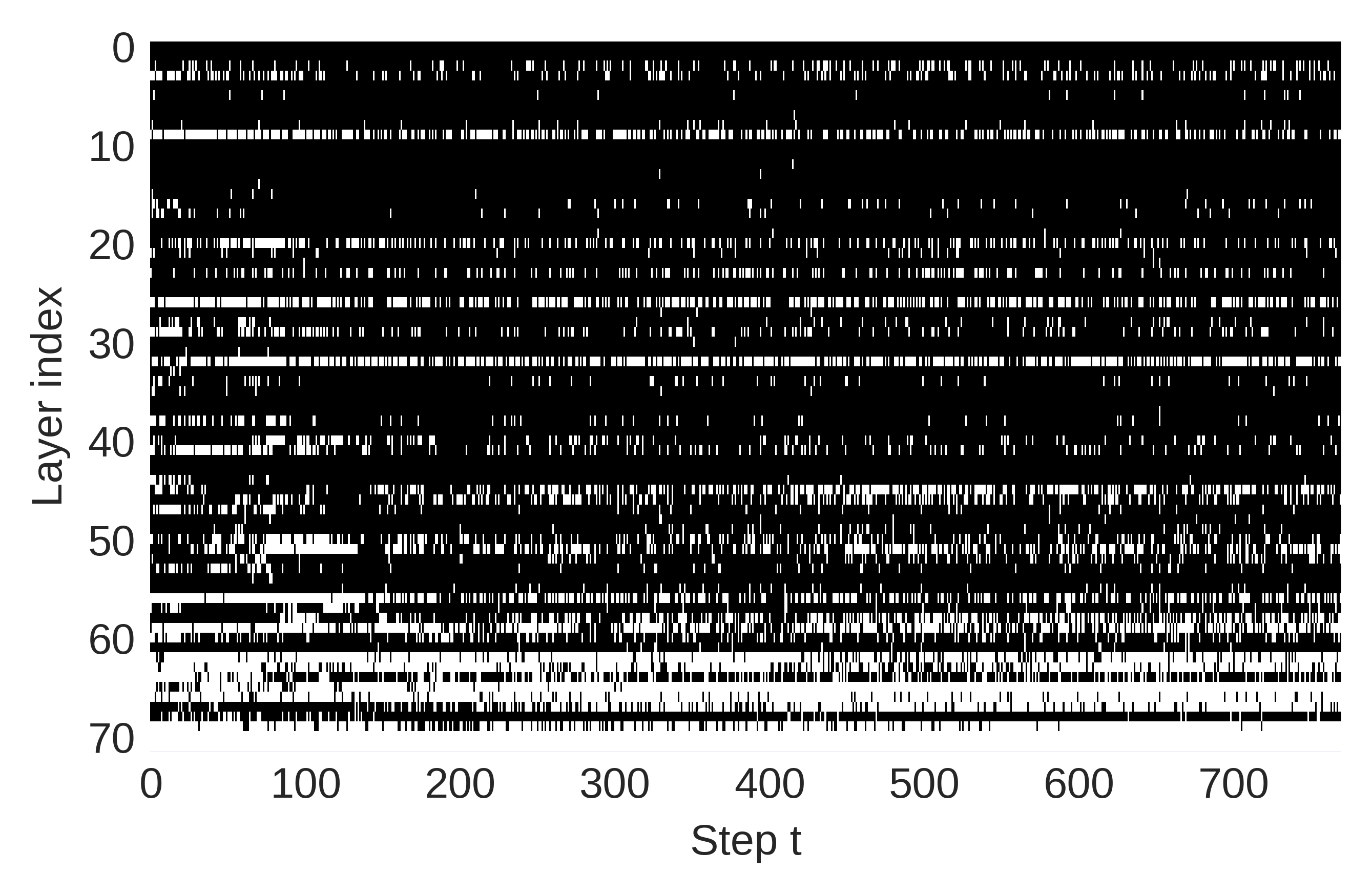}
    \vspace{-.5cm}
  	\caption{RTE}
  	\label{subfig:rte-pattern}
  \end{subfigure}
  \begin{subfigure}[t]{0.49\textwidth}
  	\includegraphics[width=\textwidth]{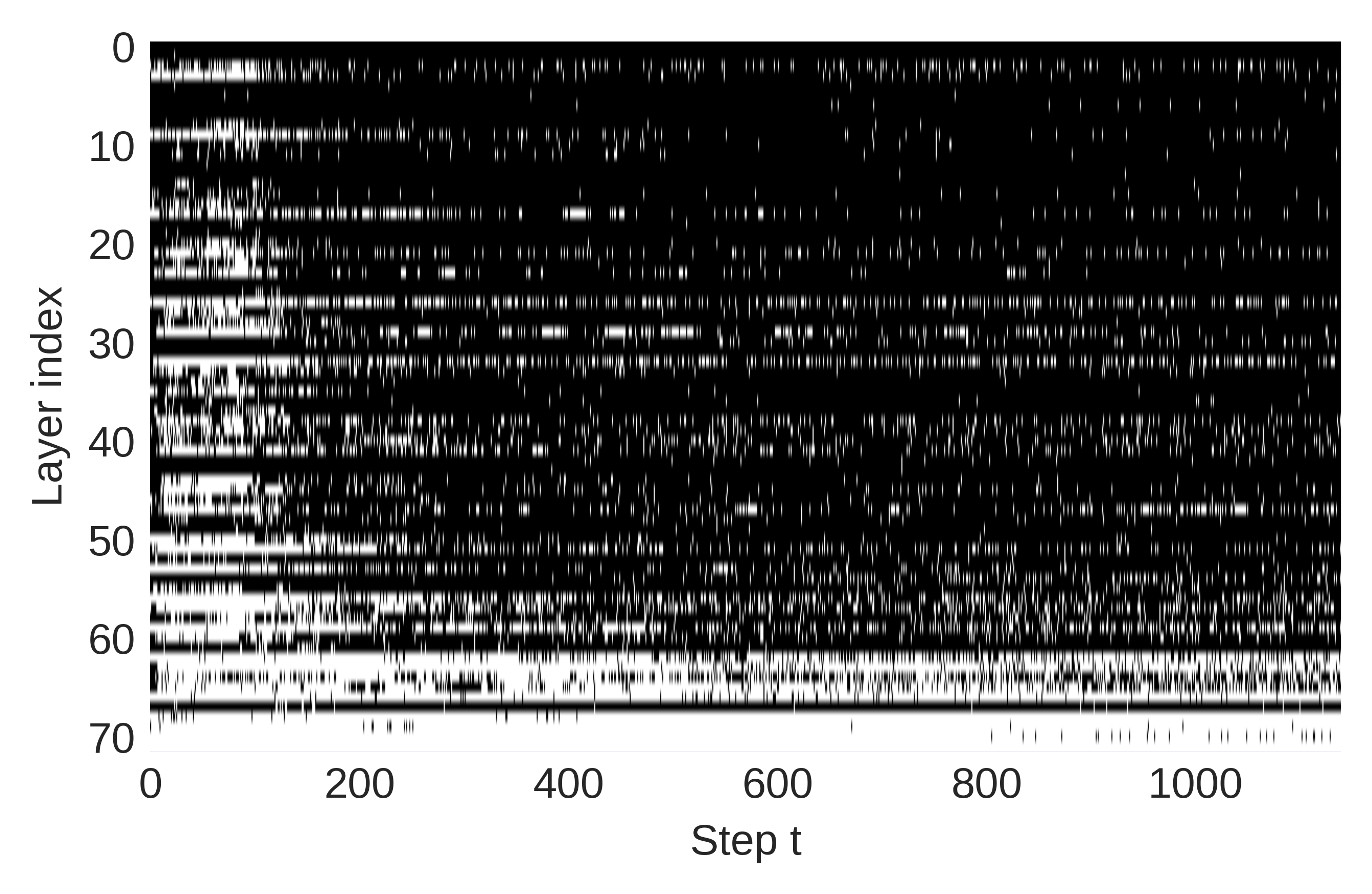}
    \vspace{-.5cm}
  	\caption{MRPC}
  	\label{subfig:mrpc-pattern}
  \end{subfigure}
  \vspace{-.1cm}
  \caption{Patterns of column scaling across layers.}
  \label{fig:patterns}
\end{figure}


\end{document}